\newcommand{\indep}{\rotatebox[origin=c]{90}{$\models$}}
\newcommand{\st}{\text{ such that }}
\newcommand{\eg}{\textit{e.g.}}
\newcommand{\ie}{\textit{i.e.}}
\newtheorem{definition}{Definition}
\newtheorem{assumption}{Assumption}
\newtheorem{example}{Example}
\newenvironment{sproof}{%
  \proof}{\endproof}
\newcommand{\myscale}{0.162}
\newcommand{\mysecondscale}{.95}
\title{Identifiability of Direct Effects from Summary Causal Graphs
\\[1ex] \large including appendix
}
\author{
    Simon Ferreira\textsuperscript{\rm 1,2},
    Charles K. Assaad\textsuperscript{\rm 1}
}
\begin{document}

\maketitle

\begin{abstract}
Dynamic structural causal models (SCMs) are a powerful framework for reasoning in dynamic systems about direct effects which measure how a change in one variable affects another variable while holding all other variables constant. The causal relations in a dynamic structural causal model can be qualitatively represented with an acyclic full-time causal graph. Assuming linearity and no hidden confounding and given the full-time causal graph, the direct causal effect is always identifiable. 
However, in many application such a graph is not available for various reasons but nevertheless experts have access to the summary causal graph of the full-time causal graph which represents causal relations between time series while omitting temporal information and allowing cycles.
This paper presents a complete identifiability result which characterizes all cases for which the direct effect is graphically identifiable from a summary causal graph and gives two sound finite adjustment sets  that can be used to estimate the direct effect whenever it is identifiable.
\end{abstract}

\section{Introduction}

Structural causal models (SCMs) are a powerful framework for representing and reasoning about causal relations between variables with a long history in many fields such as genetics~\citep{Wright_1920,Wright_1921}, econometrics~\citep{Haavelmo_1943}, social sciences~\citep{Duncan_1975,Goldberger_1972}, epidemiology~\citep{Hernan_2023}, and artificial intelligence~\citep{Pearl_2000}.
In particular, SCMs are useful for reasoning about direct effects which  measure how a change in one variable affects another variable while holding all other variables constant~\citep{Pearl_2012}. 
The identification and estimation of direct effects are important in many application, \eg, 
epidemiologists are interested in measuring how smoking affects lung cancer risk without being mediated by genetic susceptibility~\citep{Zhou_2021};
ecologists are usually focus on understanding direct effects such as competition, herbivory, and predation~\citep{Connell_1961};
IT experts can localize root causes of system failures by comparing the direct impact of different components on each other before and after the failure~\citep{Assaad_2023}.

In the framework of (non-dynamic) SCMs, assuming linearity and no hidden confounding and given a causal graph which qualitatively represents causal relations between different variables, the direct effect between two variables is always identifiable and there exists a complete graphical tool, called the single-door criterion~\citep{Pearl_1998,Spirtes_1998,Pearl_2000} that finds all possible adjustment sets that allow to estimate the direct effect from data.
These results are directly applicable in dynamic SCMs~\citep{Rubenstein_2018} given an acyclic full-time causal graph\textemdash which qualitatively represents all causal relations between different temporal instants of the dynamic SCM\textemdash and assuming consistency throughout time.
However, in many dynamic systems, experts have difficulties in building a full-time causal graph~\citep{Ait_Bachir_2023}, while they can usually build a summary causal graph \citep{Assaad_2022_a} which is an abstraction of the full-time causal graph where temporal information is omitted and cycles are allowed.
%
%
So far, the problem of identifying direct effects has solely been tackled for summary causal graphs with no cycles exceeding a size of $2$~\cite{Assaad_2023}. Specifically, it has been shown that in the absence of cycles greater than $2$, direct effects are always identifiable, and in these cases, an adjustment set has been proposed for estimating the direct effect from data. However, in many applications, there exist cycles with a size greater than $2$~\cite{Veilleux_1979,Staplin_2016}.

In this work, we focus on the identifiability of direct effects from summary causal graphs without restricting the size of cycles.
Our main contribution is twofold. First, we give a complete identifiability result which characterizes all cases for which a direct  effect is graphically identifiable from a summary causal graph. Then, we present two finite adjustment sets  that can be used to estimate the direct  effect from data whenever it is identifiable.

The remainder of the paper is organized as follows: in the next section, we give the definitions of direct effect and summary causal graph and recall all necessary graphical preliminaries.  
In the section that follows it, we present the complete identifiability result in addition to a weaker but interesting result that is implied by the complete identifiability result. 
Then, in another section, we provide two finite adjustment sets that can be used to estimate the direct  effect from data whenever it is identifiable.  
Finally, in the last section, we conclude the paper while elaborating on promising directions for further research. 

\section{Problem Setup}
\label{sec:setup}

In this section, we first introduce some terminology, tools, and assumptions which are standard for the major part. Then, we formalize the problem we are going to solve.
Without any opposite mention, in the following, 
a capital letter correspond to either to a variable or a vertex and a calligraphic letter correspond to a set with the exception of the set of natural numbers and the set of all integers which are respectivly represented by $\mathbb{N}$ and $\mathbb{Z}$.

In this work, we consider that a dynamic system can be represented by a linear dynamic SCM.

\begin{definition}[Linear Dynamic SCM]
\label{def:SCM}
    Considering a finite set of observed times series $\mathcal{V}$, a \emph{linear dynamic SCM} is a set of equations in which each instant $t \in \mathbb{Z}$ of a time series (\eg, $Y_t$) is defined as a linear function of past instants of itself (\eg, $Y_{t-\gamma},~\gamma>0$), past or present instants of other times series (\eg, $X_{t-\gamma},~X\neq Y,~\gamma\geq0$) and of some unobserved noise (\eg, $\xi_{Y_t}$):
    \begin{equation*}
        \label{eq:linearSCM}
        Y_t := \sum_{\gamma > 0}\alpha_{Y_{t-\gamma},Y_t}*Y_{t-\gamma} + \sum_{X\neq Y,~\gamma \geq 0}\alpha_{X_{t-\gamma},Y_t}*X_{t-\gamma} + \xi_{Y_t},
    \end{equation*}
    where any coefficient $\alpha$ can be zero and for each $Y\in \mathcal{V}$ the noises $\{ \xi_{Y_t} , t\in \mathbb{Z}\}$ are identically distributed.
\end{definition}

Note that in nonlinear SCMs, the direct effect is not uniquely defined, potentially varying depending on the values taken by other variables \cite{Pearl_2000}. In contrast, linear SCMs do not encounter such issues.
In the following, we give the definition of direct effects in the case of linear dynamic SCMs.

\begin{definition}[Direct Effect, \cite{Pearl_2000}]
In a linear dynamic SCM, the \emph{direct effect} of $X_{t-\gamma_{xy}}$ on $Y_t$ is fully specified by the structural coefficient\footnote{In Sewall Wright's terminology  the structural coefficient is called path coefficient \citep{Wright_1920,Wright_1921}.}  $\alpha_{X_{t-\gamma_{xy}}, Y_t}$.
\end{definition}

In the following, we explicitly state further assumptions we make in this paper.

\begin{assumption}[No hidden Confounding, \cite{Spirtes_2000,Pearl_2000}]
    \label{ass:CausalSufficiency}
    The noise variables in the linear dynamic SCM are assumed to be jointly independent (\ie, $\forall X, Y\in\mathcal{V}~\forall t',t\in\mathbb{Z},~X_{t'} \neq Y_{t} \implies \xi_{X_{t'}}\indep\xi_{Y_{t}}$).
\end{assumption}

\begin{assumption}[Stationarity]
    \label{ass:ConsistencyThroughoutTime}
    The causal mechanisms of the system considered do not change over time and therefore, $\forall X\neq Y\in \mathcal{V},~\forall t-\gamma \leq t\in \mathbb{Z},~\alpha_{X_{t-\gamma},Y_t}=\alpha_{X_{t-\gamma+1},Y_{t+1}},~\text{and}~\forall t-\gamma < t\in \mathbb{Z},~\alpha_{Y_{t-\gamma},Y_t}=\alpha_{Y_{t-\gamma+1},Y_{t+1}}$.
    There exists a maximum lag $\gamma_{max}$ of a dynamic SCM defined as $\gamma_{max} := max\{\gamma\in\mathbb{N}|\exists X,Y \in \mathcal{V},~\alpha_{X_{t-\gamma},Y_t} \neq 0\}$.
\end{assumption}

\begin{figure*}[t!]
\centering
   \begin{subfigure}{.19\textwidth}
    \centering
	\begin{tikzpicture}[{black, circle, draw, inner sep=0}]
	\tikzset{nodes={draw,rounded corners},minimum height=0.6cm,minimum width=0.6cm}	
	\tikzset{anomalous/.append style={fill=easyorange}}
	\tikzset{rc/.append style={fill=easyorange}}
	
	\node[fill=red!30] (X) at (0,0) {$X$} ;
	\node[fill=blue!30] (Y) at (2,0) {$Y$};
	\node (W) at (2,1.5) {$W$};
	\node (U) at (0,1.5) {$U$};
	\node (Z) at (1,0.75) {$Z$};
	
	\draw[->,>=latex, line width=2pt] (X) -- (Y);
        \draw [->,>=latex,] (U) --  (X);
         \begin{scope}[transform canvas={xshift=-.15em}]
          \draw [->,>=latex,] (Y) --  (W);
        \end{scope}
         \begin{scope}[transform canvas={xshift=.15em}]
          \draw [<-,>=latex,] (Y) --  (W);
        \end{scope}
        \begin{scope}[transform canvas={xshift=-.1em, yshift=-.1em}]
          \draw [->,>=latex,] (U) --  (Z);
        \end{scope}
         \begin{scope}[transform canvas={xshift=.1em, yshift=.1em}]
          \draw [<-,>=latex,] (U) --  (Z);
        \end{scope}
         \begin{scope}[transform canvas={xshift=-.1em, yshift=.1em}]
          \draw [->,>=latex,] (Z) --  (W);
        \end{scope}
         \begin{scope}[transform canvas={xshift=.1em, yshift=-.1em}]
          \draw [<-,>=latex,] (Z) --  (W);
        \end{scope}

        \draw[->,>=latex] (X) to [out=195,in=150, looseness=2] (X);
         \draw[->,>=latex] (Y) to [out=-15,in=30, looseness=2] (Y);
	\draw[->,>=latex] (Z) to [out=-30,in=15, looseness=2] (Z);
	\draw[->,>=latex] (U) to [out=180,in=135, looseness=2] (U);
	\draw[->,>=latex] (W) to [out=0,in=45, looseness=2] (W);
	\end{tikzpicture}
    \caption{\centering An SCG and the direct effect $\alpha_{X_{t-1},Y_t}$.}
    \label{fig:cond1:SCG1}
    \end{subfigure}
    \hfill 
        \begin{subfigure}{.39\textwidth}
    \centering
		\begin{tikzpicture}[{black, circle, draw, inner sep=0}]
		  \tikzset{nodes={draw,rounded corners},minimum height=0.7cm,minimum width=0.7cm}
		  \tikzset{latent/.append style={fill=gray!60}}
		  
            \node (X) at (2,4) {$X_t$};
            \node (U) at (2,3) {$U_t$};
            \node (W) at (2,1) {$W_t$};
            \node (V) at (2,2) {$Z_t$};
            \node[fill=blue!30] (Y) at (2,0) {$Y_t$};
            \node[fill=red!30] (X-1) at (0,4) {$X_{t-1}$};
            \node (U-1) at (0,3) {$U_{t-1}$};
            \node (W-1) at (0,1) {$W_{t-1}$};
            \node (V-1) at (0,2) {$Z_{t-1}$};
            \node (Y-1) at (0,0) {$Y_{t-1}$};
            \node (X-2) at (-2,4) {$X_{t-2}$};
            \node (U-2) at (-2,3) {$U_{t-2}$};
            \node (W-2) at (-2,1) {$W_{t-2}$};
            \node (V-2) at (-2,2) {$Z_{t-2}$};
            \node (Y-2) at (-2,0) {$Y_{t-2}$};
    
            \draw[->,>=latex, line width=2pt] (X-1) -- (Y);
            \draw[->,>=latex] (X-2) -- (Y-1);

            \draw[->,>=latex] (X) to [out=-30,in=30, looseness=0.8] (Y);
            \draw[->,>=latex] (X-1) to [out=-30,in=30, looseness=0.8] (Y-1);
            \draw[->,>=latex] (X-2) to [out=-30,in=30, looseness=0.8] (Y-2);

            \draw[->,>=latex] (U-1) -- (U);
            \draw[->,>=latex] (U-2) -- (U-1);
            \draw[->,>=latex] (V-1) -- (V);
            \draw[->,>=latex] (V-2) -- (V-1);
            \draw[->,>=latex] (W-1) -- (W);
            \draw[->,>=latex] (W-2) -- (W-1);
            \draw[->,>=latex] (Y-1) -- (Y);
            \draw[->,>=latex] (Y-2) -- (Y-1);
            \draw[->,>=latex] (X-1) -- (X);
            \draw[->,>=latex] (X-2) -- (X-1);

            \draw[->,>=latex] (Y-1) -- (W);
            \draw[->,>=latex] (Y-2) -- (W-1);
            \draw[->,>=latex] (W-1) -- (Y);
            \draw[->,>=latex] (W-2) -- (Y-1);
            \draw[->,>=latex] (U-1) -- (X);
            \draw[->,>=latex] (U-2) -- (X-1);

            \draw[->,>=latex] (V-1) -- (U);
            \draw[->,>=latex] (V-2) -- (U-1);
            \draw[->,>=latex] (W-1) -- (V);
            \draw[->,>=latex] (W-2) -- (V-1);

            \draw[->,>=latex] (U-1) -- (V);
            \draw[->,>=latex] (U-2) -- (V-1);
            \draw[->,>=latex] (V-1) -- (W);
            \draw[->,>=latex] (V-2) -- (W-1);

		
    		\coordinate[left of=X-2] (d1);
    		\draw [dashed,>=latex] (X-2) to[left] (d1);
    		\coordinate[left of=Y-2] (d1);
    		\draw [dashed,>=latex] (Y-2) to[left] (d1);		
    		\coordinate[left of=U-2] (d1);
    		\draw [dashed,>=latex] (U-2) to[left] (d1);		
    		\coordinate[left of=V-2] (d1);
    		\draw [dashed,>=latex] (V-2) to[left] (d1);		
    		\coordinate[left of=W-2] (d1);
    		\draw [dashed,>=latex] (W-2) to[left] (d1);
    		
    		\coordinate[right of=X] (d1);
    		\draw [dashed,>=latex] (X) to[right] (d1);
    		\coordinate[right of=Y] (d1);
    		\draw [dashed,>=latex] (Y) to[right] (d1);
    		\coordinate[right of=U] (d1);
    		\draw [dashed,>=latex] (U) to[right] (d1);
    		\coordinate[right of=V] (d1);
    		\draw [dashed,>=latex] (V) to[right] (d1);
    		\coordinate[right of=W] (d1);
    		\draw [dashed,>=latex] (W) to[right] (d1);
		\end{tikzpicture}
 	\caption{\centering A compatible FTCG.}
        \label{fig:cond1:FTCG1-1}
    \end{subfigure}
\hfill 
    \begin{subfigure}{.39\textwidth}
    \centering
		\begin{tikzpicture}[{black, circle, draw, inner sep=0}]
		  \tikzset{nodes={draw,rounded corners},minimum height=0.7cm,minimum width=0.7cm}
		  \tikzset{latent/.append style={fill=gray!60}}
		  
            \node (X) at (2,4) {$X_t$};
            \node (U) at (2,3) {$U_t$};
            \node (W) at (2,1) {$W_t$};
            \node (V) at (2,2) {$Z_t$};
            \node[fill=blue!30] (Y) at (2,0) {$Y_t$};
            \node[fill=red!30] (X-1) at (0,4) {$X_{t-1}$};
            \node (U-1) at (0,3) {$U_{t-1}$};
            \node (W-1) at (0,1) {$W_{t-1}$};
            \node (V-1) at (0,2) {$Z_{t-1}$};
            \node (Y-1) at (0,0) {$Y_{t-1}$};
            \node (X-2) at (-2,4) {$X_{t-2}$};
            \node (U-2) at (-2,3) {$U_{t-2}$};
            \node (W-2) at (-2,1) {$W_{t-2}$};
            \node (V-2) at (-2,2) {$Z_{t-2}$};
            \node (Y-2) at (-2,0) {$Y_{t-2}$};
    
            \draw[->,>=latex, line width=2pt] (X-1) -- (Y);
            \draw[->,>=latex] (X-2) -- (Y-1);


            \draw[->,>=latex] (U-1) -- (U);
            \draw[->,>=latex] (U-2) -- (U-1);
            \draw[->,>=latex] (V-1) -- (V);
            \draw[->,>=latex] (V-2) -- (V-1);
            \draw[->,>=latex] (W-1) -- (W);
            \draw[->,>=latex] (W-2) -- (W-1);
            \draw[->,>=latex] (Y-1) -- (Y);
            \draw[->,>=latex] (Y-2) -- (Y-1);
            \draw[->,>=latex] (X-1) -- (X);
            \draw[->,>=latex] (X-2) -- (X-1);

            \draw[->,>=latex] (Y-1) -- (W);
            \draw[->,>=latex] (Y-2) -- (W-1);
            \draw[->,>=latex] (W-1) -- (Y);
            \draw[->,>=latex] (W-2) -- (Y-1);
            \draw[->,>=latex] (U-1) -- (X);
            \draw[->,>=latex] (U-2) -- (X-1);

            \draw[->,>=latex] (V-1) -- (U);
            \draw[->,>=latex] (V-2) -- (U-1);
            \draw[->,>=latex] (W-1) -- (V);
            \draw[->,>=latex] (W-2) -- (V-1);

            \draw[->,>=latex] (U-1) -- (V);
            \draw[->,>=latex] (U-2) -- (V-1);
            \draw[->,>=latex] (V-1) -- (W);
            \draw[->,>=latex] (V-2) -- (W-1);

            \draw[->,>=latex] (U) to [out=45,in=-45, looseness=1] (X);
            \draw[->,>=latex] (U-1) to [out=45,in=-45, looseness=1] (X-1);
            \draw[->,>=latex] (U-2) to [out=45,in=-45, looseness=1] (X-2);
            \draw[<-,>=latex] (U) to [out=-45,in=45, looseness=1] (V);
            \draw[<-,>=latex] (U-1) to [out=-45,in=45, looseness=1] (V-1);
            \draw[<-,>=latex] (U-2) to [out=-45,in=45, looseness=1] (V-2);
            \draw[<-,>=latex] (V) to [out=-45,in=45, looseness=1] (W);
            \draw[<-,>=latex] (V-1) to [out=-45,in=45, looseness=1] (W-1);
            \draw[<-,>=latex] (V-2) to [out=-45,in=45, looseness=1] (W-2);
            \draw[<-,>=latex] (W) to [out=-45,in=45, looseness=1] (Y);
            \draw[<-,>=latex] (W-1) to [out=-45,in=45, looseness=1] (Y-1);
            \draw[<-,>=latex] (W-2) to [out=-45,in=45, looseness=1] (Y-2);
		
    		\coordinate[left of=X-2] (d1);
    		\draw [dashed,>=latex] (X-2) to[left] (d1);
    		\coordinate[left of=Y-2] (d1);
    		\draw [dashed,>=latex] (Y-2) to[left] (d1);		
    		\coordinate[left of=U-2] (d1);
    		\draw [dashed,>=latex] (U-2) to[left] (d1);		
    		\coordinate[left of=V-2] (d1);
    		\draw [dashed,>=latex] (V-2) to[left] (d1);		
    		\coordinate[left of=W-2] (d1);
    		\draw [dashed,>=latex] (W-2) to[left] (d1);
    		
    		\coordinate[right of=X] (d1);
    		\draw [dashed,>=latex] (X) to[right] (d1);
    		\coordinate[right of=Y] (d1);
    		\draw [dashed,>=latex] (Y) to[right] (d1);
    		\coordinate[right of=U] (d1);
    		\draw [dashed,>=latex] (U) to[right] (d1);
    		\coordinate[right of=V] (d1);
    		\draw [dashed,>=latex] (V) to[right] (d1);
    		\coordinate[right of=W] (d1);
    		\draw [dashed,>=latex] (W) to[right] (d1);
		\end{tikzpicture}
 	\caption{\centering Another compatible FTCG.}
        \label{fig:cond1:FTCG1-2}
    \end{subfigure}

   \begin{subfigure}{.19\textwidth}
    \centering
	\begin{tikzpicture}[{black, circle, draw, inner sep=0}]
	\tikzset{nodes={draw,rounded corners},minimum height=0.6cm,minimum width=0.6cm}	
	\tikzset{anomalous/.append style={fill=easyorange}}
	\tikzset{rc/.append style={fill=easyorange}}
	
	\node[fill=red!30] (X) at (0,0) {$X$} ;
	\node[fill=blue!30] (Y) at (2,0) {$Y$};
	\node (W) at (2,1.5) {$W$};
	\node (U) at (0,1.5) {$U$};
	\node (Z) at (1,0.75) {$Z$};
	
	\draw[->,>=latex, line width=2pt] (X) -- (Y);
        \draw [->,>=latex,] (U) --  (X);
         \begin{scope}[transform canvas={xshift=-.15em}]
          \draw [->,>=latex,] (Y) --  (W);
        \end{scope}
         \begin{scope}[transform canvas={xshift=.15em}]
          \draw [<-,>=latex,] (Y) --  (W);
        \end{scope}
        \begin{scope}[transform canvas={xshift=-.1em, yshift=-.1em}]
          \draw [->,>=latex,] (U) --  (Z);
        \end{scope}
         \begin{scope}[transform canvas={xshift=.1em, yshift=.1em}]
          \draw [<-,>=latex,] (U) --  (Z);
        \end{scope}
         \begin{scope}[transform canvas={xshift=-.1em, yshift=.1em}]
          \draw [->,>=latex,] (Z) --  (W);
        \end{scope}
         \begin{scope}[transform canvas={xshift=.1em, yshift=-.1em}]
          \draw [<-,>=latex,] (Z) --  (W);
        \end{scope}

        \draw[->,>=latex] (X) to [out=195,in=150, looseness=2] (X);
         \draw[->,>=latex] (Y) to [out=-15,in=30, looseness=2] (Y);
	\draw[->,>=latex] (Z) to [out=-30,in=15, looseness=2] (Z);
	\draw[->,>=latex] (U) to [out=180,in=135, looseness=2] (U);
	\draw[->,>=latex] (W) to [out=0,in=45, looseness=2] (W);
	\end{tikzpicture}
    \caption{\centering An SCG and the direct effect $\alpha_{X_{t},Y_t}$.}
    \label{fig:cond1:SCG2}
    \end{subfigure}
    \hfill 
        \begin{subfigure}{.39\textwidth}
    \centering
		\begin{tikzpicture}[{black, circle, draw, inner sep=0}]
		  \tikzset{nodes={draw,rounded corners},minimum height=0.7cm,minimum width=0.7cm}
		  \tikzset{latent/.append style={fill=gray!60}}
		  
            \node[fill=red!30] (X) at (2,4) {$X_t$};
            \node (U) at (2,3) {$U_t$};
            \node (W) at (2,1) {$W_t$};
            \node (V) at (2,2) {$Z_t$};
            \node[fill=blue!30] (Y) at (2,0) {$Y_t$};
            \node (X-1) at (0,4) {$X_{t-1}$};
            \node (U-1) at (0,3) {$U_{t-1}$};
            \node (W-1) at (0,1) {$W_{t-1}$};
            \node (V-1) at (0,2) {$Z_{t-1}$};
            \node (Y-1) at (0,0) {$Y_{t-1}$};
            \node (X-2) at (-2,4) {$X_{t-2}$};
            \node (U-2) at (-2,3) {$U_{t-2}$};
            \node (W-2) at (-2,1) {$W_{t-2}$};
            \node (V-2) at (-2,2) {$Z_{t-2}$};
            \node (Y-2) at (-2,0) {$Y_{t-2}$};
    
            \draw[->,>=latex] (X-1) -- (Y);
            \draw[->,>=latex] (X-2) -- (Y-1);


            \draw[->,>=latex] (U-1) -- (U);
            \draw[->,>=latex] (U-2) -- (U-1);
            \draw[->,>=latex] (V-1) -- (V);
            \draw[->,>=latex] (V-2) -- (V-1);
            \draw[->,>=latex] (W-1) -- (W);
            \draw[->,>=latex] (W-2) -- (W-1);
            \draw[->,>=latex] (Y-1) -- (Y);
            \draw[->,>=latex] (Y-2) -- (Y-1);
            \draw[->,>=latex] (X-1) -- (X);
            \draw[->,>=latex] (X-2) -- (X-1);

            \draw[->,>=latex] (Y-1) -- (W);
            \draw[->,>=latex] (Y-2) -- (W-1);
            \draw[->,>=latex] (W-1) -- (Y);
            \draw[->,>=latex] (W-2) -- (Y-1);
            \draw[->,>=latex] (U-1) -- (X);
            \draw[->,>=latex] (U-2) -- (X-1);

            \draw[->,>=latex] (V-1) -- (U);
            \draw[->,>=latex] (V-2) -- (U-1);
            \draw[->,>=latex] (W-1) -- (V);
            \draw[->,>=latex] (W-2) -- (V-1);

            \draw[->,>=latex] (U-1) -- (V);
            \draw[->,>=latex] (U-2) -- (V-1);
            \draw[->,>=latex] (V-1) -- (W);
            \draw[->,>=latex] (V-2) -- (W-1);

            \draw[->,>=latex] (U) to [out=45,in=-45, looseness=1] (X);
            \draw[->,>=latex] (U-1) to [out=45,in=-45, looseness=1] (X-1);
            \draw[->,>=latex] (U-2) to [out=45,in=-45, looseness=1] (X-2);
            \draw[<-,>=latex] (U) to [out=-45,in=45, looseness=1] (V);
            \draw[<-,>=latex] (U-1) to [out=-45,in=45, looseness=1] (V-1);
            \draw[<-,>=latex] (U-2) to [out=-45,in=45, looseness=1] (V-2);
            \draw[<-,>=latex] (V) to [out=-45,in=45, looseness=1] (W);
            \draw[<-,>=latex] (V-1) to [out=-45,in=45, looseness=1] (W-1);
            \draw[<-,>=latex] (V-2) to [out=-45,in=45, looseness=1] (W-2);
            \draw[<-,>=latex] (W) to [out=-45,in=45, looseness=1] (Y);
            \draw[<-,>=latex] (W-1) to [out=-45,in=45, looseness=1] (Y-1);
            \draw[<-,>=latex] (W-2) to [out=-45,in=45, looseness=1] (Y-2);
		
    		\coordinate[left of=X-2] (d1);
    		\draw [dashed,>=latex] (X-2) to[left] (d1);
    		\coordinate[left of=Y-2] (d1);
    		\draw [dashed,>=latex] (Y-2) to[left] (d1);		
    		\coordinate[left of=U-2] (d1);
    		\draw [dashed,>=latex] (U-2) to[left] (d1);		
    		\coordinate[left of=V-2] (d1);
    		\draw [dashed,>=latex] (V-2) to[left] (d1);		
    		\coordinate[left of=W-2] (d1);
    		\draw [dashed,>=latex] (W-2) to[left] (d1);
    		
    		\coordinate[right of=X] (d1);
    		\draw [dashed,>=latex] (X) to[right] (d1);
    		\coordinate[right of=Y] (d1);
    		\draw [dashed,>=latex] (Y) to[right] (d1);
    		\coordinate[right of=U] (d1);
    		\draw [dashed,>=latex] (U) to[right] (d1);
    		\coordinate[right of=V] (d1);
    		\draw [dashed,>=latex] (V) to[right] (d1);
    		\coordinate[right of=W] (d1);
    		\draw [dashed,>=latex] (W) to[right] (d1);
		\end{tikzpicture}
 	\caption{\centering A compatible FTCG.}
        \label{fig:cond1:FTCG2-1}
    \end{subfigure}
\hfill 
    \begin{subfigure}{.39\textwidth}
    \centering
		\begin{tikzpicture}[{black, circle, draw, inner sep=0}]
		  \tikzset{nodes={draw,rounded corners},minimum height=0.7cm,minimum width=0.7cm}
		  \tikzset{latent/.append style={fill=gray!60}}
		  
            \node[fill=red!30] (X) at (2,4) {$X_t$};
            \node (U) at (2,3) {$U_t$};
            \node (W) at (2,1) {$W_t$};
            \node (V) at (2,2) {$Z_t$};
            \node[fill=blue!30] (Y) at (2,0) {$Y_t$};
            \node (X-1) at (0,4) {$X_{t-1}$};
            \node (U-1) at (0,3) {$U_{t-1}$};
            \node (W-1) at (0,1) {$W_{t-1}$};
            \node (V-1) at (0,2) {$Z_{t-1}$};
            \node (Y-1) at (0,0) {$Y_{t-1}$};
            \node (X-2) at (-2,4) {$X_{t-2}$};
            \node (U-2) at (-2,3) {$U_{t-2}$};
            \node (W-2) at (-2,1) {$W_{t-2}$};
            \node (V-2) at (-2,2) {$Z_{t-2}$};
            \node (Y-2) at (-2,0) {$Y_{t-2}$};
    

            \draw[->,>=latex, line width=2pt] (X) to [out=-30,in=30, looseness=0.8] (Y);
            \draw[->,>=latex] (X-1) to [out=-30,in=30, looseness=0.8] (Y-1);
            \draw[->,>=latex] (X-2) to [out=-30,in=30, looseness=0.8] (Y-2);

            \draw[->,>=latex] (U-1) -- (U);
            \draw[->,>=latex] (U-2) -- (U-1);
            \draw[->,>=latex] (V-1) -- (V);
            \draw[->,>=latex] (V-2) -- (V-1);
            \draw[->,>=latex] (W-1) -- (W);
            \draw[->,>=latex] (W-2) -- (W-1);
            \draw[->,>=latex] (Y-1) -- (Y);
            \draw[->,>=latex] (Y-2) -- (Y-1);
            \draw[->,>=latex] (X-1) -- (X);
            \draw[->,>=latex] (X-2) -- (X-1);

            \draw[->,>=latex] (Y-1) -- (W);
            \draw[->,>=latex] (Y-2) -- (W-1);
            \draw[->,>=latex] (W-1) -- (Y);
            \draw[->,>=latex] (W-2) -- (Y-1);
            \draw[->,>=latex] (U-1) -- (X);
            \draw[->,>=latex] (U-2) -- (X-1);

            \draw[->,>=latex] (V-1) -- (U);
            \draw[->,>=latex] (V-2) -- (U-1);
            \draw[->,>=latex] (W-1) -- (V);
            \draw[->,>=latex] (W-2) -- (V-1);

            \draw[->,>=latex] (U-1) -- (V);
            \draw[->,>=latex] (U-2) -- (V-1);
            \draw[->,>=latex] (V-1) -- (W);
            \draw[->,>=latex] (V-2) -- (W-1);

            \draw[<-,>=latex] (U) to [out=-45,in=45, looseness=1] (V);
            \draw[<-,>=latex] (U-1) to [out=-45,in=45, looseness=1] (V-1);
            \draw[<-,>=latex] (U-2) to [out=-45,in=45, looseness=1] (V-2);
            \draw[<-,>=latex] (V) to [out=-45,in=45, looseness=1] (W);
            \draw[<-,>=latex] (V-1) to [out=-45,in=45, looseness=1] (W-1);
            \draw[<-,>=latex] (V-2) to [out=-45,in=45, looseness=1] (W-2);
            \draw[<-,>=latex] (W) to [out=-45,in=45, looseness=1] (Y);
            \draw[<-,>=latex] (W-1) to [out=-45,in=45, looseness=1] (Y-1);
            \draw[<-,>=latex] (W-2) to [out=-45,in=45, looseness=1] (Y-2);
		
    		\coordinate[left of=X-2] (d1);
    		\draw [dashed,>=latex] (X-2) to[left] (d1);
    		\coordinate[left of=Y-2] (d1);
    		\draw [dashed,>=latex] (Y-2) to[left] (d1);		
    		\coordinate[left of=U-2] (d1);
    		\draw [dashed,>=latex] (U-2) to[left] (d1);		
    		\coordinate[left of=V-2] (d1);
    		\draw [dashed,>=latex] (V-2) to[left] (d1);		
    		\coordinate[left of=W-2] (d1);
    		\draw [dashed,>=latex] (W-2) to[left] (d1);
    		
    		\coordinate[right of=X] (d1);
    		\draw [dashed,>=latex] (X) to[right] (d1);
    		\coordinate[right of=Y] (d1);
    		\draw [dashed,>=latex] (Y) to[right] (d1);
    		\coordinate[right of=U] (d1);
    		\draw [dashed,>=latex] (U) to[right] (d1);
    		\coordinate[right of=V] (d1);
    		\draw [dashed,>=latex] (V) to[right] (d1);
    		\coordinate[right of=W] (d1);
    		\draw [dashed,>=latex] (W) to[right] (d1);
		\end{tikzpicture}
 	\caption{\centering Another compatible FTCG.}
        \label{fig:cond1:FTCG2-2}
    \end{subfigure}

     \caption{An example of an SCG in (a and b) with four of its compatible FTCGs in (b), (c), (e) and (f).  The pair of red and blue vertices in the SCG and in the FTCGs represents the direct effect of interest, \ie, $\alpha_{X_t,Y_t}$ and $\alpha_{X_{t-1},Y_t}$.  
     In this example, $\alpha_{X_{t-1},Y_t}$ is non-identifiable given the SCG because for this direct effect, in the FTCG in (b), $X_t$ should be in every valid adjustment set but $X_t$ should not be in any valid adjustment set in the FTCG in (c). Similarly, $\alpha_{X_t,Y_t}$ is non-identifiable given the SCG because for this direct effect (equal to zero), in the FTCG in (e), at least one vertex in $\{U_t, Z_t, W_t\}$ should be in every valid adjustment set but none of the vertices in $\{U_t, Z_t, W_t\}$ should be in any valid adjustment set in the FTCG in (f).
     }
    \label{fig:cond1}
\end{figure*}
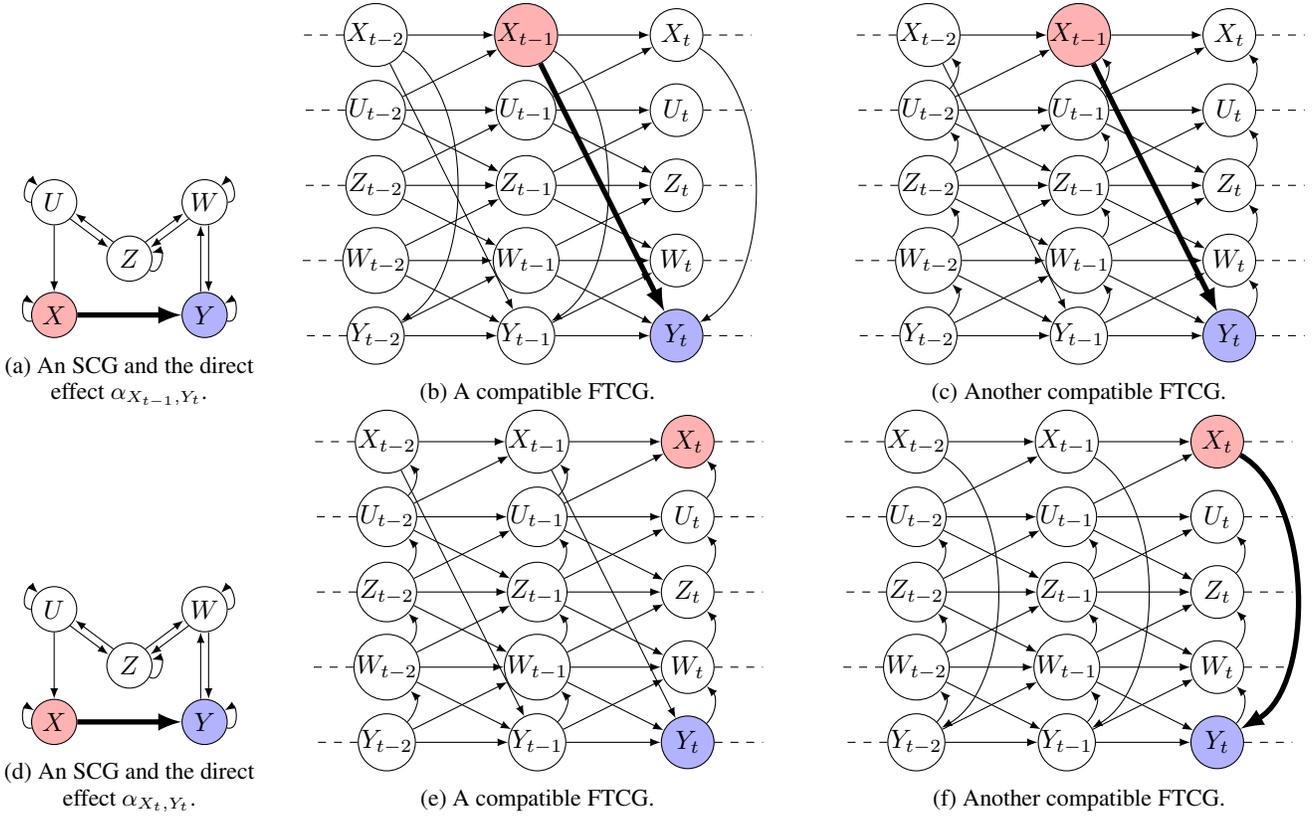%

\begin{definition}[Full-Time Causal Graph]
    \label{def:FTCG}
    Considering a finite set of times series $\mathcal{V}$ and a dynamic SCM, one can define the \emph{full-time causal graph (FTCG)} $\mathcal{G}_{f} = (\mathcal{V}_{f}, \mathcal{E}_{f})$ associated to the dynamic SCM in the following way:
    \begin{equation*}
        \begin{aligned}
            &\mathcal{V}_{f} &:=& \{Y_t &|& \forall Y \in \mathcal{V},~\forall t \in \mathbb{Z} \},&\\
            &\mathcal{E}_{f} &:=& \{X_{t-\gamma}\rightarrow Y_{t} &|& \forall X_{t-\gamma},Y_{t} \in \mathcal{V}_{f}& \\
            & & & & &\st \alpha_{X_{t-\gamma},Y_t} \neq 0\}.&
        \end{aligned}
    \end{equation*}
    In addition, for every vertex $Y_t \in \mathcal{V}_f$ we note:
\begin{itemize}
     \item $Par(Y_t,\mathcal{G}_f) = \{U_{t'} \in \mathcal{V}_f |U_{t'} \rightarrow Y_t$  \text{ in } $\mathcal{E}_f\}$,
     \item $Anc(Y_t,\mathcal{G}_f) = \bigcup_{n\in \mathbb{N}}P_n$ \text{ where } $P_0 = \{Y_t\}$\text{ and } $P_{k+1} = \bigcup_{U_{t'} \in P_k} Par(U_{t'},\mathcal{G}_f)$,\text{ and}
     \item $Desc(Y_t,\mathcal{G}_f) = \bigcup_{n\in \mathbb{N}}C_n$ \text{ where } $C_0 = \{Y_t\}$ \text{ and } $C_{k+1} = \bigcup_{U_{t'} \in C_k} \{W_{t''} \in \mathcal{V}_f |U_{t'} \rightarrow W_{t''}$ \text{ in } $\mathcal{E}_f\}$.
\end{itemize}
\end{definition}

\begin{assumption}[Acyclicity of the FTCG]
    \label{ass:AcyclicFTCG}
    Every FTCG is acyclic.
\end{assumption}

The FTCG is the most natural way to represent a dynamic  SCM but it is unpractical as it is infinite.
Of course, given Assumption~\ref{ass:ConsistencyThroughoutTime}, it is possible to represent an FTCG in a finite graph, but even in this case sometimes, it is difficult to construct this type of graphs from prior knowledge due to the uncertainty regarding temporal lags.
Furthermore, causal discovery methods are not always efficient~\citep{Ait_Bachir_2023} due to the strong assumptions they require that are not always satisfied in real applications, so constructing this type of graphs from data is not always a valid option.
Therefore, it is much more reliable to construct an abstraction of this type of graphs which does not contain temporal information.
This abstraction is usually referred to as a summary causal graph~\citep{Assaad_2022_a}.

\begin{definition}[Summary Causal Graph]
	\label{def:SCG}
    Considering a finite set of times series $\mathcal{V}$ and an FTCG $\mathcal{G}_{f} = (\mathcal{V}_{f}, \mathcal{E}_{f})$, one can define the \emph{summary causal graph (SCG)} $\mathcal{G}_{s} = (\mathcal{V}_{s}, \mathcal{E}_{s})$ compatible with the FTCG in the following way:
    \begin{equation*}
        \begin{aligned}
            &\mathcal{V}_{s} &:=& \mathcal{V},&\\
            &\mathcal{E}_{s} &:=& \{X\rightarrow Y &|& \forall X,Y \in \mathcal{V},&\\
            & & & & &~\exists t'\leq t\in \mathbb{Z} \st X_{t'}\rightarrow Y_{t}\in\mathcal{E}_{f}\}.&
        \end{aligned}
    \end{equation*}
        In addition, for every vertex $Y \in \mathcal{V}_s$ we note:
\begin{itemize}
    \item $Par(Y,\mathcal{G}_s) = \{U \in \mathcal{V}_s |U \rightarrow Y \text{ or }U\rightleftarrows Y$ \text{ in } $\mathcal{E}_s \}$,
    \item $Anc(Y,\mathcal{G}_s) = \bigcup_{n\in \mathbb{N}}P_n$ \text{ where } $P_0 = \{Y\}$ \text{ and } $P_{k+1} = \bigcup_{U \in P_k} Par(U,\mathcal{G}_s)$, \text{ and}
    \item $Desc(Y,\mathcal{G}_s) = \bigcup_{n\in \mathbb{N}}C_n$ \text{ where } $C_0 = \{Y\}$ \text{ and } $C_{k+1} = \bigcup_{U \in C_k} \{U \in \mathcal{V}_s |U \rightarrow W$ \text{ or } $U\rightleftarrows W \text{ in } \mathcal{E}_s\}$.
\end{itemize}
    Notice that an SCG may have cycles and in particular two arrows in opposite directions, i.e., if in the FTCG we have $X_{t'}\rightarrow Y_{t}$ and $Y_{t''}\rightarrow X_{t}$ then in the SCG we have $X\rightleftarrows Y$.
\end{definition}
	
The abstraction of SCGs entails that, even though there is exactly one SCG compatible with a given FTCG, there are in general several FTCGs compatible with a given SCG.
For example, the FTCGs in Figure~\ref{fig:cond1:FTCG1-1} and \ref{fig:cond1:FTCG1-2}
correspond to the same SCG given in Figure~\ref{fig:cond1:SCG1}.

Now that we have defined direct effects and SCGs, 
we can formally define graphical identifiability.

\begin{definition}[Graphical Identifiability of a Direct Effect from an SCG]
    The direct effect of a time instant $X_{t-\gamma_{xy}}$ on another time instant $Y_t$, i.e., $\alpha_{X_{t-\gamma_{xy}},Y_t}$, in a linear dynamic SCM is said to be \emph{identifiable} from an SCG if the quantity $\alpha_{X_{t-\gamma_{xy}},Y_t}$ can be computed uniquely from the observed distribution without any further assumption on the distribution and without knowing the FTCG.
\end{definition}
Note that given the true FTCG, computing the direct effect uniquely from the observed distribution without any further assumption on the distribution usually consists of removing all confounding bias and non-direct effects by adjusting on a set that do not create any selection bias. Removing confounding bias usually necessitates adjusting on some suitable ancestors of the cause or the effect, removing non-direct effects usually consists of adjusting on some suitable ancestors of the effect that are not ancestors of the cause, and creating selection bias consists of adjusting on descendants of the effect. In the linear setting, given such an adjustment set $\mathcal{Z}_f$, the direct effect $\alpha_{X_{t-\gamma}, Y_t}$ can be estimated using the partial linear regression coefficient $r_{X_{t-\gamma} Y_t . \mathcal{Z}_f}$  where $r_{X_{t-\gamma} Y_t . \mathcal{Z}_f}$ represents the correlation between $X_{t-\gamma}$ and $Y_t$ after $\mathcal{Z}_f$ is "partialled out" \cite{Pearl_2000}. The difficulty of this paper lies in finding such an adjustment set $\mathcal{Z}_f$ using the SCG and without knowing the true FTCG. This consists in finding an adjustment set that is valid for every FTCG compatible with the SCG.

\begin{figure*}[t!]
\centering
   \begin{subfigure}{.19\textwidth}
    \centering
	\begin{tikzpicture}[{black, circle, draw, inner sep=0}]
	\tikzset{nodes={draw,rounded corners},minimum height=0.6cm,minimum width=0.6cm}	
	\tikzset{anomalous/.append style={fill=easyorange}}
	\tikzset{rc/.append style={fill=easyorange}}
	
	\node[fill=red!30] (X) at (0,0) {$X$} ;
	\node[fill=blue!30] (Y) at (2,0) {$Y$};
	\node (W) at (2,1.5) {$W$};
	\node (U) at (0,1.5) {$U$};
	\node (Z) at (1,0.75) {$Z$};
	
	\draw[->,>=latex, line width=2pt] (X) -- (Y);
        \draw [->,>=latex,] (U) --  (X);
         \begin{scope}[transform canvas={xshift=-.15em}]
          \draw [->,>=latex,] (Y) --  (W);
        \end{scope}
         \begin{scope}[transform canvas={xshift=.15em}]
          \draw [<-,>=latex,] (Y) --  (W);
        \end{scope}
        \begin{scope}[transform canvas={xshift=-.1em, yshift=-.1em}]
          \draw [->,>=latex,] (U) --  (Z);
        \end{scope}
         \begin{scope}[transform canvas={xshift=.1em, yshift=.1em}]
          \draw [<-,>=latex,] (U) --  (Z);
        \end{scope}
         \begin{scope}[transform canvas={xshift=-.1em, yshift=.1em}]
          \draw [->,>=latex,] (Z) --  (W);
        \end{scope}
         \begin{scope}[transform canvas={xshift=.1em, yshift=-.1em}]
          \draw [<-,>=latex,] (Z) --  (W);
        \end{scope}
        
	\draw[->,>=latex] (Y) to [out=-15,in=30, looseness=2] (Y);
	\draw[->,>=latex] (Z) to [out=-30,in=15, looseness=2] (Z);
	\draw[->,>=latex] (U) to [out=180,in=135, looseness=2] (U);
	\draw[->,>=latex] (W) to [out=0,in=45, looseness=2] (W);
	\end{tikzpicture}
    \caption{\centering An SCG  and the direct effect $\alpha_{X_{t},Y_t}$.}
    \label{fig:cond2a:SCG}
    \end{subfigure}
    \hfill 
        \begin{subfigure}{.39\textwidth}
    \centering
		\begin{tikzpicture}[{black, circle, draw, inner sep=0}]
		  \tikzset{nodes={draw,rounded corners},minimum height=0.7cm,minimum width=0.7cm}
		  \tikzset{latent/.append style={fill=gray!60}}
		  
            \node[fill=red!30] (X) at (2,4) {$X_t$};
            \node (U) at (2,3) {$U_t$};
            \node (W) at (2,1) {$W_t$};
            \node (V) at (2,2) {$Z_t$};
            \node[fill=blue!30] (Y) at (2,0) {$Y_t$};
            \node (X-1) at (0,4) {$X_{t-1}$};
            \node (U-1) at (0,3) {$U_{t-1}$};
            \node (W-1) at (0,1) {$W_{t-1}$};
            \node (V-1) at (0,2) {$Z_{t-1}$};
            \node (Y-1) at (0,0) {$Y_{t-1}$};
            \node (X-2) at (-2,4) {$X_{t-2}$};
            \node (U-2) at (-2,3) {$U_{t-2}$};
            \node (W-2) at (-2,1) {$W_{t-2}$};
            \node (V-2) at (-2,2) {$Z_{t-2}$};
            \node (Y-2) at (-2,0) {$Y_{t-2}$};

            \draw[->,>=latex] (X-1) -- (Y);
            \draw[->,>=latex] (X-2) -- (Y-1);

            \draw[->,>=latex, line width=2pt] (X) to [out=-30,in=30, looseness=0.8] (Y);
            \draw[->,>=latex] (X-1) to [out=-30,in=30, looseness=0.8] (Y-1);
            \draw[->,>=latex] (X-2) to [out=-30,in=30, looseness=0.8] (Y-2);

            \draw[->,>=latex] (U-1) -- (U);
            \draw[->,>=latex] (U-2) -- (U-1);
            \draw[->,>=latex] (V-1) -- (V);
            \draw[->,>=latex] (V-2) -- (V-1);
            \draw[->,>=latex] (W-1) -- (W);
            \draw[->,>=latex] (W-2) -- (W-1);
            \draw[->,>=latex] (Y-1) -- (Y);
            \draw[->,>=latex] (Y-2) -- (Y-1);

            \draw[->,>=latex] (Y-1) -- (W);
            \draw[->,>=latex] (Y-2) -- (W-1);
            \draw[->,>=latex] (W-1) -- (Y);
            \draw[->,>=latex] (W-2) -- (Y-1);
            \draw[->,>=latex] (U-1) -- (X);
            \draw[->,>=latex] (U-2) -- (X-1);

            \draw[->,>=latex] (V-1) -- (U);
            \draw[->,>=latex] (V-2) -- (U-1);
            \draw[->,>=latex] (W-1) -- (V);
            \draw[->,>=latex] (W-2) -- (V-1);

            \draw[->,>=latex] (U-1) -- (V);
            \draw[->,>=latex] (U-2) -- (V-1);
            \draw[->,>=latex] (V-1) -- (W);
            \draw[->,>=latex] (V-2) -- (W-1);

            \draw[->,>=latex] (U) to [out=45,in=-45, looseness=1] (X);
            \draw[->,>=latex] (U-1) to [out=45,in=-45, looseness=1] (X-1);
            \draw[->,>=latex] (U-2) to [out=45,in=-45, looseness=1] (X-2);
            \draw[->,>=latex] (U) to [out=-45,in=45, looseness=1] (V);
            \draw[->,>=latex] (U-1) to [out=-45,in=45, looseness=1] (V-1);
            \draw[->,>=latex] (U-2) to [out=-45,in=45, looseness=1] (V-2);
            \draw[->,>=latex] (V) to [out=-45,in=45, looseness=1] (W);
            \draw[->,>=latex] (V-1) to [out=-45,in=45, looseness=1] (W-1);
            \draw[->,>=latex] (V-2) to [out=-45,in=45, looseness=1] (W-2);
            \draw[->,>=latex] (W) to [out=-45,in=45, looseness=1] (Y);
            \draw[->,>=latex] (W-1) to [out=-45,in=45, looseness=1] (Y-1);
            \draw[->,>=latex] (W-2) to [out=-45,in=45, looseness=1] (Y-2);
		
    		\coordinate[left of=X-2] (d1);
    		\draw [dashed,>=latex] (X-2) to[left] (d1);
    		\coordinate[left of=Y-2] (d1);
    		\draw [dashed,>=latex] (Y-2) to[left] (d1);		
    		\coordinate[left of=U-2] (d1);
    		\draw [dashed,>=latex] (U-2) to[left] (d1);		
    		\coordinate[left of=V-2] (d1);
    		\draw [dashed,>=latex] (V-2) to[left] (d1);		
    		\coordinate[left of=W-2] (d1);
    		\draw [dashed,>=latex] (W-2) to[left] (d1);
    		
    		\coordinate[right of=X] (d1);
    		\draw [dashed,>=latex] (X) to[right] (d1);
    		\coordinate[right of=Y] (d1);
    		\draw [dashed,>=latex] (Y) to[right] (d1);
    		\coordinate[right of=U] (d1);
    		\draw [dashed,>=latex] (U) to[right] (d1);
    		\coordinate[right of=V] (d1);
    		\draw [dashed,>=latex] (V) to[right] (d1);
    		\coordinate[right of=W] (d1);
    		\draw [dashed,>=latex] (W) to[right] (d1);
		\end{tikzpicture}
 	\caption{\centering A compatible FTCG.}
        \label{fig:cond2a:FTCG1}
    \end{subfigure}
\hfill 
    \begin{subfigure}{.39\textwidth}
    \centering
		\begin{tikzpicture}[{black, circle, draw, inner sep=0}]
		  \tikzset{nodes={draw,rounded corners},minimum height=0.7cm,minimum width=0.7cm}
		  \tikzset{latent/.append style={fill=gray!60}}
		  
            \node[fill=red!30] (X) at (2,4) {$X_t$};
            \node (U) at (2,3) {$U_t$};
            \node (W) at (2,1) {$W_t$};
            \node (V) at (2,2) {$Z_t$};
            \node[fill=blue!30] (Y) at (2,0) {$Y_t$};
            \node (X-1) at (0,4) {$X_{t-1}$};
            \node (U-1) at (0,3) {$U_{t-1}$};
            \node (W-1) at (0,1) {$W_{t-1}$};
            \node (V-1) at (0,2) {$Z_{t-1}$};
            \node (Y-1) at (0,0) {$Y_{t-1}$};
            \node (X-2) at (-2,4) {$X_{t-2}$};
            \node (U-2) at (-2,3) {$U_{t-2}$};
            \node (W-2) at (-2,1) {$W_{t-2}$};
            \node (V-2) at (-2,2) {$Z_{t-2}$};
            \node (Y-2) at (-2,0) {$Y_{t-2}$};

            \draw[->,>=latex] (X-1) -- (Y);
            \draw[->,>=latex] (X-2) -- (Y-1);

            \draw[->,>=latex, line width=2pt] (X) to [out=-30,in=30, looseness=0.8] (Y);
            \draw[->,>=latex] (X-1) to [out=-30,in=30, looseness=0.8] (Y-1);
            \draw[->,>=latex] (X-2) to [out=-30,in=30, looseness=0.8] (Y-2);

            \draw[->,>=latex] (U-1) -- (U);
            \draw[->,>=latex] (U-2) -- (U-1);
            \draw[->,>=latex] (V-1) -- (V);
            \draw[->,>=latex] (V-2) -- (V-1);
            \draw[->,>=latex] (W-1) -- (W);
            \draw[->,>=latex] (W-2) -- (W-1);
            \draw[->,>=latex] (Y-1) -- (Y);
            \draw[->,>=latex] (Y-2) -- (Y-1);

            \draw[->,>=latex] (Y-1) -- (W);
            \draw[->,>=latex] (Y-2) -- (W-1);
            \draw[->,>=latex] (W-1) -- (Y);
            \draw[->,>=latex] (W-2) -- (Y-1);
            \draw[->,>=latex] (U-1) -- (X);
            \draw[->,>=latex] (U-2) -- (X-1);

            \draw[->,>=latex] (V-1) -- (U);
            \draw[->,>=latex] (V-2) -- (U-1);
            \draw[->,>=latex] (W-1) -- (V);
            \draw[->,>=latex] (W-2) -- (V-1);

            \draw[->,>=latex] (U-1) -- (V);
            \draw[->,>=latex] (U-2) -- (V-1);
            \draw[->,>=latex] (V-1) -- (W);
            \draw[->,>=latex] (V-2) -- (W-1);

            \draw[<-,>=latex] (U) to [out=-45,in=45, looseness=1] (V);
             \draw[<-,>=latex] (U-1) to [out=-45,in=45, looseness=1] (V-1);
            \draw[<-,>=latex] (U-2) to [out=-45,in=45, looseness=1] (V-2);
            \draw[<-,>=latex] (V) to [out=-45,in=45, looseness=1] (W);
            \draw[<-,>=latex] (V-1) to [out=-45,in=45, looseness=1] (W-1);
            \draw[<-,>=latex] (V-2) to [out=-45,in=45, looseness=1] (W-2);
            \draw[<-,>=latex] (W) to [out=-45,in=45, looseness=1] (Y);
            \draw[<-,>=latex] (W-1) to [out=-45,in=45, looseness=1] (Y-1);
            \draw[<-,>=latex] (W-2) to [out=-45,in=45, looseness=1] (Y-2);
		
    		\coordinate[left of=X-2] (d1);
    		\draw [dashed,>=latex] (X-2) to[left] (d1);
    		\coordinate[left of=Y-2] (d1);
    		\draw [dashed,>=latex] (Y-2) to[left] (d1);		
    		\coordinate[left of=U-2] (d1);
    		\draw [dashed,>=latex] (U-2) to[left] (d1);		
    		\coordinate[left of=V-2] (d1);
    		\draw [dashed,>=latex] (V-2) to[left] (d1);		
    		\coordinate[left of=W-2] (d1);
    		\draw [dashed,>=latex] (W-2) to[left] (d1);
    		
    		\coordinate[right of=X] (d1);
    		\draw [dashed,>=latex] (X) to[right] (d1);
    		\coordinate[right of=Y] (d1);
    		\draw [dashed,>=latex] (Y) to[right] (d1);
    		\coordinate[right of=U] (d1);
    		\draw [dashed,>=latex] (U) to[right] (d1);
    		\coordinate[right of=V] (d1);
    		\draw [dashed,>=latex] (V) to[right] (d1);
    		\coordinate[right of=W] (d1);
    		\draw [dashed,>=latex] (W) to[right] (d1);
		\end{tikzpicture}
 	\caption{\centering Another compatible FTCG.}
        \label{fig:cond2a:FTCG2}
    \end{subfigure}

     \caption{An example of an SCG in (a) with two of its compatible FTCGs in (b) and (c).  The pair of red and blue vertices in the SCG and in the FTCGs represents the direct effect of interest, \ie, $\alpha_{X_t,Y_t}$.  
     In this example, $\alpha_{X_t,Y_t}$ is non-identifiable given the SCG because for this direct effect at least one vertex in $\{U_t, Z_t, W_t\}$ should be in every valid adjustment set for the first FTCG but none of the vertices in $\{U_t, Z_t, W_t\}$ should be in any valid adjustment set for the second FTCG.}
   \label{fig:cond2a}
\end{figure*}
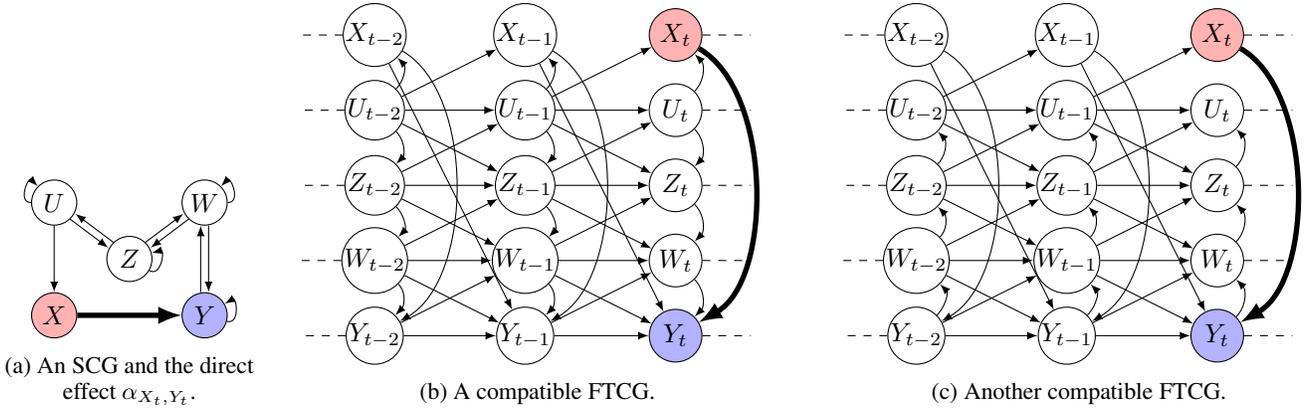%

In the following, we recall several preliminaries related to FTCGs and SCGs.

\begin{definition}[Paths in FTCGs]
    \label{def:Path_FTCG}
    A \emph{path} between two vertices $X_{t'}$ to $Y_{t}$ is an ordered sequence of vertices denoted as $\pi_f=\langle V^1_{t^1},\dots,V^n_{t^n} \rangle$ such that $V^1_{t^1}=X_{t'}$, $V^n_{t^n}=Y_{t}$ and $\forall 1\leq i < n$, $V^i_{t^i}$ and $V^{i+1}_{t^{i+1}}$ are adjacent (\ie, $V^i_{t^i}\rightarrow V^{i+1}_{t^{i+1}}\text{ or }V^i_{t^{i}}\leftarrow V^{i+1}_{t^{i+1}}$) and $\forall 1\leq i<j \leq n,~V^i_{t^i}\neq V^j_{t^j}$.
    In cases where orientations are crucial in the context under discussion, in the path $\pi_f$, we will replace commas with orientations.
    In this paper, a path $\pi_f$ from $X_{t'}$ to $Y_{t}$ is said to be \emph{non-direct} if $\pi_f \neq \langle X_{t'}\rightarrow Y_{t} \rangle$.
\end{definition}

\begin{definition}[Paths in SCGs]
    \label{def:Paths_SCG}
    A \emph{path} between two vertices $X$ to $Y$ is an ordered sequence of vertices denoted as $\pi_s=\langle V^1,\dots,V^n \rangle$ such that $V^1=X$, $V^n=Y$ and $\forall 1\leq i < n$, $V^i$ and $V^{i+1}$ are adjacent (\ie, $V^i\rightarrow V^{i+1}\text{ or }V^i\leftarrow V^{i+1}\text{ or }V^i\rightleftarrows V^{i+1}$) and $\forall 1\leq i<j \leq n,~V^i\neq V^j$.
    In cases where orientations are crucial in the context under discussion, in the path $\pi_s$, we will replace commas with orientations.
    In this paper, a path $\pi_s$ from $X$ to $Y$ is said to be \emph{non-direct} if $\pi_s \neq \langle X \rightarrow Y \rangle$.
\end{definition}

A path is said to be \emph{directed} if it only contains $\rightarrow$ and $\leftrightarrows$.

\begin{definition}[Cycles in SCGs]
    \label{def:Cycle}
    In an SCG, a \emph{cycle} is an ordered sequence of vertices $\pi_s=\langle V^1,\dots,V^n \rangle$ such that:
    \begin{itemize}
        \item $V^1=V^n$, 
        \item $\forall 1\leq i < n,~V^i\rightarrow V^{i+1} \text{ or } V^i\rightleftarrows V^{i+1}$, and
        \item $\forall 1\leq i < j \leq n,~V^i=V^j \implies i=1 \text{ and }j=n$.
    \end{itemize}
    The set of cycles with endpoints $Y \in \mathcal{V}_s$ in an SCG $\mathcal{G}_s$ is written $Cycles(Y,\mathcal{G}_s)$.
\end{definition}


An important graphical notion used in causal reasoning is the notion of blocked path~\citep{Pearl_1998} for which the classical definition  was introduced for directed acyclic graphs and thus can be directly used for FTCGs under Assumption~\ref{ass:AcyclicFTCG}.
Note that an adjustment set that removes all confounding bias and non-direct effects and that do not create any selection bias between $X_{t-\gamma_{xy}}$ and $Y_t$ consists of non descendants of $Y_t$ different from $X_{t-\gamma_{xy}}$ that block all non-direct paths between $X_{t-\gamma_{xy}}$ and $Y_t$.
In the following, we firstly give the definition of  blocked path in FTCGs and then we give a similar notion for SCGs.

\begin{definition}[Blocked Path in FTCGs]
    \label{def:BlockedPathInFTCG}
    In an FTCG $\mathcal{G}_f=(\mathcal{V}_f,\mathcal{E}_f)$, a path $\pi_f=\langle V^1_{t^1},\dots,V^n_{t^n} \rangle$ is said to be \emph{blocked} by a set of vertices $\mathcal{Z}_f\subseteq\mathcal{V}_f$ if:
    \begin{enumerate}
        \item \label{item:BlockedByEndpointsInFTCG} $V^1_{t^1}\in\mathcal{Z}_f$ or $V^n_{t^n}\in\mathcal{Z}_f$, or
        \item \label{item:ActivelyBlockedInFTCG} $\exists 1<i<n \st V^{i-1}_{t^{i-1}}\leftarrow V^i_{t^{i}}$ or $V^i_{t^i}\rightarrow V^{i+1}_{t^{i+1}}$ and $V^i_{t^{i}}\in\mathcal{Z}_f$, or
        \item \label{item:PassivelyBlockedInFTCG} $\exists 1<i<n \st \rightarrow V^i_{t^{i}}\leftarrow$ and $Desc(V^i_{t^{i}}, \mathcal{G}_f)\cap\mathcal{Z}_f=\emptyset$.
    \end{enumerate}
    A path which is not blocked is said to be \emph{active}.
    When the set $\mathcal{Z}_f$ is not specified, it is implicit that we consider $\mathcal{Z}_f=\emptyset$.
\end{definition}

The classical definition of blocked path is usually used in directed acyclic graph and since the SCG compatible with an FTCG can be cyclic, one needs to adapt it.
\cite{Spirtes_1993} explains that under the linearity assumption the notion of blocked path is readily extended.
Moreover, \cite{Mooij_2017} introduced a more recent and general (non-parametric and allow for hidden confounding) adaptation called $\sigma$-blocked path.
Here we will focus on the definition used in \cite{Spirtes_1993} since we assume linearity but we adapt it such that in SCGs extremity vertices of a path do not necessarily block the  path.

\begin{definition}[Blocked Path in SCGs]
    \label{def:BlockedPathInSCG}
    In an SCG $\mathcal{G}_s=(\mathcal{V}_s,\mathcal{E}_s)$, a path $\pi_s=\langle V^1,\dots,V^n \rangle$ is said to be \emph{blocked} by a set of vertices $\mathcal{Z}_s\subseteq\mathcal{V}_s$ if:
    \begin{enumerate}
        \item \label{item:ActivelyBlockedInSCG} $\exists 1 < i < n \st V^{i-1}\leftarrow V^i$ or $V^i\rightarrow V^{i+1}$ and $V^i\in\mathcal{Z}_s$, or
        \item \label{item:PassivelyBlockedInSCG} $\exists 1 < i \leq j < n \st \langle V^{i-1}\rightarrow V^i\rightleftarrows\dots\rightleftarrows V^{j} \leftarrow V^{j+1} \rangle \text{ and }Desc(V^i, \mathcal{G}_s) \cap \mathcal{Z}_s=\emptyset$.
    \end{enumerate}
    A path which is not blocked is said to be \emph{active}.
    When the set $\mathcal{Z}_s$ is not specified, it is implicit that we consider $\mathcal{Z}_s=\emptyset$.
\end{definition}

Condition~\ref{item:ActivelyBlockedInSCG} in Definition~\ref{def:BlockedPathInSCG} is a direct adaptation of condition~\ref{item:ActivelyBlockedInFTCG} in Definition~\ref{def:BlockedPathInFTCG}.
Condition~\ref{item:PassivelyBlockedInSCG} of Definition~\ref{def:BlockedPathInSCG} is explained by the fact that for a path $\pi_s=\langle V^1,\dots,V^n\rangle$ in an SCG $\mathcal{G}_s=(\mathcal{V}_s,\mathcal{E}_s)$ and a set of vertices $\mathcal{Z}_s\subseteq\mathcal{V}_s$, if $\exists 1 < i \leq j < n \st$
\begin{align*}
&\langle V^{i-1}\rightarrow V^i\rightleftarrows\dots\rightleftarrows V^{j} \leftarrow V^{j+1}\rangle \text{ and }\\
&Desc(V^i, \mathcal{G}_s) \cap \mathcal{Z}_s=\emptyset
\end{align*}
then 
$\forall \pi_f=\langle V^1_{t^1},\dots,V^n_{t^n}\rangle, \exists 1 < i \leq k \leq j < n \st$
\begin{align*}
\rightarrow V^k_{t^{k}}\leftarrow
\text{ and } Desc(V^k, \mathcal{G}_s)\cap\mathcal{Z}_s=\emptyset
\end{align*}
$\text{therefore }Desc(V^k_{t^{k}}, \mathcal{G}_f)\cap\mathcal{Z}_f=\emptyset$ \text{where } $\mathcal{Z}_f\subseteq \{V_{t'}|V\in \mathcal{Z}_s,~t'\in\mathbb{Z}\}$.
Notice that there is no adaptation of condition~\ref{item:BlockedByEndpointsInFTCG} as having $V^1\in\mathcal{Z}_s$ or $V^n\in\mathcal{Z}_s$ does not mean that instants of interests $V^1_{t^1}$ or $V^n_{t^n}$ which are endpoints of compatible paths of interests are in $\mathcal{Z}_f\subseteq \{V_{t'}|V\in \mathcal{Z}_s,~t'\in\mathbb{Z}\}$.
Moreover, in this paper we are interested in the direct effect of $V^1_{t^1}=X_{t-\gamma_{xy}}$ on $V^n_{t^n}=Y_t$ and therefore we cannot adjust on them and thus we will always have $V^1_{t^1},V^n_{t^n}\notin \mathcal{Z}_f$.

Note that a set $\mathcal{Z}_s$ that blocks all paths between two vertices $X$ and $Y$ in an SCG does not necessary have a compatible \textit{finite} set $\mathcal{Z}_f\subseteq \{V_{t'}|V\in \mathcal{Z}_s,~t'\in\mathbb{Z}\}$ that block all paths between two vertices $X_{t-\gamma_{xy}}$ and $Y_t$ in every FTCG compatible with the given SCG. For example, in Figure~\ref{fig:cond1:SCG2}, $U$ blocks all paths between $X$ and $Z$ but in  Figure~\ref{fig:cond1:FTCG2-1}, $\not\exists i\in\mathbb{N}$ such that $\{U_{t-i}, \cdots, U_t\}$ blocks all paths between $X_t$ and $Z_t$ because there will always be an active path between them passing by  $U_{t-i-1}$.

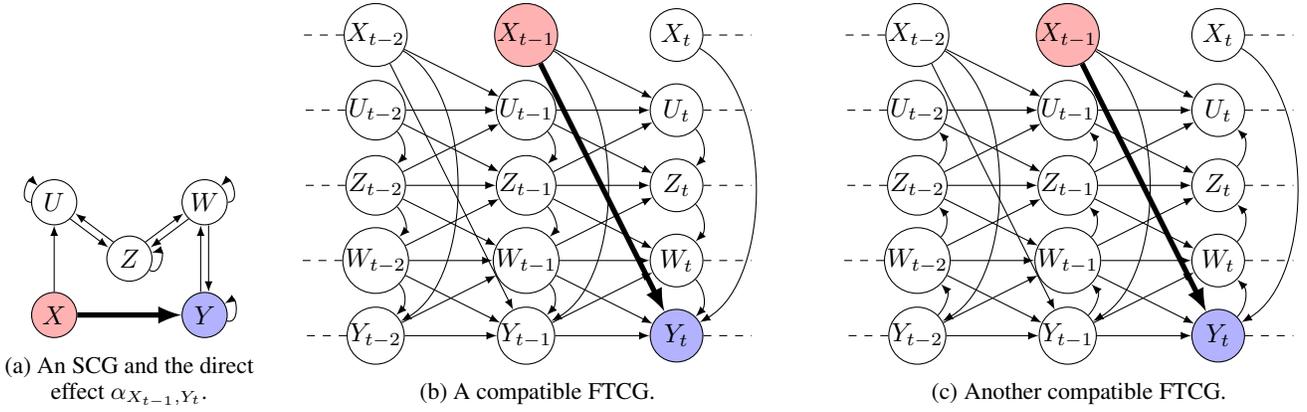
\begin{figure*}[t!]
\centering
   \begin{subfigure}{.19\textwidth}
    \centering
	\begin{tikzpicture}[{black, circle, draw, inner sep=0}]
	\tikzset{nodes={draw,rounded corners},minimum height=0.6cm,minimum width=0.6cm}	
	\tikzset{anomalous/.append style={fill=easyorange}}
	\tikzset{rc/.append style={fill=easyorange}}
	
	\node[fill=red!30] (X) at (0,0) {$X$} ;
	\node[fill=blue!30] (Y) at (2,0) {$Y$};
	\node (W) at (2,1.5) {$W$};
	\node (U) at (0,1.5) {$U$};
	\node (Z) at (1,0.75) {$Z$};
	
	\draw[->,>=latex, line width=2pt] (X) -- (Y);
        \draw [->,>=latex,] (X) --  (U);
         \begin{scope}[transform canvas={xshift=-.15em}]
          \draw [->,>=latex,] (Y) --  (W);
        \end{scope}
         \begin{scope}[transform canvas={xshift=.15em}]
          \draw [<-,>=latex,] (Y) --  (W);
        \end{scope}
        \begin{scope}[transform canvas={xshift=-.1em, yshift=-.1em}]
          \draw [->,>=latex,] (U) --  (Z);
        \end{scope}
         \begin{scope}[transform canvas={xshift=.1em, yshift=.1em}]
          \draw [<-,>=latex,] (U) --  (Z);
        \end{scope}
         \begin{scope}[transform canvas={xshift=-.1em, yshift=.1em}]
          \draw [->,>=latex,] (Z) --  (W);
        \end{scope}
         \begin{scope}[transform canvas={xshift=.1em, yshift=-.1em}]
          \draw [<-,>=latex,] (Z) --  (W);
        \end{scope}
        
	\draw[->,>=latex] (Y) to [out=-15,in=30, looseness=2] (Y);
	\draw[->,>=latex] (Z) to [out=-30,in=15, looseness=2] (Z);
	\draw[->,>=latex] (U) to [out=180,in=135, looseness=2] (U);
	\draw[->,>=latex] (W) to [out=0,in=45, looseness=2] (W);
	\end{tikzpicture}
    \caption{\centering An SCG  and the direct effect $\alpha_{X_{t-1},Y_t}$.}
    \label{fig:cond2b:SCG}
    \end{subfigure}
    \hfill 
        \begin{subfigure}{.39\textwidth}
    \centering
		\begin{tikzpicture}[{black, circle, draw, inner sep=0}]
		  \tikzset{nodes={draw,rounded corners},minimum height=0.7cm,minimum width=0.7cm}
		  \tikzset{latent/.append style={fill=gray!60}}
		  
            \node (X) at (2,4) {$X_t$};
            \node (U) at (2,3) {$U_t$};
            \node (W) at (2,1) {$W_t$};
            \node (V) at (2,2) {$Z_t$};
            \node[fill=blue!30] (Y) at (2,0) {$Y_t$};
            \node[fill=red!30] (X-1) at (0,4) {$X_{t-1}$};
            \node (U-1) at (0,3) {$U_{t-1}$};
            \node (W-1) at (0,1) {$W_{t-1}$};
            \node (V-1) at (0,2) {$Z_{t-1}$};
            \node (Y-1) at (0,0) {$Y_{t-1}$};
            \node (X-2) at (-2,4) {$X_{t-2}$};
            \node (U-2) at (-2,3) {$U_{t-2}$};
            \node (W-2) at (-2,1) {$W_{t-2}$};
            \node (V-2) at (-2,2) {$Z_{t-2}$};
            \node (Y-2) at (-2,0) {$Y_{t-2}$};

            \draw[->,>=latex, line width=2pt] (X-1) -- (Y);
            \draw[->,>=latex] (X-2) -- (Y-1);

            \draw[->,>=latex] (X) to [out=-30,in=30, looseness=0.8] (Y);
            \draw[->,>=latex] (X-1) to [out=-30,in=30, looseness=0.8] (Y-1);
            \draw[->,>=latex] (X-2) to [out=-30,in=30, looseness=0.8] (Y-2);

            \draw[->,>=latex] (U-1) -- (U);
            \draw[->,>=latex] (U-2) -- (U-1);
            \draw[->,>=latex] (V-1) -- (V);
            \draw[->,>=latex] (V-2) -- (V-1);
            \draw[->,>=latex] (W-1) -- (W);
            \draw[->,>=latex] (W-2) -- (W-1);
            \draw[->,>=latex] (Y-1) -- (Y);
            \draw[->,>=latex] (Y-2) -- (Y-1);

            \draw[->,>=latex] (Y-1) -- (W);
            \draw[->,>=latex] (Y-2) -- (W-1);
            \draw[->,>=latex] (W-1) -- (Y);
            \draw[->,>=latex] (W-2) -- (Y-1);
            \draw[->,>=latex] (X-1) -- (U);
            \draw[->,>=latex] (X-2) -- (U-1);

            \draw[->,>=latex] (V-1) -- (U);
            \draw[->,>=latex] (V-2) -- (U-1);
            \draw[->,>=latex] (W-1) -- (V);
            \draw[->,>=latex] (W-2) -- (V-1);

            \draw[->,>=latex] (U-1) -- (V);
            \draw[->,>=latex] (U-2) -- (V-1);
            \draw[->,>=latex] (V-1) -- (W);
            \draw[->,>=latex] (V-2) -- (W-1);

            \draw[->,>=latex] (U) to [out=-45,in=45, looseness=1] (V);
            \draw[->,>=latex] (U-1) to [out=-45,in=45, looseness=1] (V-1);
            \draw[->,>=latex] (U-2) to [out=-45,in=45, looseness=1] (V-2);
            \draw[->,>=latex] (V) to [out=-45,in=45, looseness=1] (W);
            \draw[->,>=latex] (V-1) to [out=-45,in=45, looseness=1] (W-1);
            \draw[->,>=latex] (V-2) to [out=-45,in=45, looseness=1] (W-2);
            \draw[->,>=latex] (W) to [out=-45,in=45, looseness=1] (Y);
            \draw[->,>=latex] (W-1) to [out=-45,in=45, looseness=1] (Y-1);
            \draw[->,>=latex] (W-2) to [out=-45,in=45, looseness=1] (Y-2);
		
    		\coordinate[left of=X-2] (d1);
    		\draw [dashed,>=latex] (X-2) to[left] (d1);
    		\coordinate[left of=Y-2] (d1);
    		\draw [dashed,>=latex] (Y-2) to[left] (d1);		
    		\coordinate[left of=U-2] (d1);
    		\draw [dashed,>=latex] (U-2) to[left] (d1);		
    		\coordinate[left of=V-2] (d1);
    		\draw [dashed,>=latex] (V-2) to[left] (d1);		
    		\coordinate[left of=W-2] (d1);
    		\draw [dashed,>=latex] (W-2) to[left] (d1);
    		
    		\coordinate[right of=X] (d1);
    		\draw [dashed,>=latex] (X) to[right] (d1);
    		\coordinate[right of=Y] (d1);
    		\draw [dashed,>=latex] (Y) to[right] (d1);
    		\coordinate[right of=U] (d1);
    		\draw [dashed,>=latex] (U) to[right] (d1);
    		\coordinate[right of=V] (d1);
    		\draw [dashed,>=latex] (V) to[right] (d1);
    		\coordinate[right of=W] (d1);
    		\draw [dashed,>=latex] (W) to[right] (d1);
		\end{tikzpicture}
 	\caption{\centering A compatible FTCG.}
        \label{fig:cond2b:FTCG1}
    \end{subfigure}
\hfill 
    \begin{subfigure}{.39\textwidth}
    \centering
		\begin{tikzpicture}[{black, circle, draw, inner sep=0}]
		  \tikzset{nodes={draw,rounded corners},minimum height=0.7cm,minimum width=0.7cm}
		  \tikzset{latent/.append style={fill=gray!60}}
		  
            \node (X) at (2,4) {$X_t$};
            \node (U) at (2,3) {$U_t$};
            \node (W) at (2,1) {$W_t$};
            \node (V) at (2,2) {$Z_t$};
            \node[fill=blue!30] (Y) at (2,0) {$Y_t$};
            \node[fill=red!30] (X-1) at (0,4) {$X_{t-1}$};
            \node (U-1) at (0,3) {$U_{t-1}$};
            \node (W-1) at (0,1) {$W_{t-1}$};
            \node (V-1) at (0,2) {$Z_{t-1}$};
            \node (Y-1) at (0,0) {$Y_{t-1}$};
            \node (X-2) at (-2,4) {$X_{t-2}$};
            \node (U-2) at (-2,3) {$U_{t-2}$};
            \node (W-2) at (-2,1) {$W_{t-2}$};
            \node (V-2) at (-2,2) {$Z_{t-2}$};
            \node (Y-2) at (-2,0) {$Y_{t-2}$};

            \draw[->,>=latex] (X) to [out=-30,in=30, looseness=0.8] (Y);
            \draw[->,>=latex] (X-1) to [out=-30,in=30, looseness=0.8] (Y-1);
            \draw[->,>=latex] (X-2) to [out=-30,in=30, looseness=0.8] (Y-2);
            \draw[->,>=latex, line width=2pt] (X-1) -- (Y);
            \draw[->,>=latex] (X-2) -- (Y-1);


            \draw[->,>=latex] (U-1) -- (U);
            \draw[->,>=latex] (U-2) -- (U-1);
            \draw[->,>=latex] (V-1) -- (V);
            \draw[->,>=latex] (V-2) -- (V-1);
            \draw[->,>=latex] (W-1) -- (W);
            \draw[->,>=latex] (W-2) -- (W-1);
            \draw[->,>=latex] (Y-1) -- (Y);
            \draw[->,>=latex] (Y-2) -- (Y-1);

            \draw[->,>=latex] (Y-1) -- (W);
            \draw[->,>=latex] (Y-2) -- (W-1);
            \draw[->,>=latex] (W-1) -- (Y);
            \draw[->,>=latex] (W-2) -- (Y-1);
            \draw[->,>=latex] (X-1) -- (U);
            \draw[->,>=latex] (X-2) -- (U-1);

            \draw[->,>=latex] (V-1) -- (U);
            \draw[->,>=latex] (V-2) -- (U-1);
            \draw[->,>=latex] (W-1) -- (V);
            \draw[->,>=latex] (W-2) -- (V-1);

            \draw[->,>=latex] (U-1) -- (V);
            \draw[->,>=latex] (U-2) -- (V-1);
            \draw[->,>=latex] (V-1) -- (W);
            \draw[->,>=latex] (V-2) -- (W-1);

            \draw[<-,>=latex] (U) to [out=-45,in=45, looseness=1] (V);
             \draw[<-,>=latex] (U-1) to [out=-45,in=45, looseness=1] (V-1);
            \draw[<-,>=latex] (U-2) to [out=-45,in=45, looseness=1] (V-2);
            \draw[<-,>=latex] (V) to [out=-45,in=45, looseness=1] (W);
            \draw[<-,>=latex] (V-1) to [out=-45,in=45, looseness=1] (W-1);
            \draw[<-,>=latex] (V-2) to [out=-45,in=45, looseness=1] (W-2);
            \draw[<-,>=latex] (W) to [out=-45,in=45, looseness=1] (Y);
            \draw[<-,>=latex] (W-1) to [out=-45,in=45, looseness=1] (Y-1);
            \draw[<-,>=latex] (W-2) to [out=-45,in=45, looseness=1] (Y-2);
		
    		\coordinate[left of=X-2] (d1);
    		\draw [dashed,>=latex] (X-2) to[left] (d1);
    		\coordinate[left of=Y-2] (d1);
    		\draw [dashed,>=latex] (Y-2) to[left] (d1);		
    		\coordinate[left of=U-2] (d1);
    		\draw [dashed,>=latex] (U-2) to[left] (d1);		
    		\coordinate[left of=V-2] (d1);
    		\draw [dashed,>=latex] (V-2) to[left] (d1);		
    		\coordinate[left of=W-2] (d1);
    		\draw [dashed,>=latex] (W-2) to[left] (d1);
    		
    		\coordinate[right of=X] (d1);
    		\draw [dashed,>=latex] (X) to[right] (d1);
    		\coordinate[right of=Y] (d1);
    		\draw [dashed,>=latex] (Y) to[right] (d1);
    		\coordinate[right of=U] (d1);
    		\draw [dashed,>=latex] (U) to[right] (d1);
    		\coordinate[right of=V] (d1);
    		\draw [dashed,>=latex] (V) to[right] (d1);
    		\coordinate[right of=W] (d1);
    		\draw [dashed,>=latex] (W) to[right] (d1);
		\end{tikzpicture}
 	\caption{\centering Another compatible FTCG.}
        \label{fig:cond2b:FTCG2}
    \end{subfigure}

     \caption{An example of an SCG in (a) with two of its compatible FTCGs in (b) and (c).  The pair of red and blue vertices in the SCG and in the FTCGs represents the direct effect of interest, \ie, $\alpha_{X_{t-1},Y_t}$.  
     In this example, $\alpha_{X_{t-1},Y_t}$ is non-identifiable given the SCG because for this direct effect at least one vertex in $\{U_t, Z_t, W_t\}$ should be in every valid adjustment set for the first FTCG but none of the vertices in $\{U_t, Z_t, W_t\}$ should be in any valid adjustment set for the second FTCG.}
   \label{fig:cond2b}
\end{figure*}%

\section{Complete Graphical Identifiability from SCGs}
\label{sec:identification}
In this section, we start by presenting the complete identifiability result followed by its proof as well as several toy examples to demonstrate it. 
Then we give a weaker but interesting result that is implied by the identifiability result.

\begin{restatable}{theorem}{mytheoremone}
\label{theorem:identifiability}
    Let $\mathcal{G}_s=(\mathcal{V}_s, \mathcal{E}_s)$ be an SCG that represents a linear dynamic SCM verifying Assumptions~\ref{ass:CausalSufficiency},\ref{ass:ConsistencyThroughoutTime},\ref{ass:AcyclicFTCG}, $\gamma_{max} \geq 0$ a maximum lag and $\alpha_{X_{t-\gamma_{xy}},Y_t}$ the direct effect of $X_{t-\gamma_{xy}}$ on $Y_t \st X, Y \in \mathcal{V}_s$ and $X \neq Y$. The direct effect $\alpha_{X_{t-\gamma_{xy}}, Y_t}$ is non-identifiable from $\mathcal{G}_s$ if and only if $X \in Par(Y,\mathcal{G}_s)$ and one of the following conditions holds:
    \begin{enumerate}
    \item \label{item:1} $X \in Desc(Y,\mathcal{G}_s)$ and $\exists C\in Cycles(X,\mathcal{G}_s)$ with $Y \notin C$, or
    \item \label{item:2} There exists an active non-direct path $\pi_s=\langle V^1,\dots,V^n \rangle$ from $X$ to $Y$ in $\mathcal{G}_s$ such that $\langle V^2,\dots,V^{n-1} \rangle \subseteq Desc(Y, \mathcal{G}_s)$, and one of the following conditions holds:
    \begin{enumerate}
        \item \label{item:2.1} $\gamma_{xy}=0$, or
        \item \label{item:2.2} $\gamma_{xy}>0$, $n\geq 3$ and $\nexists 1 \leq i < n,~V^i \leftarrow V^{i+1}$ (\ie, $\forall i,~V^i \rightarrow V^{i+1}\text{ or }V^i \rightleftarrows V^{i+1}$).
    \end{enumerate}
    \end{enumerate}
\end{restatable}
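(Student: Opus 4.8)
The plan is to prove the two directions separately, using as the central tool the completeness of the single-door criterion: for a fixed compatible FTCG $\mathcal{G}_f$, which is a DAG with no latent confounding by Assumption~\ref{ass:CausalSufficiency}, the coefficient $\alpha_{X_{t-\gamma_{xy}},Y_t}$ equals the partial regression coefficient $r_{X_{t-\gamma_{xy}} Y_t . \mathcal{Z}_f}$ for any $\mathcal{Z}_f$ that contains no descendant of $Y_t$ and blocks every non-direct path from $X_{t-\gamma_{xy}}$ to $Y_t$, and such a set always exists. Consequently the effect is identifiable \emph{from the SCG} precisely when its value, viewed as a functional of the observed distribution, coincides across all FTCGs compatible with $\mathcal{G}_s$, and I would reduce the whole statement to this invariance.

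For the direction asserting identifiability whenever the right-hand side fails, I would argue by cases. If $X\notin Par(Y,\mathcal{G}_s)$, then by Definition~\ref{def:SCG} no compatible FTCG contains any edge $X_{t'}\to Y_t$, so $\alpha_{X_{t-\gamma_{xy}},Y_t}=0$ uniformly and the effect is trivially identifiable. The substantial case is $X\in Par(Y,\mathcal{G}_s)$ with neither Condition 1 nor Condition 2 holding: here I would exhibit a single \emph{finite} set $\mathcal{Z}_f$, read off from $\mathcal{G}_s$ (roughly, suitable finite families of instants of the ancestors of $X$ and $Y$ that are not descendants of $Y_t$), and prove it is a valid single-door set in \emph{every} compatible FTCG. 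The negation of Condition 2 is what guarantees that every active non-direct path in any compatible FTCG has an intermediate vertex outside $Desc(Y,\mathcal{G}_s)$, hence can be blocked without selection bias; the split between 2(a) and 2(b) reflects that for $\gamma_{xy}>0$ only fully directed such paths are unblockable, so more paths can be neutralized by conditioning on present instants. The negation of Condition 1 controls the self-feedback of $X$, ensuring that the mediator instants created by cycles through $X$ (\eg, $X_t$) are not themselves descendants of $Y_t$, so conditioning on them does not open a collider. Transferring blocking from the SCG to the FTCG would use the path-lifting argument sketched after Definition~\ref{def:BlockedPathInSCG}.

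For the converse, I would prove non-identifiability by exhibiting, in each witnessing situation, two FTCGs $\mathcal{G}_f^1,\mathcal{G}_f^2$ compatible with $\mathcal{G}_s$ together with structural coefficients inducing the \emph{same} observed distribution but \emph{different} values of $\alpha_{X_{t-\gamma_{xy}},Y_t}$, following the templates in Figures~\ref{fig:cond1}, \ref{fig:cond2a} and \ref{fig:cond2b}. Under Condition 2 the witnessing active non-direct path whose interior lies in $Desc(Y,\mathcal{G}_s)$ becomes, in one compatible FTCG, a genuine confounding or mediating path that can only be blocked by adjusting on a descendant of $Y_t$, while a second compatible FTCG omits that path entirely; matching the two covariance matrices then forces the two structural coefficients to differ. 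Under Condition 1 the cycle through $X$ avoiding $Y$ lets the instant $X_t$ be a descendant of $Y_t$ in one FTCG (so it must not be adjusted) while being a mandatory blocker of the mediated path $X_{t-\gamma_{xy}}\to\cdots\to X_t\to\cdots\to Y_t$ in another; again equating distributions yields distinct effects. Verifying that both graphs are genuinely compatible with $\mathcal{G}_s$ (respecting $\rightleftarrows$ edges and Assumption~\ref{ass:AcyclicFTCG}) and carrying out the moment-matching are the computational content.

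I expect the main obstacle to be the identifiability direction: showing that one \emph{finite} set simultaneously blocks all non-direct paths in the \emph{infinite} compatible FTCGs while avoiding every descendant of $Y_t$. This requires a careful correspondence between walks in $\mathcal{G}_s$ and paths in $\mathcal{G}_f$ and a precise exploitation of the $\gamma_{xy}=0$ versus $\gamma_{xy}>0$ asymmetry encoded in Conditions 2(a) and 2(b). The moment-matching in the non-identifiability direction is delicate but conceptually routine once the two edge configurations are fixed.
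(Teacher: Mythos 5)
Your overall architecture parallels the paper's (a finite adjustment set valid in every compatible FTCG for the identifiable direction; a pair of conflicting compatible FTCGs for the non-identifiable direction), but both directions as you describe them have genuine gaps. For the forward direction, the entire content of the theorem is the explicit finite set and the proof that it blocks every non-direct path in every compatible FTCG, and your sketch both omits this and mis-states the mechanism. The negation of Condition~\ref{item:2} does \emph{not} guarantee that every active non-direct FTCG path has an interior vertex outside $Desc(Y,\mathcal{G}_s)$: paths that dip into time slices strictly before $t$ can have interiors lying entirely in $Desc(Y,\mathcal{G}_s)$ and are blocked only because the paper's set (Definition~\ref{def:huge-single-door_set}) deliberately contains the \emph{past} instants $\mathcal{D}_{<t}$ of SCG-descendants of $Y$, combined with a time-window argument (Lemma~\ref{lem:blocked_when_tmax>t_or_tmin<t-gammaxy-gammamax}). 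Likewise, the negation of Condition~\ref{item:1} does not ensure that ``mediator instants created by cycles through $X$'' such as $X_t$ are non-descendants of $Y_t$: in the disjunct where $X \in Desc(Y,\mathcal{G}_s)$ but every cycle through $X$ contains $Y$, $X_t$ may well be a descendant of $Y_t$ in some compatible FTCG, and the blocking instead comes from the fact that the corresponding FTCG path must traverse a strictly earlier time slice (e.g.\ a past instant of $Y$), which is exactly what the paper's reduction from FTCG paths to SCG \emph{walks} and their primary paths (Lemmas~\ref{lem:identifiability:BlocksWalksNotinDescendantsY}--\ref{lem:identifiability:BlocksWhenGamma>0Andn=2} and Properties~\ref{prop:primpath1}--\ref{prop:primpath4}) is built to handle. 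The case of FTCG paths whose SCG projection revisits $X$ — precisely the case that forces Condition~\ref{item:1} into the statement — is not addressed by your plan at all.

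For the backward direction you propose something strictly harder than what the paper does: constructing, for each witnessing configuration, coefficient assignments on two compatible FTCGs that induce the \emph{same} observational distribution but different $\alpha_{X_{t-\gamma_{xy}},Y_t}$. The paper never performs this moment matching; consistent with its definition of graphical identifiability, it argues purely graphically that in one compatible FTCG every valid adjustment set must contain some time-$t$ interior instant of the witnessing path or cycle, while in a second compatible FTCG those same instants are descendants of $Y_t$ and hence forbidden, so no adjustment set works uniformly across compatible FTCGs. Your distributional route would prove a stronger statement, but its crux — matching the full stationary covariance structure of two infinite FTCGs while forcing the target coefficient to differ — is exactly the step you defer as ``conceptually routine,'' and it is not routine under Assumption~\ref{ass:ConsistencyThroughoutTime}; as written, that direction remains unproven. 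Either carry out the parameter construction explicitly or retreat to the paper's adjustment-set conflict argument.
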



In the following, we provide an additional definition that will be utilized in proving Theorem~\ref{theorem:identifiability}, followed by the proof itself.
A more detailed proof of Theorem~\ref{theorem:identifiability} is also available in the supplementary materials.
    
\begin{definition}[A First Finite Adjustment Set]
    \label{def:huge-single-door_set}
    Consider an SCG $\mathcal{G}_s=(\mathcal{V}_s,\mathcal{E}_s)$, a maximal lag $\gamma_{max}$, two vertices $X$ and $Y$ with $X\in Par(Y,\mathcal{G}_s)$ and a lag $\gamma_{xy}$. $\mathcal{Z}_{f}^{\text{Def~\ref{def:huge-single-door_set}}}= \mathcal{A}_{\leq t} \cup \mathcal{D}_{<t}$ is an adjustment set relative to $(X_{t-\gamma_{xy}}, Y_t)$ such that:
\begin{align*}
        \mathcal{D}_{<t} &= \{V_{t'} &|& V \in Desc(Y, \mathcal{G}_s),& \\
        & & &t-\gamma_{max} \leq t' < t\} \backslash \{X_{t-\gamma_{xy}}\}& \text{ and }\\
        \mathcal{A}_{\leq t} &= \{V_{t'} &|& V \in \mathcal{V}_{s} \backslash Desc(Y, \mathcal{G}_s),&  \\
        & & &t-\gamma_{max} \leq t' \leq t\} \backslash \{X_{t-\gamma_{xy}}\}.&
\end{align*}
\end{definition}

\begin{proof}
    Firstly, let us prove the backward implication.

    Let $\gamma_{xy} \geq 0$ and $X\in Par(Y,\mathcal{G}_s)$. 
    \begin{itemize}
        
        \item Suppose Condition~\ref{item:1} holds.
        Let $C=\langle V^1,\dots,V^{n} \rangle$ be a cycle on $X$ with $Y\notin C$.
        If $\gamma_{xy} > 0$ and since $X\in Par(Y,\mathcal{G}_s)$, there exists a compatible FTCG $\mathcal{G}_f^1$ where there exists a directed path $\pi_f^1 = \langle V^1_{t-\gamma_{xy}}=X_{t-\gamma_{xy}},V^2_t,\dots,V^{n}_t={X_t},Y_t \rangle$. Obviously, each set of vertices $\mathcal{Z}_f^1\subset \mathcal{V}_f^1$ that blocks all non-direct paths between $X_{t-\gamma_{xy}}$ and $Y_t$ have to contain at least one vertex from $\langle V^2_t,\dots,V^{n-1}_t \rangle$.
        Similarly, since $X\in Desc(Y, \mathcal{G}_s)$,
        there exists a compatible FTCG $\mathcal{G}_f^2$ where $\{V^2_t,\dots,V^{n}_t={X_t}\} \subseteq Desc(Y_t, \mathcal{G}_f^2)$. Thus each set of vertices $\mathcal{Z}_f^2\subset \mathcal{V}_f^2$ that blocks all non-direct paths between $X_{t-\gamma_{xy}}$ and $Y_t$ should not contain any vertex from $\langle V^2_t,\dots,V^{n}_t \rangle$ in order to be a valid adjustment set.
        If $\gamma_{xy}=0$, then since $X\in Desc(Y,\mathcal{G}_s)$ there exists $\langle V^n,\dots,V^1 \rangle$ a directed path from $Y$ to $X$ and there exists $\pi_f = \langle V^1_{t},\dots,V^{n}_t \rangle$ in a compatible FTCG.
        If $n=2$, then $\pi_f=\langle V^1_t \leftarrow V^n_t \rangle$ cannot be blocked and if $n\geq3$, then every set $\mathcal{Z}_f$ that blocks this path contains a descendant of $Y_t$ in another compatible FTCG.
        
        \item Suppose Condition~\ref{item:2} holds.
        Let $\pi_s = \langle V^1,\dots,V^n \rangle$ as described.
        If $\pi_s = \langle X \rightleftarrows Y \rangle$ and $\gamma_{xy}=0$, then there exists a compatible FTCG $\mathcal{G}_f$ in which the path $\pi_f = \langle X_t \leftarrow Y_t \rangle$ exists and cannot be blocked.
        Else, $n \geq 3$ and $\pi_s$ is active so there exists a compatible FTCG $\mathcal{G}_f^1$ in which the path $\pi_f = \langle V^1_{t-\gamma_{xy}},V^2_t,\dots,V^{n-1}_t,V^n_t \rangle$ exists and is active.
        Notice that there exists another compatible FTCG $\mathcal{G}_f^2$ in which $\{V^2_t,\dots,V^{n-1}_t\} \subseteq Desc(Y_t,\mathcal{G}_f^2)$ and every set $\mathcal{Z}_f$ that blocks $\pi_f$ in $\mathcal{G}_f^1$ contains a vertex in $Desc(Y_t,\mathcal{G}_f^2)$.
    \end{itemize}

    Secondly, let us prove the forward implication.
    Let $\mathcal{G}_f$ be an FTCG, $\gamma_{xy}\geq 0$ a lag, $X_{t-\gamma_{xy}}$ and $Y_t$ two vertices and $\mathcal{G}_s$ the SCG compatible with $\mathcal{G}_f$.
    If $X\notin Par(Y,\mathcal{G}_s)$ then 
    there is no direct effect from $X_{t-\gamma_{xy}}$ to $Y_t$, \ie, $\alpha_{X_{t-\gamma_{xy}},Y_t}=0$.
    Suppose that $X\in Par(Y,\mathcal{G}_s)$ and that the conditions of Theorem~\ref{theorem:identifiability} are not verified and let us show that $\mathcal{Z}_f^{\text{Def~\ref{def:huge-single-door_set}}}$ is a valid adjustment set to estimate the direct effect $\alpha_{X_{t-\gamma_{xy}}, Y_t}$ by showing that $\mathcal{Z}_f^{\text{Def~\ref{def:huge-single-door_set}}}$ verifies:
    \begin{itemize}
        \item $\mathcal{Z}_f^{\text{Def~\ref{def:huge-single-door_set}}} \cap (Desc(Y_t,\mathcal{G}_f) \cup \{X_{t-\gamma_{xy}}\}) = \emptyset$, and
        \item $\mathcal{Z}_f^{\text{Def~\ref{def:huge-single-door_set}}}$ blocks every non-direct path from $X_{t-\gamma_{xy}}$ to $Y_t$ in every compatible FTCG $\mathcal{G}_f$ of maximal lag at most $\gamma_{max}$.
    \end{itemize}
    The first point, $\mathcal{Z}_f^{\text{Def~\ref{def:huge-single-door_set}}} \cap (Desc(Y_t,\mathcal{G}_f) \cup \{X_{t-\gamma_{xy}}\}) = \emptyset$ is immediate.
    In order to prove the second point, let $\pi_f = \langle V^1_{t^1},\dots,V^n_{t^n} \rangle$ be a non-direct path from $X_{t-\gamma_{xy}}$ to $Y_t$.
    Notice the following properties:
    \begin{itemize}
    \item If $\exists i_{max} = max\{1 < i < n | t^i <t\}$ then $t-\gamma_{xy} \leq t^{i_{max}} < t$ and $V^{i_{max}}_{t^{i_{max}}} \rightarrow V^{i_{max}+1}_{t^{i_{max}+1}}$ thus $\pi_f$ is blocked by $\mathcal{Z}_f^{\text{Def~\ref{def:huge-single-door_set}}}$ as $V^{i_{max}}_{t^{i_{max}}} \in \mathcal{Z}_f^{\text{Def~\ref{def:huge-single-door_set}}}$.
    \item Else, if $I=\{1 < i < n | t^i > t\} \neq \emptyset$ then $\exists i \in I$ such that $\rightarrow V^i_{t^i} \leftarrow$ in $\pi_f$ and $\pi_f$ is $\mathcal{Z}_f^{\text{Def~\ref{def:huge-single-door_set}}}$-blocked as $Desc(V^i_{t^i},\mathcal{G}_f) \cap \mathcal{Z}_f^{\text{Def~\ref{def:huge-single-door_set}}} = \emptyset$.
    \item Else, if $\exists i_{max} = max\{1 < i < n | V^i \notin Desc(Y,\mathcal{G}_s)\}$ then $t^{i_{max}} = t$ and $V^{i_{max}}_{t^{i_{max}}} \rightarrow V^{i_{max}+1}_{t^{i_{max}+1}}$ thus $\pi_f$ is blocked by $\mathcal{Z}_f^{\text{Def~\ref{def:huge-single-door_set}}}$ as $V^{i_{max}}_{t^{i_{max}}} \in \mathcal{Z}_f^{\text{Def~\ref{def:huge-single-door_set}}}$.
    \end{itemize}
    Therefore, if $\pi_f$ is $\mathcal{Z}_f^{\text{Def~\ref{def:huge-single-door_set}}}$-active then $\langle V^2,\dots,V^n \rangle \subseteq Desc(Y,\mathcal{G}_s)$ and either $\pi_f=\langle X_{t-\gamma_{xy}} \leftarrow Y_t \rangle$ which forces $\gamma_{xy} = 0$ or $n\geq 3$ and $t^2=\dots=t^{n-1}= t$.
    Notice that $\pi_f$ and $\pi_s = \langle V^1,\dots,V^n \rangle$ are active and non-direct.
    \begin{itemize}
        \item Suppose $\pi_f=\langle X_{t} \leftarrow Y_t \rangle$ ($\gamma_{xy}=0$). Given that $X\in Par(Y,\mathcal{G}_s)$, this means that $\langle X\rightleftarrows Y \rangle$ is a path in $\mathcal{G}_s$ and thus Condition~\ref{item:2.1} is verified.
    \item 
    Suppose $n\geq 3$ and $t^2=\dots=t^{n-1}= t$.
    We make the following \emph{observation}: \emph{if $\gamma_{xy}>0$, then $t -\gamma_{xy} = t^1 < t^2 = t$ so $V^1_{t^1} \rightarrow V^2_{t^2}$ and given that $\pi_f$ is active, $\forall 1 \leq i < n,~V^i_{t^i} \rightarrow V^{i+1}_{t^{i+1}}$ and $\nexists 1 \leq i < n,~V^i \leftarrow V^{i+1}$.}
    In addition, since $\pi_f$ is a path and $t^2=\dots=t^{n-1}= t$, either $\pi_s$ is a path or $\gamma_{xy}>0$ and $\exists 1<i<n \st V^i=X$ and $\forall 1\leq j_1<j_2 \leq n,~ V^{j_1}=V^{j_2}\implies j_1=1$ and $j_2=i$.
    
    \begin{itemize}
        \item 
    Suppose $\pi_s$ is a path. In the case $\gamma_{xy} = 0$, Condition~\ref{item:2.1} would be verified. In the case $\gamma_{xy}>0$, by the previous \emph{observation}, Condition~\ref{item:2.2} would be verified.
    
    \item
    Suppose $\gamma_{xy}>0$ and $\exists 1<i<n \st V^i=X$ and $\forall 1\leq j_1<j_2 \leq n,~ V^{j_1}=V^{j_2}\implies j_1=1$ and $j_2=i$. Notice that $X=V^i \in Desc(Y,\mathcal{G}_s)$ and that, using the previous \emph{observation}, $Y\notin \langle V^1,\dots,V^i \rangle\in Cycles(X,\mathcal{G}_s)$.
    Thus Condition~\ref{item:1} is verified.
    \end{itemize}
    \end{itemize}
    In conclusion, when the conditions of Theorem~\ref{theorem:identifiability} are not verified, there is no non-direct $\mathcal{Z}_f^{\text{Def~\ref{def:huge-single-door_set}}}$-active path $\pi_f$ between $X_{t-\gamma_{xy}}$ and $Y_t$.
\end{proof}

\begin{figure*}[ht!]
   \begin{subfigure}{\myscale\textwidth}
	\begin{tikzpicture}[{black, circle, draw, inner sep=0}]
	\tikzset{nodes={draw,rounded corners},minimum height=0.6cm,minimum width=0.6cm}	
	\tikzset{anomalous/.append style={fill=easyorange}}
	\tikzset{rc/.append style={fill=easyorange}}
	
	\node[fill=red!30] (X) at (0*\mysecondscale,0*\mysecondscale) {$X$} ;
	\node[fill=blue!30] (Y) at (2*\mysecondscale,0*\mysecondscale) {$Y$};
	\node (W) at (2*\mysecondscale,1.5*\mysecondscale) {$W$};
	\node (U) at (0*\mysecondscale,1.5*\mysecondscale) {$U$};
	\node (Z) at (1*\mysecondscale,0.75*\mysecondscale) {$Z$};
	
	\draw[->,>=latex, line width=2pt] (X) -- (Y);
        \draw [->,>=latex,] (U) --  (X);
        \begin{scope}[transform canvas={xshift=-.15em}]
          \draw [->,>=latex,] (Y) --  (W);
        \end{scope}
         \begin{scope}[transform canvas={xshift=.15em}]
          \draw [<-,>=latex,] (Y) --  (W);
        \end{scope}
         \begin{scope}[transform canvas={xshift=-.1em, yshift=.1em}]
          \draw [->,>=latex,] (Z) --  (W);
        \end{scope}
         \begin{scope}[transform canvas={xshift=.1em, yshift=-.1em}]
          \draw [<-,>=latex,] (Z) --  (W);
        \end{scope}	
        \draw[->,>=latex] (U) -- (Z);

	\draw[->,>=latex] (X) to [out=195,in=150, looseness=2] (X);
	\draw[->,>=latex] (Y) to [out=-15,in=30, looseness=2] (Y);
	\draw[->,>=latex] (Z) to [out=-30,in=15, looseness=2] (Z);
	\draw[->,>=latex] (U) to [out=180,in=135, looseness=2] (U);
	\draw[->,>=latex] (W) to [out=0,in=45, looseness=2] (W);
	
	\end{tikzpicture}
 	\caption{}
	\label{fig:identifiable_1}
  \end{subfigure}
    \hfill 
   \begin{subfigure}{\myscale\textwidth}
	\begin{tikzpicture}[{black, circle, draw, inner sep=0}]
	\tikzset{nodes={draw,rounded corners},minimum height=0.6cm,minimum width=0.6cm}	
	\tikzset{anomalous/.append style={fill=easyorange}}
	\tikzset{rc/.append style={fill=easyorange}}
	
	\node[fill=red!30] (X) at (0*\mysecondscale,0*\mysecondscale) {$X$} ;
	\node[fill=blue!30] (Y) at (2*\mysecondscale,0*\mysecondscale) {$Y$};
	\node (W) at (2*\mysecondscale,1.5*\mysecondscale) {$W$};
	\node (U) at (0*\mysecondscale,1.5*\mysecondscale) {$U$};
	\node (Z) at (1*\mysecondscale,0.75*\mysecondscale) {$Z$};

	\draw[->,>=latex, line width=2pt] (X) -- (Y);
        \draw [->,>=latex,] (U) --  (X);
        \begin{scope}[transform canvas={xshift=-.15em}]
          \draw [->,>=latex,] (Y) --  (W);
        \end{scope}
         \begin{scope}[transform canvas={xshift=.15em}]
          \draw [<-,>=latex,] (Y) --  (W);
        \end{scope}
         \begin{scope}[transform canvas={xshift=-.1em, yshift=.1em}]
          \draw [->,>=latex,] (Z) --  (W);
        \end{scope}
         \begin{scope}[transform canvas={xshift=.1em, yshift=-.1em}]
          \draw [<-,>=latex,] (Z) --  (W);
        \end{scope}	
        \draw[->,>=latex] (U) -- (Z);

	\draw[->,>=latex] (Y) to [out=-15,in=30, looseness=2] (Y);
	\draw[->,>=latex] (Z) to [out=-30,in=15, looseness=2] (Z);
	\draw[->,>=latex] (U) to [out=180,in=135, looseness=2] (U);
	\draw[->,>=latex] (W) to [out=0,in=45, looseness=2] (W);
	
	\end{tikzpicture}
 	\caption{}
	\label{fig:identifiable_2}
  \end{subfigure}
      \hfill 
   \begin{subfigure}{\myscale\textwidth}
	\begin{tikzpicture}[{black, circle, draw, inner sep=0}]
	\tikzset{nodes={draw,rounded corners},minimum height=0.6cm,minimum width=0.6cm}	
	\tikzset{anomalous/.append style={fill=easyorange}}
	\tikzset{rc/.append style={fill=easyorange}}
	
	\node[fill=red!30] (X) at (0*\mysecondscale,0*\mysecondscale) {$X$} ;
	\node[fill=blue!30] (Y) at (2*\mysecondscale,0*\mysecondscale) {$Y$};
	\node (W) at (2*\mysecondscale,1.5*\mysecondscale) {$W$};
	\node (U) at (0*\mysecondscale,1.5*\mysecondscale) {$U$};
	\node (Z) at (1*\mysecondscale,0.75*\mysecondscale) {$Z$};

	\draw[->,>=latex, line width=2pt] (X) -- (Y);
        \draw [<-,>=latex,] (U) --  (X);
        \begin{scope}[transform canvas={xshift=-.15em}]
          \draw [->,>=latex,] (Y) --  (W);
        \end{scope}
         \begin{scope}[transform canvas={xshift=.15em}]
          \draw [<-,>=latex,] (Y) --  (W);
        \end{scope}
         \begin{scope}[transform canvas={xshift=-.1em, yshift=.1em}]
          \draw [->,>=latex,] (Z) --  (W);
        \end{scope}
         \begin{scope}[transform canvas={xshift=.1em, yshift=-.1em}]
          \draw [<-,>=latex,] (Z) --  (W);
        \end{scope}	
        \draw[->,>=latex] (U) -- (Z);

	\draw[->,>=latex] (X) to [out=195,in=150, looseness=2] (X);
	\draw[->,>=latex] (Y) to [out=-15,in=30, looseness=2] (Y);
	\draw[->,>=latex] (Z) to [out=-30,in=15, looseness=2] (Z);
	\draw[->,>=latex] (U) to [out=180,in=135, looseness=2] (U);
	\draw[->,>=latex] (W) to [out=0,in=45, looseness=2] (W);
	
	\end{tikzpicture}
 	\caption{}
	\label{fig:identifiable_3}
  \end{subfigure}
    \hfill 
   \begin{subfigure}{\myscale\textwidth}
	\begin{tikzpicture}[{black, circle, draw, inner sep=0}]
	\tikzset{nodes={draw,rounded corners},minimum height=0.6cm,minimum width=0.6cm}	
	\tikzset{anomalous/.append style={fill=easyorange}}
	\tikzset{rc/.append style={fill=easyorange}}
	
	\node[fill=red!30] (X) at (0*\mysecondscale,0*\mysecondscale) {$X$} ;
	\node[fill=blue!30] (Y) at (2*\mysecondscale,0*\mysecondscale) {$Y$};
	\node (W) at (2*\mysecondscale,1.5*\mysecondscale) {$W$};
	\node (U) at (0*\mysecondscale,1.5*\mysecondscale) {$U$};
	\node (Z) at (1*\mysecondscale,0.75*\mysecondscale) {$Z$};

	\draw[->,>=latex, line width=2pt] (X) -- (Y);
        \draw [<-,>=latex,] (U) --  (X);
        \begin{scope}[transform canvas={xshift=-.15em}]
          \draw [->,>=latex,] (Y) --  (W);
        \end{scope}
         \begin{scope}[transform canvas={xshift=.15em}]
          \draw [<-,>=latex,] (Y) --  (W);
        \end{scope}	
         \begin{scope}[transform canvas={xshift=-.1em, yshift=.1em}]
          \draw [->,>=latex,] (Z) --  (W);
        \end{scope}
         \begin{scope}[transform canvas={xshift=.1em, yshift=-.1em}]
          \draw [<-,>=latex,] (Z) --  (W);
        \end{scope}	
        \draw[->,>=latex] (U) -- (Z);

	\draw[->,>=latex] (Y) to [out=-15,in=30, looseness=2] (Y);
	\draw[->,>=latex] (Z) to [out=-30,in=15, looseness=2] (Z);
	\draw[->,>=latex] (U) to [out=180,in=135, looseness=2] (U);
	\draw[->,>=latex] (W) to [out=0,in=45, looseness=2] (W);
	
	\end{tikzpicture}
 	\caption{}
	\label{fig:identifiable_4}
  \end{subfigure}
    \hfill 
   \begin{subfigure}{\myscale\textwidth}
	\begin{tikzpicture}[{black, circle, draw, inner sep=0}]
	\tikzset{nodes={draw,rounded corners},minimum height=0.6cm,minimum width=0.6cm}	
	\tikzset{anomalous/.append style={fill=easyorange}}
	\tikzset{rc/.append style={fill=easyorange}}
	
	\node[fill=red!30] (X) at (0*\mysecondscale,0*\mysecondscale) {$X$} ;
	\node[fill=blue!30] (Y) at (2*\mysecondscale,0*\mysecondscale) {$Y$};
	\node (W) at (2*\mysecondscale,1.5*\mysecondscale) {$W$};
	\node (U) at (0*\mysecondscale,1.5*\mysecondscale) {$U$};
	\node (Z) at (1*\mysecondscale,0.75*\mysecondscale) {$Z$};

	\draw[->,>=latex, line width=2pt] (X) -- (Y);
        \draw [<-,>=latex,] (U) --  (X);
        \begin{scope}[transform canvas={xshift=-.15em}]
          \draw [->,>=latex,] (Y) --  (W);
        \end{scope}
         \begin{scope}[transform canvas={xshift=.15em}]
          \draw [<-,>=latex,] (Y) --  (W);
        \end{scope}	
         \begin{scope}[transform canvas={xshift=-.1em, yshift=.1em}]
          \draw [->,>=latex,] (Z) --  (W);
        \end{scope}
         \begin{scope}[transform canvas={xshift=.1em, yshift=-.1em}]
          \draw [<-,>=latex,] (Z) --  (W);
        \end{scope}	
        \draw[->,>=latex] (Z) -- (U);

        \draw[->,>=latex] (X) to [out=195,in=150, looseness=2] (X);
	\draw[->,>=latex] (Y) to [out=-15,in=30, looseness=2] (Y);
	\draw[->,>=latex] (Z) to [out=-30,in=15, looseness=2] (Z);
	\draw[->,>=latex] (U) to [out=180,in=135, looseness=2] (U);
	\draw[->,>=latex] (W) to [out=0,in=45, looseness=2] (W);
	
	\end{tikzpicture}
 	\caption{}
	\label{fig:identifiable_5}
  \end{subfigure}
    \hfill 
   \begin{subfigure}{\myscale\textwidth}
	\begin{tikzpicture}[{black, circle, draw, inner sep=0}]
	\tikzset{nodes={draw,rounded corners},minimum height=0.6cm,minimum width=0.6cm}	
	\tikzset{anomalous/.append style={fill=easyorange}}
	\tikzset{rc/.append style={fill=easyorange}}
	
	\node[fill=red!30] (X) at (0*\mysecondscale,0*\mysecondscale) {$X$} ;
	\node[fill=blue!30] (Y) at (2*\mysecondscale,0*\mysecondscale) {$Y$};
	\node (W) at (2*\mysecondscale,1.5*\mysecondscale) {$W$};
	\node (U) at (0*\mysecondscale,1.5*\mysecondscale) {$U$};
	\node (Z) at (1*\mysecondscale,0.75*\mysecondscale) {$Z$};

	\draw[->,>=latex, line width=2pt] (X) -- (Y);
        \draw [<-,>=latex,] (U) --  (X);
        \begin{scope}[transform canvas={xshift=-.15em}]
          \draw [->,>=latex,] (Y) --  (W);
        \end{scope}
         \begin{scope}[transform canvas={xshift=.15em}]
          \draw [<-,>=latex,] (Y) --  (W);
        \end{scope}	
         \begin{scope}[transform canvas={xshift=-.1em, yshift=.1em}]
          \draw [->,>=latex,] (Z) --  (W);
        \end{scope}
         \begin{scope}[transform canvas={xshift=.1em, yshift=-.1em}]
          \draw [<-,>=latex,] (Z) --  (W);
        \end{scope}	
        \draw[->,>=latex] (Z) -- (U);

	\draw[->,>=latex] (Y) to [out=-15,in=30, looseness=2] (Y);
	\draw[->,>=latex] (Z) to [out=-30,in=15, looseness=2] (Z);
	\draw[->,>=latex] (U) to [out=180,in=135, looseness=2] (U);
	\draw[->,>=latex] (W) to [out=0,in=45, looseness=2] (W);
	
	\end{tikzpicture}
 	\caption{}
	\label{fig:identifiable_6}
  \end{subfigure}

     \caption{Examples of SCGs with $5$ vertices where the direct effect $\alpha_{X_{t-\gamma_{xy}},Y_t}$ is identifiable for all $\gamma_{xy}$. Red and blue vertices respectively represent the cause and the effect we are interested in and the thick edge corresponds to the the edge between them. All SCGs share the same skeleton, the edges $X\rightarrow Y$, $Y\leftrightarrows W$, and $Z\leftrightarrows W$ and the cycles of size $2$ on $Y, W,Z$ and $U$.}
    \label{fig:identifiable}
\end{figure*}
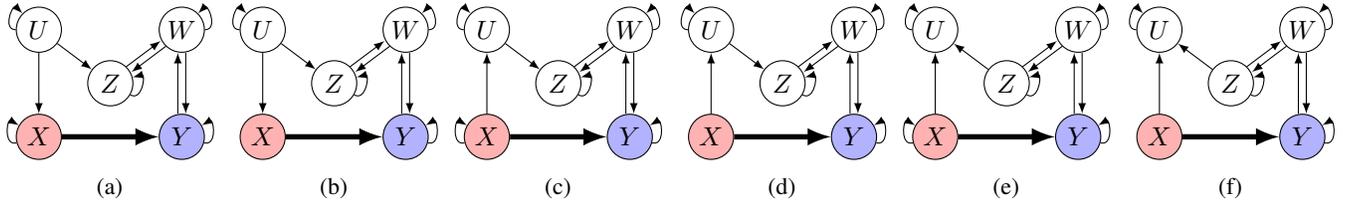%

To graphically illustrate the backward implication of Theorem~\ref{theorem:identifiability}, we consider three different examples which respectively correspond to Figures~\ref{fig:cond1},~\ref{fig:cond2a}, and \ref{fig:cond2b} and to Conditions~\ref{item:1},~\ref{item:2.1}, and \ref{item:2.2} of the theorem.

\begin{example}
\label{example:1}
Given the SCG in Figure~\ref{fig:cond1:SCG1}, $\alpha_{X_{t-1},Y_t}$ is non-identifiable since Condition~\ref{item:1} of  Theorem~\ref{theorem:identifiability} is satisfied. This can be illustrated by looking at the  two FTCGs in Figures~\ref{fig:cond1:FTCG1-1} and \ref{fig:cond1:FTCG1-2} which are compatible with the given SCG. In the first FTCG, it is obvious that it is important to adjust on $X_t$ in order to identify $\alpha_{X_{t-1},Y_t}$ (to block the path $\langle X_{t-1}, X_t, Y_t \rangle$). However, in the other FTCG, $X_t$ is a descendant of $Y_t$ therefore it is important not to adjust on it. Which means that without knowing which is the true FTCG and given only the SCG, $\alpha_{X_{t-1},Y_t}$ is non-identifiable. 
Notice that $\alpha_{X_{t-1},Y_t}$ remains non-identifiable if we remove the cycle of size $2$ on $X$ and we replace $X \leftarrow U$ by $X \leftrightarrows U$ since $X \leftrightarrows U$ induces a cycle of size $3$. This cycle means that there might exists an active path $\langle X_{t-1}\rightarrow U_t \rightarrow X_t \rightarrow Y_t\rangle$ in one FTCG and that $\{U_t, X_t\}$ can be descendants of $Y_t$ in another FTCG.

Similarly, given the SCG in Figure~\ref{fig:cond1:SCG2}, $\alpha_{X_{t},Y_t}$ is non-identifiable because in the first FTCG, at least one vertex in $\{U_t, Z_t, W_t\}$ should be in every valid adjustment set but in the second FTCG, these vertices are descendants of $Y_t$. 

\end{example}

\begin{example}
\label{example:2}
Given the SCG in Figure~\ref{fig:cond2a:SCG} (which is similar to the SCGs considered in Example~\ref{example:1} but where the cycle of size $2$ on $X$ is removed so Condition~\ref{item:1} of  Theorem~\ref{theorem:identifiability} is no longer satisfied), $\alpha_{X_{t},Y_t}$ is non-identifiable since Condition~\ref{item:2.1} of Theorem~\ref{theorem:identifiability} is  satisfied. This can be illustrated as in Example~\ref{example:1}.
Notice that in this case $\alpha_{X_{t-1},Y_t}$ becomes identifiable as the path $\langle X_{t-1}, X_t, Y_t \rangle$ can no longer exist.
\end{example}

\begin{example}
\label{example:3}
Given the SCG in Figure~\ref{fig:cond2b:SCG} (which is similar to the SCGs considered in Example~\ref{example:2} but where the orientation between $X$ and $U$ is reversed), $\alpha_{X_{t-1},Y_t}$ is non-identifiable since Condition~\ref{item:2.2} of Theorem~\ref{theorem:identifiability} is satisfied. 
This can be illustrated by looking at the  two FTCGs in Figures~\ref{fig:cond2b:FTCG1} and \ref{fig:cond2b:FTCG2}. In the first FTCG, at least one vertex in $\{U_t, Z_t, W_t\}$ should be in every valid adjustment set  but in the second FTCG, these vertices are
descendants of $Y_t$. 
Notice that we have the same result for $\alpha_{X_{t},Y_t}$.
\end{example}

We  give additional examples of SCGs in Figure~\ref{fig:identifiable} where the direct effect $\alpha_{X_{t-\gamma_{xy}},Y_t}$ is identifiable for all $\gamma_{xy}$.
Note that  $\alpha_{X_{t-\gamma_{xy}},Y_t}$ remains identifiable in Figures~\ref{fig:identifiable_1},\ref{fig:identifiable_2},\ref{fig:identifiable_3},\ref{fig:identifiable_4} even if we replace $X\rightarrow U$ or $X \leftarrow U$ by $X\leftrightarrows U$.

It is important to highlight that Theorem~\ref{theorem:identifiability} encompasses the identifiability result presented in \cite{Assaad_2023}. Specifically, when there are no cycles of size greater than $2$ in the SCG, conditions 1 and 2 of the theorem are not satisfied, indicating the identifiability of the direct effect.
Interestingly, Theorem~\ref{theorem:identifiability} also shows that under some conditions we can directly know that there exist at least one non-identifiable instantaneous direct effect. This is given by the following Corollary.

\begin{restatable}{corollary}{mycorollaryone}
\label{Cor:cycle_implies_non_identifiability}
    Let $\mathcal{G}_s = (\mathcal{V}_s, \mathcal{E}_s)$ be an SCG. If there exists a cycle in $\mathcal{G}_s$ of length strictly greater than $2$, then $\exists \alpha_{X_t,Y_t}$ which is non-identifiable.
\end{restatable}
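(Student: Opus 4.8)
The plan is to reduce the statement directly to Condition~\ref{item:2} (with the instantaneous sub-case~\ref{item:2.1}) of Theorem~\ref{theorem:identifiability}, by extracting from the given long cycle a concrete pair $(X,Y)$ together with an explicit witnessing path, and then taking the lag $\gamma_{xy}=0$.

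First I would fix a cycle $C=\langle A_1,A_2,\dots,A_m,A_1\rangle$ of $\mathcal{G}_s$ of length $m\geq 3$ (``length strictly greater than $2$'' means at least three distinct vertices), where each consecutive edge is either $\rightarrow$ or $\rightleftarrows$. I would then set $X:=A_1$ and $Y:=A_2$. Since the cycle edge between $A_1$ and $A_2$ is $A_1\rightarrow A_2$ or $A_1\rightleftarrows A_2$, we get $X\in Par(Y,\mathcal{G}_s)$ immediately, which is the standing hypothesis of the theorem, and $X\neq Y$ because the vertices of a cycle are distinct.

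The core step is to build an \emph{active} non-direct path from $X$ to $Y$ all of whose internal vertices are descendants of $Y$. The natural candidate is the cycle traversed ``the other way'', i.e.\ the reversal of the return sub-path $A_2\rightarrow A_3\rightarrow\cdots\rightarrow A_m\rightarrow A_1$, namely $\pi_s=\langle A_1,A_m,A_{m-1},\dots,A_3,A_2\rangle$. This is a genuine (simple) path since the $A_k$ are pairwise distinct, and it is non-direct because $m\geq 3$ guarantees at least one internal vertex. Its internal vertices $A_3,\dots,A_m$ all lie in $Desc(Y,\mathcal{G}_s)$, since $Y=A_2\rightarrow A_3\rightarrow\cdots\rightarrow A_m$ is a directed path out of $Y$. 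Once $\pi_s$ is shown to be active, Condition~\ref{item:2} together with Condition~\ref{item:2.1} (choosing $\gamma_{xy}=0$) is satisfied, so Theorem~\ref{theorem:identifiability} yields that $\alpha_{X_t,Y_t}$ is non-identifiable.

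The step I expect to be the main obstacle is verifying that $\pi_s$ is active in the sense of Definition~\ref{def:BlockedPathInSCG} when $C$ contains bidirected edges $\rightleftarrows$, since a careless reading suggests that such an edge might create a collider and block $\pi_s$. The key observation that resolves this is that $\pi_s$ is the reversal of a directed path: reading $\pi_s$ from $A_1$ to $A_2$, every edge is either a strict $\leftarrow$ (coming from a strict cycle edge $\rightarrow$) or a $\rightleftarrows$, and in particular no edge is a strict forward $\rightarrow$. Since the collider pattern of Condition~\ref{item:PassivelyBlockedInSCG}, $\langle V^{i-1}\rightarrow V^i\rightleftarrows\cdots\rightleftarrows V^{j}\leftarrow V^{j+1}\rangle$, requires a strict $\rightarrow$ on its left boundary, no such pattern can occur along $\pi_s$; and with $\mathcal{Z}_s=\emptyset$ the active-blocking Condition~\ref{item:ActivelyBlockedInSCG} cannot apply either, as it needs a conditioned vertex. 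Hence $\pi_s$ has no blocking vertex and is active, completing the argument. I would double-check this reading against the FTCG semantics recalled immediately after Definition~\ref{def:BlockedPathInSCG}, confirming that the strict interpretation of the boundary arrows is the intended one.
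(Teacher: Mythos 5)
Your proof is correct and takes essentially the same route as the paper's: both select $X$ and $Y$ as the first two vertices of the long cycle and use the reversed cycle $\langle A_1, A_m, \dots, A_3, A_2\rangle$ as the witnessing active non-direct path whose internal vertices are descendants of $Y$, so that Condition~\ref{item:2.1} of Theorem~\ref{theorem:identifiability} applies with $\gamma_{xy}=0$. The extra verifications you supply (parenthood of $X$, distinctness of cycle vertices, and activeness despite possible $\rightleftarrows$ edges, via the absence of any strict forward arrow needed for the collider pattern of Definition~\ref{def:BlockedPathInSCG}) are details the paper's one-line proof leaves implicit.
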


\begin{proof}
    Let 
    $\langle V^1,\dots,V^n\rangle$ be a cycle in $\mathcal{G}_s$ with $n \geq 3$. $\alpha_{V^1_t,V^2_t}$ from $V^1_t$ to $V^2_t$ is non-identifiable because the path $\langle V^n,V^{n-1},\dots,V^2\rangle$ from $V^1=V^n$ to $V^2$ in the SCG, verifies Condition~\ref{item:2.1} of Theorem~\ref{theorem:identifiability}.
\end{proof}


\section{Two Sound Adjustment Sets}
\label{sec:two_adjustment_sets}

In this section, we provide two finite adjustment sets that can be used to estimate the direct effect whenever it is identifiable. In an effort to be succinct we only give a sketch for the proof of the soundness of the second adjustment. For the full proof, please refer to the supplementary material.


For a given FTCG $\mathcal{G}_f$, to estimate $\alpha_{X_{t-\gamma_{xy}}, Y_t}$ from data, it is necessary and sufficient to adjust on a finite set $\mathcal{Z}_{f}$ such that $\mathcal{Z}_{f} \cap (Desc(Y_t,\mathcal{G}_f) \cup \{X_{t-\gamma_{xy}}, Y_t\}) = \emptyset$ which blocks every non-direct path from $X_{t-\gamma_{xy}}$ to $Y_{t}$.
Thus, given an SCG $\mathcal{G}_s$,  one needs to find a set $\mathcal{Z}_{f}$ such that $\mathcal{Z}_{f} \cap (Desc(Y_t,\mathcal{G}_f) \cup \{X_{t-\gamma_{xy}}, Y_t\}) = \emptyset$ which blocks every non-direct path from $X_{t-\gamma_{xy}}$ to $Y_{t}$ in every FTCG $\mathcal{G}_f$ compatible with $\mathcal{G}_s$ of maximal lag at most $\gamma_{max}$.


The following corollary formally indicates the soundness of the finite adjustment set defined in Definition~\ref{def:huge-single-door_set}.


\begin{restatable}{corollary}{mycorollarytwo}
\label{Cor:soundness_of_biggest_single-door_set}
    Let $\mathcal{G}_s=(\mathcal{V}_s, \mathcal{E}_s)$ be an SCG, $\gamma_{max} \geq 0$ a maximum lag. Consider two vertices $X$ and $Y$ such that $X\in Par(Y,\mathcal{G}_s)$ and $\alpha_{X_{t-\gamma_{xy}},Y_t}$ is identifiable following Theorem~\ref{theorem:identifiability}.
    Then $\mathcal{Z}_{f}^{\text{Def~\ref{def:huge-single-door_set}}}=\mathcal{A}_{\leq t} \cup \mathcal{D}_{<t}$ as defined in Definition~\ref{def:huge-single-door_set}  is a valid adjustment set for $\alpha_{X_{t-\gamma_{xy}}, Y_t}$.
\end{restatable}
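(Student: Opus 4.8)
The plan is to recognize that this corollary is essentially a repackaging of the forward implication already carried out in the proof of Theorem~\ref{theorem:identifiability}, and to make that dependence explicit. First I would translate the hypothesis into the language of the theorem: since we are given $X \in Par(Y,\mathcal{G}_s)$, Theorem~\ref{theorem:identifiability} tells us that $\alpha_{X_{t-\gamma_{xy}},Y_t}$ being identifiable is logically equivalent to neither Condition~\ref{item:1} nor Condition~\ref{item:2} holding. This places us in exactly the configuration treated in the forward implication of the theorem's proof, where the very same set $\mathcal{Z}_f^{\text{Def~\ref{def:huge-single-door_set}}}$ is analyzed under the assumption that the conditions fail.

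Next I would recall the two properties a finite set $\mathcal{Z}_f$ must satisfy in order to be a valid adjustment set for $\alpha_{X_{t-\gamma_{xy}},Y_t}$, as stated at the opening of Section~\ref{sec:two_adjustment_sets}: (i) $\mathcal{Z}_f \cap (Desc(Y_t,\mathcal{G}_f) \cup \{X_{t-\gamma_{xy}}, Y_t\}) = \emptyset$ in every compatible FTCG $\mathcal{G}_f$ of maximal lag at most $\gamma_{max}$, and (ii) $\mathcal{Z}_f$ blocks every non-direct path from $X_{t-\gamma_{xy}}$ to $Y_t$ in every such FTCG. Both were verified for $\mathcal{Z}_f^{\text{Def~\ref{def:huge-single-door_set}}}$ inside the forward implication of Theorem~\ref{theorem:identifiability}, so the corollary follows once this correspondence is made precise.

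For property (i), I would spell out the reasoning that was dismissed as "immediate" in the theorem's proof: any FTCG-descendant $V_{t'}$ of $Y_t$ projects to an SCG-descendant $V \in Desc(Y,\mathcal{G}_s)$, and by acyclicity together with the forward-in-time orientation of $\mathcal{E}_f$ it satisfies $t' \geq t$. Since $\mathcal{A}_{\leq t}$ only contains vertices whose time series lie outside $Desc(Y,\mathcal{G}_s)$ and $\mathcal{D}_{<t}$ only contains vertices at times strictly below $t$, neither component can meet $Desc(Y_t,\mathcal{G}_f)$; the explicit removal of $\{X_{t-\gamma_{xy}}\}$ and the fact that $Y \in Desc(Y,\mathcal{G}_s)$ (so $Y_t \notin \mathcal{A}_{\leq t}$, while $t' < t$ forces $Y_t \notin \mathcal{D}_{<t}$) dispose of the two remaining excluded vertices.

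The main obstacle is property (ii), the blocking argument, which is precisely the case analysis performed in the theorem's proof: for an arbitrary non-direct path $\pi_f = \langle V^1_{t^1},\dots,V^n_{t^n}\rangle$ one shows that unless $\langle V^2,\dots,V^{n-1}\rangle \subseteq Desc(Y,\mathcal{G}_s)$ and the path projects to a configuration matching Condition~\ref{item:1} or Condition~\ref{item:2}, some intermediate non-collider lying strictly before time $t$, or some non-SCG-descendant occurring at time $t$, falls in $\mathcal{Z}_f^{\text{Def~\ref{def:huge-single-door_set}}}$ and actively blocks $\pi_f$, while any collider occurring after time $t$ is passively blocked since its descendants avoid $\mathcal{Z}_f^{\text{Def~\ref{def:huge-single-door_set}}}$. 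Because the identifiability hypothesis rules out Conditions~\ref{item:1} and~\ref{item:2}, no $\mathcal{Z}_f^{\text{Def~\ref{def:huge-single-door_set}}}$-active non-direct path can survive, establishing (ii). As this is exactly the content of the forward implication of Theorem~\ref{theorem:identifiability}, I expect the proof to reduce to invoking that argument rather than redoing the case analysis.
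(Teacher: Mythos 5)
Your proposal is correct and matches the paper's own argument: the paper's proof of Corollary~\ref{Cor:soundness_of_biggest_single-door_set} consists precisely of invoking the forward implication of Theorem~\ref{theorem:identifiability}, where $\mathcal{Z}_f^{\text{Def~\ref{def:huge-single-door_set}}}$ was shown to avoid $Desc(Y_t,\mathcal{G}_f)\cup\{X_{t-\gamma_{xy}}\}$ and to block every non-direct path whenever the theorem's conditions fail. Your additional elaboration of the ``immediate'' disjointness step and the blocking case analysis is consistent with that argument and does not change the route.
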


\begin{proof}
    The proof of the forward implication of Theorem~\ref{theorem:identifiability} proves this corollary.
\end{proof}

Note that many valid adjustment sets may exist. An estimator of direct effects based on any of these sets is unbiased, but the estimation variance may vary for different sets.
Thus, it is interesting to search for many, and ideally all, such sets in order to optimize the estimation of direct effects. In the following, we give a smaller valid adjustment set. 

\begin{definition}[A Second Finite Adjustment Set]
    \label{def:single-door_set}
    Consider an SCG $\mathcal{G}_s=(\mathcal{V}_s,\mathcal{E}_s)$, a maximal lag $\gamma_{max}$, two vertices $X$ and $Y$ with $X\in Par(Y,\mathcal{G}_s)$ and a lag $\gamma_{xy}$.
    $\mathcal{Z}_{f}^{\text{Def~\ref{def:single-door_set}}} = \mathcal{D}^{Anc(Y)}_{<t} \cup \mathcal{A}^{Anc(Y)}_{\leq t}$ is an adjustment set relative to $(X_{t-\gamma_{xy}}, Y_t)$ such that:
    \begin{align*}
        \mathcal{D}^{Anc(Y)}_{<t} &= \{V_{t'} &|& V \in Anc(Y,\mathcal{G}_s) \cap Desc(Y,\mathcal{G}_s),& 
        \\ & & &~t-\gamma_{max} \leq t' < t\} \backslash \{X_{t-\gamma_{xy}}\}& \text{ and}\\
        \mathcal{A}^{Anc(Y)}_{\leq t} &= \{V_{t'} &|& V \in Anc(Y,\mathcal{G}_s) \backslash Desc(Y,\mathcal{G}_s),&
        \\ & & & ~t-\gamma_{max} \leq t' \leq t\} \backslash \{X_{t-\gamma_{xy}}\}.&
    \end{align*}
\end{definition}

The following proposition formally indicates the soundness of the finite adjustment set defined in Definition~\ref{def:single-door_set}.

\begin{restatable}{proposition}{mypropositionone}
    \label{prop:soundness_singledoor}
    Let $\mathcal{G}_s=(\mathcal{V}_s, \mathcal{E}_s)$ be an SCG, $\gamma_{max} \geq 0$ a maximum lag. Consider two vertices $X$ and $Y$ such that $X\in Par(Y,\mathcal{G}_s)$ and $\alpha_{X_{t-\gamma_{xy}},Y_t}$ is identifiable following Theorem~\ref{theorem:identifiability}.
    Then $\mathcal{Z}_{f}^{\text{Def~\ref{def:single-door_set}}}$ as defined in Definition~\ref{def:single-door_set}  is a valid adjustment set for $\alpha_{X_{t-\gamma_{xy}}, Y_t}$.
\end{restatable}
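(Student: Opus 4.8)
The plan is to mirror the forward implication of Theorem~\ref{theorem:identifiability}, exploiting that $\mathcal{Z}_{f}^{\text{Def~\ref{def:single-door_set}}}$ is precisely $\mathcal{Z}_{f}^{\text{Def~\ref{def:huge-single-door_set}}}$ intersected with the instants $V_{t'}$ whose component satisfies $V\in Anc(Y,\mathcal{G}_s)$. First I would record two elementary facts. Since $X\in Par(Y,\mathcal{G}_s)$ and ancestry is transitive, $Anc(X,\mathcal{G}_s)\subseteq Anc(Y,\mathcal{G}_s)$, so restricting to $Anc(Y,\mathcal{G}_s)$ retains every SCG-ancestor of \emph{both} endpoints. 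Moreover, since directed FTCG paths project onto directed SCG walks, $V\notin Anc(Y,\mathcal{G}_s)$ implies $V_{t'}\notin Anc(X_{t-\gamma_{xy}},\mathcal{G}_f)\cup Anc(Y_t,\mathcal{G}_f)$ for every $t'$ and every compatible $\mathcal{G}_f$. The inclusion $\mathcal{Z}_{f}^{\text{Def~\ref{def:single-door_set}}}\subseteq \mathcal{Z}_{f}^{\text{Def~\ref{def:huge-single-door_set}}}$ makes the first validity requirement, $\mathcal{Z}_{f}^{\text{Def~\ref{def:single-door_set}}}\cap(Desc(Y_t,\mathcal{G}_f)\cup\{X_{t-\gamma_{xy}}\})=\emptyset$, immediate from Corollary~\ref{Cor:soundness_of_biggest_single-door_set}.

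The core is to show $\mathcal{Z}_{f}^{\text{Def~\ref{def:single-door_set}}}$ blocks every non-direct path, and I would argue by contradiction: take a non-direct $\pi_f=\langle V^1_{t^1},\dots,V^n_{t^n}\rangle$ from $X_{t-\gamma_{xy}}$ to $Y_t$ active given $\mathcal{Z}_{f}^{\text{Def~\ref{def:single-door_set}}}$ and deduce that some condition of Theorem~\ref{theorem:identifiability} holds, contradicting identifiability. The key new tool, an \emph{ancestral lemma}, is: if $V^k_{t^k}$ is an internal non-collider whose edge toward the $Y_t$-side points out of it, then $V^k\in Anc(Y,\mathcal{G}_s)$. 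To prove it, follow the forward directed segment $V^k\to V^{k+1}\to\cdots$ until it reaches $Y_t$ (so $V^k\in Anc(Y,\mathcal{G}_s)$ by projection) or a collider $V^j_{t^j}$; in the latter case activeness forces a descendant $W_{t''}\in Desc(V^j_{t^j},\mathcal{G}_f)\cap\mathcal{Z}_{f}^{\text{Def~\ref{def:single-door_set}}}$, whence $W\in Anc(Y,\mathcal{G}_s)$, and chaining $V^k\in Anc(V^j,\mathcal{G}_s)\subseteq Anc(W,\mathcal{G}_s)\subseteq Anc(Y,\mathcal{G}_s)$ closes the argument.

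With this lemma I would replay the three-case classification of the theorem's forward proof, now tracking the $Anc(Y)$ restriction. By the reasoning of its second bullet, an internal instant at time $>t$ would force a collider whose FTCG-descendants all sit at times $>t$, hence outside $\mathcal{Z}_{f}^{\text{Def~\ref{def:single-door_set}}}$, blocking $\pi_f$; so no internal instant has time $>t$. If some internal instant had time $<t$, the largest-index such vertex has a forward out-edge into a time-$t$ vertex, so the ancestral lemma places it in $Anc(Y,\mathcal{G}_s)$ and therefore in $\mathcal{Z}_{f}^{\text{Def~\ref{def:single-door_set}}}$, blocking $\pi_f$; so all internal instants sit at time $t$. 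Finally, if some internal $V^i\notin Desc(Y,\mathcal{G}_s)$, take the largest such index: its edge to the successor cannot point into it from a descendant of $Y$ (that would force $V^i\in Desc(Y,\mathcal{G}_s)$), so it points out, and the ancestral lemma gives $V^i\in Anc(Y,\mathcal{G}_s)\setminus Desc(Y,\mathcal{G}_s)$, i.e.\ $V^i_t\in\mathcal{A}^{Anc(Y)}_{\le t}$, again blocking $\pi_f$. Hence $\langle V^2,\dots,V^n\rangle\subseteq Desc(Y,\mathcal{G}_s)$ with every internal instant at time $t$; any collider among these is a descendant of $Y$ whose FTCG-descendants live at times $\ge t$ with components in $Desc(Y,\mathcal{G}_s)$, hence disjoint from $\mathcal{Z}_{f}^{\text{Def~\ref{def:single-door_set}}}$, so $\pi_f$ has no colliders and its projection $\pi_s$ is an active non-direct SCG path. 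This is exactly the restricted form reached in the proof of Theorem~\ref{theorem:identifiability}, and the remainder of that argument yields Condition~\ref{item:1},~\ref{item:2.1}, or~\ref{item:2.2}, the desired contradiction.

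I expect the main obstacle to be exactly this ancestral lemma and its interaction with colliders: shrinking the adjustment set to $Anc(Y,\mathcal{G}_s)$ could a priori unblock a path either by deleting a blocking non-collider or by a retained ``extra'' instant opening a collider, and both dangers must be excluded at once. The transitivity chain through an opened collider reconciles them, showing that whenever a collider stays open under $\mathcal{Z}_{f}^{\text{Def~\ref{def:single-door_set}}}$ the very non-collider that would have blocked the path is itself pulled into $Anc(Y,\mathcal{G}_s)$, hence into the set. The complementary worry, that a retained instant whose component is in $Anc(Y,\mathcal{G}_s)$ but which is not an FTCG-ancestor of the endpoints might reopen a collider on a non-direct path, is dispatched by the first validity requirement: such colliders are descendants of $Y_t$, and $\mathcal{Z}_{f}^{\text{Def~\ref{def:single-door_set}}}$ contains no descendant of $Y_t$.
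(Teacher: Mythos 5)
Your proof is correct, but it is organized around a genuinely different key idea than the paper's. The paper argues directly on the arrow pattern of a candidate path $\pi_f$: after disposing of $t_{max}(\pi_f)>t$, it splits on the last forward arrow (index $k_{max}$), then on whether the path is fully directed, then on the last backward arrow ($l_{max}$) and a further index ($r_{max}$), and in each sub-case exhibits either an explicit element of $\mathcal{Z}_{f}^{\text{Def~\ref{def:single-door_set}}}$ with an out-edge (manual blocking) or a collider whose FTCG-descendants avoid $\mathcal{Z}_{f}^{\text{Def~\ref{def:single-door_set}}}$ (passive blocking); crucially, SCG-ancestry of its blocking vertices is always read off a \emph{fully directed suffix} of the path ending at $Y_t$. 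Your ancestral lemma replaces this index chasing with a single reusable tool whose proof has no counterpart in the paper: on a $\mathcal{Z}_{f}^{\text{Def~\ref{def:single-door_set}}}$-active path, a vertex with an out-edge toward the $Y_t$ side is an SCG-ancestor of $Y$ \emph{even when its forward segment dead-ends at a collider}, because activeness forces an opening witness inside $\mathcal{Z}_{f}^{\text{Def~\ref{def:single-door_set}}}$, and ancestry chains through that witness. This through-the-collider transitivity is what lets you run your four elimination steps (no internal times $>t$, none $<t$, no non-descendants of $Y$, no colliders) independently and invoke identifiability only once, at the end, whereas the paper invokes it at three separate points; the paper's version, in exchange, constructs every blocker explicitly and never needs the contradiction scaffolding. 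Two small points to tighten: in the ``no internal times $<t$'' step, membership of $V^{i_{max}}_{t^{i_{max}}}$ in $\mathcal{Z}_{f}^{\text{Def~\ref{def:single-door_set}}}$ also requires $t^{i_{max}}\geq t-\gamma_{max}$, which does hold but only because the out-edge lands at time $t$ and FTCG lags are bounded by $\gamma_{max}$ --- say so; and your final claim that the projection $\pi_s$ is a \emph{path} is not quite right when $\gamma_{xy}>0$, since $X$ may recur at time $t$, making $\pi_s$ a walk --- that is precisely the branch of Theorem~\ref{theorem:identifiability}'s endgame that produces Condition~\ref{item:1}, which you correctly list among the outcomes, so this is a slip of wording rather than a gap.
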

\begin{sproof}
Corollary~\ref{Cor:soundness_of_biggest_single-door_set} shows that the adjustment set $\mathcal{Z}^{\text{Def~\ref{def:huge-single-door_set}}}_f$ allows to estimate the direct effect $\alpha_{X_{t-\gamma_{xy}}, Y_t}$.
Firstly, since $\mathcal{Z}^{\text{Def~\ref{def:single-door_set}}}_f \subseteq \mathcal{Z}^{\text{Def~\ref{def:huge-single-door_set}}}_f$ and $\mathcal{Z}^{\text{Def~\ref{def:huge-single-door_set}}}_f \cap (Desc(Y_t,\mathcal{G}_f) \cup \{X_{t-\gamma_{xy}}\}) = \emptyset$, it is immediate that $\mathcal{Z}^{\text{Def~\ref{def:single-door_set}}}_f \cap (Desc(Y_t,\mathcal{G}_f) \cup \{X_{t-\gamma_{xy}}\}) = \emptyset$.
Moreover, it is known that adjusting on ancestors of the cause $Y_t$ is sufficient to estimate the direct effect in directed acyclic graphs.
Therefore, it is intuitive that the restriction of $\mathcal{Z}^{\text{Def~\ref{def:huge-single-door_set}}}_f$ to ancestors of $Y_t$, $\mathcal{Z}^{\text{Def~\ref{def:huge-single-door_set}}}_f \cap \{V_{t'}| V \in Anc(Y,\mathcal{G}_s),~t'\leq t\} = \mathcal{Z}^{\text{Def~\ref{def:single-door_set}}}_f$, allows to estimate the direct effect $\alpha_{X_{t-\gamma_{xy}}, Y_t}$.
The full proof is given in the supplementary materials.
\end{sproof}

\section{Conclusion}
\label{sec:conc}
In this paper, we developed a new graphical criteria for the identifiability of direct effects in linear dynamic structural causal models from summary causal graphs.
Theorem~\ref{theorem:identifiability} has important ramifications to the theory and practice of observational studies in dynamic systems. It implies that the key to graphical identifiability of the direct effect of $X_{t-\gamma_{xy}}$ on $Y_t$ from summary causal graphs lies not only in finding a set of non-descendants of $Y$ in the summary causal graph that are able of blocking paths between $X$ and $Y$ but also in some descendants of $Y$ in the case $\gamma_{xy}>0$. 
Furthermore, in case of identifiability, we presented two adjustments sets that can be used to estimate the direct effects from data.

The finding of this paper should be useful for many applications such as root cause identification in dynamic systems and it should open new research questions.
Namely, for future works, it would be interesting to have a criterion along with a completeness result describing every possible adjustment set.
In addition, 
since in many real world applications causal relations can be nonlinear, it would be interesting to extend this work to nonlinear SCMs and consider non-parametric direct effects \citep{Robins_1992,Pearl_2001}.
Finally, as many other works, we assumed that the FTCG is acyclic but we think that this assumption can be relaxed, so it would be interesting to formally check the validity of our results for cyclic FTCGs.


\appendix
\section{Technical Appendix}

In Section~"\textit{Further Necessary Definitions}" 
we give several definitions that are needed for a more detailed proof of Theorem~\ref{theorem:identifiability}. In Section~"\textit{Proofs}" 
we start by given several lemmas and properties along with their proofs and then we use them to prove Theorem~\ref{theorem:identifiability}.
Finally, at the end of the Section, 
we give the proof 
of Proposition~\ref{prop:soundness_singledoor}.

\subsection{Further Necessary Definitions}
\label{sec:appendix:extra_def}


In order to prove Theorem~\ref{theorem:identifiability}, one needs to further define walks in SCGs, adapt the concept of blocked paths for walks and introduce some new notions related to paths and walks.

\begin{definition}[Walks and Paths in SCGs, adaptation of Definition~\ref{def:Paths_SCG}]
    \label{def:Walks_Paths_SCG}
    A \emph{walk} between two vertices $X$ to $Y$ is an ordered sequence of vertices denoted as $\pi_s=\langle V^1,\dots,V^n \rangle$ such that $V^1=X$, $V^n=Y$ and $\forall 1\leq i < n$, $V^i$ and $V^{i+1}$ are adjacent (\ie, $V^i\rightarrow V^{i+1}\text{ or }V^i\leftarrow V^{i+1}\text{ or }V^i\rightleftarrows V^{i+1}$).
    In this paper, a walk $\pi_s$ between $X$ to $Y$ is said to be \emph{non-direct} if $\pi_s \neq \langle X \rightarrow Y \rangle$.
    A path is a walk with no two identical vertices.
\end{definition}

\begin{definition}[Blocked Walk in SCGs, adaptation of Definition~\ref{def:BlockedPathInSCG}]
    \label{def:BlockedWalkInSCG}
    In a SCG $\mathcal{G}_s=(\mathcal{V}_s,\mathcal{E}_s)$, a walk $\pi_s=\langle V^1,\dots,V^n \rangle$ is said to be \emph{blocked} by a set of vertices $\mathcal{Z}_s\subseteq\mathcal{V}_s$ if:
    \begin{enumerate}
        \item \label{item:ActivelyBlockedInSCG} $\exists 1 < i < n \st V^{i-1}\leftarrow V^i$ or $V^i\rightarrow V^{i+1}$ and $V^i\in\mathcal{Z}_s$, or
        \item \label{item:PassivelyBlockedInSCG} $\exists 1 < i \leq j < n \st V^{i-1}\rightarrow V^i\rightleftarrows\dots\rightleftarrows V^{j} \leftarrow V^{j+1}\text{ and }Desc(V^i, \mathcal{G}_s)\cap\mathcal{Z}_s=\emptyset$.
    \end{enumerate}
    A walk which is not blocked is said to be \emph{active}.
    When the set $\mathcal{Z}_s$ is not specified, it is implicit that we consider $\mathcal{Z}_s=\emptyset$.
    In the case of condition~\ref{item:ActivelyBlockedInSCG}, we say that $\pi_s$ is manually $\mathcal{Z}_s$-blocked by $V^i$ and in the case of condition~\ref{item:PassivelyBlockedInSCG} we say that $\pi_s$ is passively $\mathcal{Z}_s$-blocked by $\{V^k|i\leq k \leq j\}$.
\end{definition}

As SCGs represent FTCGs, the walks in a SCG can represent the paths of compatible FTCGs.
This is gives rise to the notion of compatible walk.

\begin{definition}[Compatible Walk]
    \label{def:CompatibleWalk}
     Let $\mathcal{G}_{f}=(\mathcal{V}_{f},\mathcal{E}_{f})$ be a FTCG and $\mathcal{G}_s=(\mathcal{V}_{s},\mathcal{E}_{s})$ the compatible SCG.
     A path $\pi_f=\langle V^1_{t_1},\dots,V^n_{t_n} \rangle$ in $\mathcal{G}_{f}$ can be uniquely represented as a walk $\pi_s=\langle V^1,\dots,V^n \rangle$ in $\mathcal{G}_s$ in which the temporal information has been removed.
     We refer to $\pi_s$ as $\pi_f$'s \emph{compatible walk} and we write $\pi_s=\phi(\pi_f)$.
     \eg, $\phi(\langle X_{t-1},X_{t},Y_t,Z_t,Z_{t+1},X_{t+1} \rangle)=\langle X,X,Y,Z,Z,X \rangle$.
\end{definition}

Furthermore, since there exists an infinite number of walks in a SCG, it is hard in practice to say verify anything about walks. Therefore, it is necessary to have a notion which creates a link between walks and paths of a SCG. This is the purpose of the following notion of primary path.

\begin{definition}[Primary path]
    Let $\mathcal{G}_s=(\mathcal{V}_{s},\mathcal{E}_{s})$ be a SCG and $\pi_s=\langle V^1,\dots,V^n \rangle$ a walk  from $X$ to $Y$.
    $\pi'_s = \langle U^1,\dots,U^m \rangle$ such that $U^1 = V^1$ and $U^{k+1} = V^{max\{i|V^i = U^k\}+1}$ is called the primary path of $\pi_s$.
\end{definition}

Lastly, in the following, we write $$t_{min}(\pi_f=\langle V^1_{t^1},\dots,V^n_{t^n} \rangle)=min\{t^i|1\leq i \leq n\}$$ and $$t_{max}(\pi_f=\langle V^1_{t^1},\dots,V^n_{t^n} \rangle)=max\{t^i|1\leq i \leq n\}.$$

\subsection{Proofs}
\label{sec:appendix:proofs}

In this section, we prove the identifiability result. 
We start by stating a trivial lemma that will be needed for the general identifiability result.

\begin{restatable}{lemma}{mylemmaone}
    \label{lem:identifiability:XnotinParentsY}
    Let $\mathcal{G}_s=(\mathcal{V}_s, \mathcal{E}_s)$ be a SCG, $\gamma_{max} \geq 0$ a maximum lag and $\alpha_{Y_t,X_{t-\gamma_{xy}}}$ the direct effect of $X_{t-\gamma_{xy}}$ on $Y_t$ such that $X, Y \in \mathcal{V}_s$, $X \neq Y$ and $0 \leq \gamma_{xy} \leq \gamma_{max}$.
    If $X \notin Par(Y,\mathcal{G}_s)$ then $\alpha_{Y_t, X_{t-\gamma_{xy}}}$ is identifiable.
\end{restatable}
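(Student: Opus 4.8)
The plan is to observe that the hypothesis $X \notin Par(Y,\mathcal{G}_s)$ rules out the existence of the edge that would realise the direct effect in \emph{any} compatible FTCG, so that the effect is forced to equal $0$ and is therefore trivially identifiable.

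First I would unfold the relevant definitions. By Definition~\ref{def:SCG}, $Par(Y,\mathcal{G}_s)=\{U\in\mathcal{V}_s\mid U\rightarrow Y\text{ or }U\rightleftarrows Y\text{ in }\mathcal{E}_s\}$, and the edge set $\mathcal{E}_s$ contains $X\rightarrow Y$ precisely when there is some $t'\leq t$ with $X_{t'}\rightarrow Y_t\in\mathcal{E}_f$. Hence $X\notin Par(Y,\mathcal{G}_s)$ means that neither $X\rightarrow Y$ nor $X\rightleftarrows Y$ belongs to $\mathcal{E}_s$, i.e.\ there is no lag $t'\leq t$ for which $X_{t'}\rightarrow Y_t\in\mathcal{E}_f$ in the underlying FTCG.

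Next, since $0\leq\gamma_{xy}\leq\gamma_{max}$ we have $t-\gamma_{xy}\leq t$, so the previous observation applies to $t'=t-\gamma_{xy}$ in particular: $X_{t-\gamma_{xy}}\rightarrow Y_t\notin\mathcal{E}_f$. By Definition~\ref{def:FTCG}, an edge $X_{t-\gamma_{xy}}\rightarrow Y_t$ lies in $\mathcal{E}_f$ if and only if $\alpha_{X_{t-\gamma_{xy}},Y_t}\neq 0$; its absence therefore yields $\alpha_{X_{t-\gamma_{xy}},Y_t}=0$. As this argument uses only the compatibility of the FTCG with $\mathcal{G}_s$, it holds for \emph{every} FTCG compatible with $\mathcal{G}_s$ of maximal lag at most $\gamma_{max}$.

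Finally, identifiability follows immediately: the quantity $\alpha_{X_{t-\gamma_{xy}},Y_t}$ takes the single value $0$ across all compatible FTCGs, hence it is computed uniquely (as $0$) from the observed distribution without any further assumption and without knowing the true FTCG. I do not expect any genuine obstacle here; the only points to state cleanly are the equivalence $X_{t-\gamma_{xy}}\rightarrow Y_t\in\mathcal{E}_f\iff\alpha_{X_{t-\gamma_{xy}},Y_t}\neq 0$ coming from Definition~\ref{def:FTCG}, and the remark that a constant (here, zero) is by definition identifiable. (I also read the subscript $\alpha_{Y_t,X_{t-\gamma_{xy}}}$ in the statement as the direct effect $\alpha_{X_{t-\gamma_{xy}},Y_t}$ of $X_{t-\gamma_{xy}}$ on $Y_t$.)
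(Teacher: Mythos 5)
Your proposal is correct and follows essentially the same route as the paper's own proof: $X\notin Par(Y,\mathcal{G}_s)$ forces $X_{t-\gamma_{xy}}\notin Par(Y_t,\mathcal{G}_f)$ in every compatible FTCG, hence $\alpha_{X_{t-\gamma_{xy}},Y_t}=0$ and the effect is trivially identifiable. You merely spell out the definitional unfolding that the paper leaves implicit, which is fine.
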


\begin{proof}
    Suppose $ X \notin Par(Y,\mathcal{G}_s)$.
    Then $X_{t-\gamma_{xy}} \notin Par(Y_t,\mathcal{G}_f)$ and the direct effect is equal to zero (\ie, $\alpha_{Y_t,X_{t-\gamma_{xy}}}=0$).
\end{proof}

The aim of this work is to identify direct effects that can be estimated from data. This implies that we are interested in figuring out if for a given $X_{t-\gamma_{xy}}$ and a given $Y_t$, it is possible to find at least some \emph{finite} adjustment set that removes all confounding bias and non-direct effects and that does not create any selection bias between $X_{t-\gamma_{xy}}$ and $Y_t$. Therefore, in the following lemma, we show that what we are trying to achieve is possible by pointing out that infinite sets are not necessary to block paths between $X_{t-\gamma_{xy}}$ and $Y_t$ in a given FTCG.

\begin{restatable}{lemma}{mylemmatwo}
    \label{lem:blocked_when_tmax>t_or_tmin<t-gammaxy-gammamax}
    Let $\mathcal{G}_f=(\mathcal{V}_f, \mathcal{E}_f)$ be a FTCG of maximal lag at most $\gamma_{max} \geq 0$ and $\alpha_{Y_t,X_{t-\gamma_{xy}}}$ the direct effect of $X_{t-\gamma_{xy}}$ on $Y_t$ such that $X_{t-\gamma_{xy}}, Y_t \in \mathcal{V}_f$, $X \neq Y$ and $0 \leq \gamma_{xy} \leq \gamma_{max}$.
    Let $\pi_f=\langle V^1_{t^1},\dots,V^n_{t^n} \rangle$ a path from $X_{t-\gamma_{xy}}$ to $Y_t$ in $\mathcal{G}_f$.
    If $t_{max}(\pi_f)>t$ then $\pi_f$ is passively blocked by any $\mathcal{Z}_{f}\subseteq \mathcal{V}_{f}$ such that $\mathcal{Z}_{f} \cap \{V_{t'} \in \mathcal{V}_{f}| t' > t\} = \emptyset$.
    If $t_{min}(\pi_f)<t-\gamma_{xy}$ then $\pi_f$ is manually blocked by any $\mathcal{Z}_{f}\subseteq \mathcal{V}_{f}$ such that $\{V_{t'} \in \mathcal{V}_{f}| t-\gamma_{max} \leq t' < t\} \backslash\{X_{t-\gamma_{xy}}\} \subseteq \mathcal{Z}_{f}$.
\end{restatable}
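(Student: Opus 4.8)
The plan is to exploit the one structural feature of an FTCG that drives both statements: since every edge $X_{t-\gamma}\to Y_t$ satisfies $\gamma\ge 0$, every arrow points weakly forward in time, so its tail sits at a time no later than its head. Consequently a collider $\to V^i_{t^i}\leftarrow$ can occur only at a \emph{local time-maximum} of $\pi_f$ (both incident arrows come from vertices at times $\le t^i$), and dually, whenever a vertex on $\pi_f$ is immediately followed by a vertex at a strictly larger time it must emit an arrow there and is therefore a non-collider. I would treat the two assertions separately, using the max property for the passive-blocking claim and the min property for the manual-blocking claim.

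For the first assertion, assume $t_{max}(\pi_f)>t$. Since the endpoints satisfy $t^1=t-\gamma_{xy}\le t$ and $t^n=t$, the value $t_{max}$ is attained at an interior index. I would take a maximal run $V^a,\dots,V^b$ of consecutive vertices all at time $t_{max}$; the two edges leaving the run, $V^{a-1}\to V^a$ and $V^b\leftarrow V^{b+1}$, both point inward because their outer endpoints sit at strictly smaller times. Reading the orientations of the edges across $V^{a-1},\dots,V^{b+1}$ from left to right, the first points right and the last points left, so some interior vertex $V^i$ with $a\le i\le b$ receives both of its incident arrows, \ie\ is a collider, and it lies at time $t_{max}>t$. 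Every descendant of $V^i_{t^i}$ is reached by a forward-in-time directed path and hence sits at a time $\ge t_{max}>t$; since $\mathcal{Z}_{f}$ contains nothing at times $>t$, this gives $Desc(V^i_{t^i},\mathcal{G}_f)\cap\mathcal{Z}_{f}=\emptyset$, so $\pi_f$ is passively blocked.

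For the second assertion, assume $t_{min}(\pi_f)<t-\gamma_{xy}$ and recall $\gamma_{xy}\le\gamma_{max}$, so $t^1=t-\gamma_{xy}\ge t-\gamma_{max}$. I would isolate the suffix $S=\langle V^{\ell+1},\dots,V^n\rangle$, where $\ell$ is the last index with $t^{\ell}<t-\gamma_{max}$ (and $S$ is the whole path if no such index exists). On $S$ every time is $\ge t-\gamma_{max}$, and its minimum $\tau$ satisfies $\tau<t$: if $\ell$ exists then the single edge $V^{\ell}\to V^{\ell+1}$ crossing the boundary has span $\le\gamma_{max}$, forcing $t^{\ell+1}<(t-\gamma_{max})+\gamma_{max}=t$, and otherwise $\tau=t_{min}<t-\gamma_{xy}\le t$. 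Hence $\tau\in[t-\gamma_{max},t)$. Taking the rightmost vertex $V^b$ of the run of $S$ at time $\tau$, its successor lies at a strictly larger time (the run cannot end the path, since $V^n=Y_t$ is at time $t>\tau$), so $V^b\to V^{b+1}$ and $V^b$ is a non-collider. Moreover $V^b$ is an interior index and $\pi_f$ is a path, so $V^b\neq V^1=X_{t-\gamma_{xy}}$. Thus $V^b\in\{V_{t'}\mid t-\gamma_{max}\le t'<t\}\setminus\{X_{t-\gamma_{xy}}\}\subseteq\mathcal{Z}_{f}$ and carries an outgoing edge along $\pi_f$, so $\pi_f$ is manually blocked.

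I expect the main obstacle to be the lower endpoint of the window in the second part: the naive choice — the first vertex at which $\pi_f$ drops below $t-\gamma_{xy}$ — can overshoot and land below $t-\gamma_{max}$, hence outside $\mathcal{Z}_{f}$. Restricting to the above-window suffix $S$ and using its valley is exactly what keeps the chosen vertex inside $[t-\gamma_{max},t)$, and the $\le\gamma_{max}$ span bound on the one boundary-crossing edge is what guarantees $t^{\ell+1}<t$. Some care is also needed for flat runs at the extremal time, which is why I pick the rightmost vertex of the $\tau$-run (and, symmetrically, use the inward-pointing external edges of the $t_{max}$-run); the forward-in-time orientation of edges is precisely the fact that rules out the a priori worry that such a valley vertex could itself be a collider.
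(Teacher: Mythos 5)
Your proof is correct, and its first half (the case $t_{max}(\pi_f)>t$) is essentially identical to the paper's argument: take a maximal run at time $t_{max}$, note that both boundary edges point into the run because edges never go backward in time, extract a collider inside the run, and observe that all of its descendants live at times $\geq t_{max}>t$, hence outside $\mathcal{Z}_f$. For the second half you take a slightly different route. The paper simply picks the last up-crossing of level $t$: since $t_{min}(\pi_f)<t-\gamma_{xy}\leq t^1$ and $t^n=t$, the minimum is attained at an interior index, so there is an interior $i$ with $t^i<t\leq t^{i+1}$; the edge is then forced to be $V^i_{t^i}\rightarrow V^{i+1}_{t^{i+1}}$, and the maximal-lag bound applied to that same edge yields $t^i\geq t^{i+1}-\gamma_{max}\geq t-\gamma_{max}$, so the tail lands in the window $[t-\gamma_{max},t)$ automatically; the overshoot problem you flag never arises with this choice. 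Your variant (truncate the path after the last vertex below $t-\gamma_{max}$, then take the rightmost vertex of a run at the valley of the suffix) rests on exactly the same two ingredients --- weakly forward-in-time orientation and the $\gamma_{max}$ span bound, the latter applied to the boundary-crossing edge instead of the level-$t$ crossing edge --- and is complete, including the checks that the chosen vertex is interior and, by path-distinctness, different from $X_{t-\gamma_{xy}}$. The net difference is only economy: the paper's single crossing argument gets the window containment for free, whereas your suffix-plus-valley construction spends an extra step to achieve the same containment.
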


\begin{proof}
     Let $\mathcal{G}_f=(\mathcal{V}_{f},\mathcal{E}_{f})$ be a FTCG, $X_{t-\gamma_{xy}} \neq Y_t \in \mathcal{V}_{f}$ and $\pi_f=\langle V^1_{t^1},\dots,V^n_{t^n} \rangle$ a path from $X_{t-\gamma_{xy}}$ to $Y_t$ in $\mathcal{G}_f$.
    \begin{itemize}
        \item Suppose $t_{max}(\pi_f)>t$.
        Since $t-\gamma_{xy} \leq t < t_{max}(\pi_f)$ there exists $1<i\leq j<n$ such that $t^{i-1} < t^{i}$, $t^{j} > t^{j+1}$ and $\forall i\leq k \leq j, t^{k} = t_{max}(\pi_f)$.
        Therefore, $V^{i-1}_{t^{i-1}} \rightarrow V^i_{t^{i}}$ and $V^j_{t^{j}} \leftarrow V^{j+1}_{t^{j+1}}$ in $\pi_f$.
        Thus, $\exists i \leq k \leq j \st V^{k-1}_{t^{k-1}} \rightarrow V^k_{t^{k}} \leftarrow V^{k+1}_{t^{k+1}}$ in $\pi_f$  and $t^k = t_{max}(\pi_f) > t$.
        In conclusion, $\pi_f$ is passively blocked by any $\mathcal{Z}_{f}\subseteq \mathcal{V}_{f} \st Desc(V^k_{t^{k}},\mathcal{G}_f) \cap \mathcal{Z}_{f} = \emptyset$ so by any $\mathcal{Z}_{f} \st \mathcal{Z}_{f} \cap \{V_{t'} \in \mathcal{V}_{f}| t' > t\} = \emptyset$
        \item Suppose $t_{min}(\pi_f)<t-\gamma_{xy}$.
        Since $t_{min}(\pi_f)<t-\gamma_{xy}\leq t$ and $t^n = t$ there exists $1<i<n$ such that $t^{i} < t \leq t^{i+1}$.
        Therefore, $V^{i}_{t^{i}} \rightarrow V^{i+1}_{t^{i+1}}$ in $\pi_f$ and $t-\gamma_{max} \leq t^i < t \leq t^{i+1}$ so $V^i_{t^i} \in \{V_{t'} \in \mathcal{V}_{f}| t-\gamma_{max} \leq t' < t\} \backslash\{X_{t-\gamma_{xy}}\}$.
        In conclusion, $\pi_f$ is manually blocked by any $\mathcal{Z}_{f}\subseteq \mathcal{V}_{f} \st V^i_{t^{i}} \in \mathcal{Z}_{f}$ so by any $\mathcal{Z}_{f} \st \{V_{t'} \in \mathcal{V}_{f}| t-\gamma_{max} \leq t' < t\} \backslash\{X_{t-\gamma_{xy}}\} \subseteq \mathcal{Z}_{f}$.
     \end{itemize}
\end{proof}

Lemma~\ref{lem:identifiability:XnotinParentsY} and Lemma~\ref{lem:blocked_when_tmax>t_or_tmin<t-gammaxy-gammamax} respectively show that the case where $X \notin Par(Y,\mathcal{G}_s)$ is trivially identifiable and  that any path $\pi_f$ where $t_{min}(\pi_f)<t-\gamma_{xy}$ and $t_{max}(\pi_f)>t$ are easily blocked by $\mathcal{Z}_f$ as defined in Definition~\ref{def:huge-single-door_set}. Thus we will consider $X \in Par(Y,\mathcal{G}_s)$ and $t-\gamma_{xy} \leq t_{min}(\pi_f) \leq t_{max}(\pi_f)\leq t$ in the following Lemmas.

In these cases, one might think that to block all active non-direct paths between $X_{t-\gamma_{xy}}$ and $Y_t$, it is simply sufficient to adjust on all vertices in the FTCG which do not temporally succeed the effect $Y_t$ and have compatible vertices on an active path between $X$ and $Y$ in the SCGs. 
However, in general this is not and this is given by  the following lemma.

\begin{restatable}{lemma}{mylemmathree}
    \label{lem:identifiability:BlocksWalksNotinDescendantsY}
    Let $\mathcal{G}_s=(\mathcal{V}_s, \mathcal{E}_s)$ be a SCG, $\gamma_{max} \geq 0$ a maximum lag and $\alpha_{Y_t,X_{t-\gamma_{xy}}}$ the direct effect of $X_{t-\gamma_{xy}}$ on $Y_t$ such that $X, Y \in \mathcal{V}_s$, $X \neq Y$ and $0 \leq \gamma_{xy} \leq \gamma_{max}$.
    For every non-direct walk $\pi_s=\langle V^1,\dots,V^n \rangle$ between $X$ and $Y$ such that $\langle V^2,\dots,V^{n-1} \rangle \not\subseteq Desc(Y, \mathcal{G}_s)$\footnote{For simplification, we sometimes abuse the notation of walks. Here $\langle V^2,\dots,V^{n-1} \rangle = \{V^i|2\leq i \leq n-1\}$ can be empty if $n\leq 2$.}, every compatible path $\pi_f\in \phi^{-1}(\pi_s)$ from $X_{t-\gamma_{xy}}$ to $Y_t$ can be blocked by the adjustment set $\mathcal{Z}_{f} = \mathcal{A}_{\leq t} \cup \mathcal{D}_{<t}$ defined in Definition~\ref{def:huge-single-door_set}.
\end{restatable}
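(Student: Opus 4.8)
The plan is to show that every compatible path $\pi_f = \langle V^1_{t^1},\dots,V^n_{t^n}\rangle \in \phi^{-1}(\pi_s)$ from $X_{t-\gamma_{xy}}$ to $Y_t$ is blocked by $\mathcal{Z}_f=\mathcal{A}_{\leq t}\cup\mathcal{D}_{<t}$, by a case analysis on the time indices $t^i$; this essentially formalises the three observations made in the forward implication of Theorem~\ref{theorem:identifiability}. First I would dispose of the unbounded cases with Lemma~\ref{lem:blocked_when_tmax>t_or_tmin<t-gammaxy-gammamax}. Since $\mathcal{Z}_f$ contains no vertex at a time strictly larger than $t$, any $\pi_f$ with $t_{max}(\pi_f)>t$ is passively blocked; and since $\mathcal{Z}_f$ contains every vertex at times $t-\gamma_{max}\leq t'<t$ other than $X_{t-\gamma_{xy}}$ (descendants of $Y$ fall into $\mathcal{D}_{<t}$, non-descendants into $\mathcal{A}_{\leq t}$), any $\pi_f$ with $t_{min}(\pi_f)<t-\gamma_{xy}$ is manually blocked. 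It thus remains to treat the bounded case $t-\gamma_{xy}\leq t^i\leq t$ for all $i$.

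Within the bounded case I would split according to whether some intermediate vertex carries a time strictly below $t$. If so, set $i_{max}=\max\{1<i<n \st t^i<t\}$; by maximality and $t^{i_{max}+1}\leq t$ we get $t^{i_{max}+1}=t>t^{i_{max}}$, and since FTCG edges always point forward in time this forces $V^{i_{max}}_{t^{i_{max}}}\rightarrow V^{i_{max}+1}_{t^{i_{max}+1}}$. As $t-\gamma_{max}\leq t^{i_{max}}<t$ and $V^{i_{max}}_{t^{i_{max}}}\neq V^1_{t^1}=X_{t-\gamma_{xy}}$ (the vertices of a path being distinct), this vertex belongs to $\mathcal{Z}_f$ and lies on $\pi_f$ with an outgoing edge, so $\pi_f$ is blocked.

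The remaining and crucial case is when every intermediate vertex has $t^i=t$. Here I would use the lemma's hypothesis that $\langle V^2,\dots,V^{n-1}\rangle\not\subseteq Desc(Y,\mathcal{G}_s)$ to define $i_{max}=\max\{1<i<n \st V^i\notin Desc(Y,\mathcal{G}_s)\}$, which exists. By maximality of $i_{max}$ (or because $V^{i_{max}+1}=V^n=Y$ when $i_{max}+1=n$), $V^{i_{max}+1}\in Desc(Y,\mathcal{G}_s)$, and both endpoints of this edge sit at time $t$. The decisive step is the orientation argument: if the edge were $V^{i_{max}}_t\leftarrow V^{i_{max}+1}_t$, it would induce $V^{i_{max}+1}\rightarrow V^{i_{max}}$ or $V^{i_{max}+1}\rightleftarrows V^{i_{max}}$ in $\mathcal{G}_s$, making the non-descendant $V^{i_{max}}$ a descendant of $Y$ by transitivity of $Desc$, a contradiction. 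Hence $V^{i_{max}}_t\rightarrow V^{i_{max}+1}_t$, and being a non-descendant at time $t$ distinct from $X_{t-\gamma_{xy}}$, $V^{i_{max}}_t\in\mathcal{A}_{\leq t}\subseteq\mathcal{Z}_f$ blocks $\pi_f$. I expect this last case to be the main obstacle, since it is the only point where the non-descendant hypothesis is genuinely used: it is needed both to force the outgoing orientation (via the $Desc$ transitivity contradiction) and to certify that the blocking vertex lies in $\mathcal{A}_{\leq t}$ rather than in the region of the FTCG that $\mathcal{Z}_f$ deliberately excludes.
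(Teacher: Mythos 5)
Your proof is correct and follows essentially the same route as the paper's: reduce to the time-bounded case $t-\gamma_{xy}\leq t^i\leq t$ via Lemma~\ref{lem:blocked_when_tmax>t_or_tmin<t-gammaxy-gammamax}, then manually block $\pi_f$ at the last intermediate vertex not in $Desc(Y,\mathcal{G}_s)$, whose outgoing orientation is forced because its successor is a descendant of $Y$, and which therefore lies in $\mathcal{A}_{\leq t}\subseteq\mathcal{Z}_f$. Your middle case (some intermediate $t^i<t$) is merely redundant: the paper applies the last-non-descendant argument uniformly, since $V^j\notin Desc(Y,\mathcal{G}_s)$ together with $t-\gamma_{max}\leq t^j\leq t$ already places $V^j_{t^j}$ in $\mathcal{A}_{\leq t}$ regardless of whether $t^j=t$ or $t^j<t$.
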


\begin{proof}
    Suppose there exists a non-direct walk $\pi_s=\langle V^1,\dots,V^n \rangle$ between $X$ and $Y$ such that $\langle V^2,\dots,V^{n-1} \rangle \not\subseteq Desc(Y, \mathcal{G}_s)$.
    Then, take $\pi_f=\langle V^1_{t^1},\dots,V^n_{t^n} \rangle$ from $X_{t-\gamma_{xy}}$ to $Y_t$ compatible with $\pi_s$ (\ie, $\pi_f\in\phi^{-1}(\pi_s)$).
    Take $j=max\{1< i < n | V^i \notin Desc(Y, \mathcal{G}_s) \}$.
    Notice that $V^j \notin Desc(Y, \mathcal{G}_s)$ and $V^{j+1} \in Desc(Y, \mathcal{G}_s)$ so $V^j_{t^j} \rightarrow V^{j+1}_{t^{j+1}} \in \pi_f$.
    Therefore, since $t-\gamma_{xy} \leq t_{min}(\pi_f) \leq t_{max}(\pi_f)\leq t$ by Lemma~\ref{lem:blocked_when_tmax>t_or_tmin<t-gammaxy-gammamax}, $t-\gamma_{xy} \leq t^j \leq t$ and thus $\pi_f$ is manually blocked by $V^j_{t^j} \in \mathcal{A}_{\leq t} \subseteq \mathcal{Z}_{f}$.
\end{proof}

One important factor that we did not take into account in these first lemmas is the value of $\gamma_{xy}$. Indeed, when $\gamma_{xy}>0$ it is safe to say that the problem should become easier as in this case we know that the parents of $X_{t-\gamma_{xy}}$ cannot be descendants of $Y_t$. 
Therefore, distinguishing the case where $\gamma_{xy}=0$ and the case where $\gamma_{xy}=0$ is important to reach a general identifiability result.

\begin{restatable}{lemma}{mylemmafour}
    \label{lem:identifiability:BlocksWhenGamma>0}
    Let $\mathcal{G}_s=(\mathcal{V}_s, \mathcal{E}_s)$ be a SCG, $\gamma_{max} \geq 0$ a maximum lag and $\alpha_{Y_t,X_{t-\gamma_{xy}}}$ the direct effect of $X_{t-\gamma_{xy}}$ on $Y_t$ such that $X, Y \in \mathcal{V}_s$, $X \neq Y$ and $0 < \gamma_{xy} \leq \gamma_{max}$.
    For every non-direct walk $\pi_s=\langle V^1,\dots,V^n \rangle$ from $X$ to $Y$ such that $\exists 1 \leq i < n,~V^i \leftarrow V^{i+1}$ (\ie, not $V^i \rightarrow V^{i+1}$ and not $V^i \rightleftarrows V^{i+1}$), every compatible path $\pi_f\in \phi^{-1}(\pi_s)$ from $X_{t-\gamma_{xy}}$ to $Y_t$ can be blocked by $\mathcal{Z}_{f} = \mathcal{A}_{\leq t} \cup \mathcal{D}_{<t}$ because $\mathcal{Z}_{f} \cap \mathcal{D}_{\geq t} = \emptyset$ where:
    \begin{itemize}
        \item $\mathcal{A}_{\leq t},~\mathcal{D}_{<t}$ are defined in Definition~\ref{def:huge-single-door_set} and
        \item $\mathcal{D}_{\geq t}$ is the set of instants of descendants of $Y$ greater or equal to $t$, \ie, $\mathcal{D}_{\geq t} = \{V_{t'} | V \in Desc(Y, \mathcal{G}_s),~t' \geq t\}$.
    \end{itemize}
    Note that $\pi_s=\langle V^1,\dots,V^n \rangle$ from $X$ to $Y$ is non-direct, $X\in Par(Y,\mathcal{G}_s)$ by Lemma~\ref{lem:identifiability:XnotinParentsY} and  $\exists 1 \leq i < n,~V^i \leftarrow V^{i+1}$ implies that $n\geq3$.
\end{restatable}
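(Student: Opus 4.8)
The plan is to show directly that every compatible path $\pi_f=\langle V^1_{t^1},\dots,V^n_{t^n}\rangle\in\phi^{-1}(\pi_s)$ from $X_{t-\gamma_{xy}}$ to $Y_t$ is blocked by $\mathcal{Z}_f=\mathcal{A}_{\le t}\cup\mathcal{D}_{<t}$. First I would record the single structural fact that drives everything: since $\mathcal{Z}_f\cap\mathcal{D}_{\ge t}=\emptyset$, the only elements of $\mathcal{Z}_f$ sitting at a time $\ge t$ are non-descendants of $Y$ taken at time exactly $t$. Hence, for any vertex $V^k_{t^k}$ with $t^k=t$ and $V^k\in Desc(Y,\mathcal{G}_s)$, every FTCG-descendant of $V^k_{t^k}$ lies at a time $\ge t$ and projects to an SCG-descendant of $V^k$, hence of $Y$, so $Desc(V^k_{t^k},\mathcal{G}_f)\cap\mathcal{Z}_f=\emptyset$. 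Thus such a vertex, whenever it is a collider on $\pi_f$, passively blocks the path.

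Next I would invoke Lemma~\ref{lem:blocked_when_tmax>t_or_tmin<t-gammaxy-gammamax} to reduce to $t-\gamma_{xy}\le t^k\le t$ for every $k$, since otherwise $\pi_f$ is already blocked by $\mathcal{Z}_f$ (which contains no instant strictly after $t$ and all admissible instants strictly before $t$). I would then split on the interior instants. If some interior vertex has $t^k<t$, I take the largest such index $i_{\max}$; then $t^{i_{\max}}<t=t^{i_{\max}+1}$, so that edge is forced forward, $V^{i_{\max}}\to V^{i_{\max}+1}$, making $V^{i_{\max}}$ a non-collider, and $V^{i_{\max}}_{t^{i_{\max}}}$ is an interior instant in $[t-\gamma_{max},t)$, hence in $\mathcal{Z}_f$; this manually blocks $\pi_f$ exactly as in the forward direction of the proof of Theorem~\ref{theorem:identifiability}.

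The remaining, and central, case is when every interior instant equals $t$, so $t^1=t-\gamma_{xy}$ and $t^2=\dots=t^n=t$. Here $\gamma_{xy}>0$ is used crucially: because $t^1<t^2$, the first edge is forced forward, $V^1_{t^1}\to V^2_t$, so the pure leftward SCG edge $V^i\leftarrow V^{i+1}$ (present by hypothesis, and necessarily realized as a leftward FTCG edge in any compatible $\pi_f$) occurs at some position $i\ge 2$. Letting $j\ge 2$ be the first position carrying a leftward edge in $\pi_f$, the vertex $V^j$ receives an incoming edge from each side, i.e.\ it is a collider. I would then argue by contradiction: if $\pi_f$ were active the collider $V^j$ would have to be open, so by the structural fact above $V^j\notin Desc(Y,\mathcal{G}_s)$; but the leftward edge yields the instantaneous relation $V^{j+1}_t\to V^j_t$, hence $V^{j+1}\to V^j$ in $\mathcal{G}_s$ and $V^j\in Desc(V^{j+1},\mathcal{G}_s)$, which forces $V^{j+1}\notin Desc(Y,\mathcal{G}_s)$ as well. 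This is impossible: if $V^{j+1}=Y$ then $V^{j+1}\in Desc(Y)$ by definition, while if $V^{j+1}$ is interior its left edge points away from it, making it a non-collider, so activeness would force $V^{j+1}\in Desc(Y)$. Either way we contradict $V^{j+1}\notin Desc(Y)$, so $\pi_f$ must be blocked.

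I expect the main obstacle to be this central case: correctly turning the unique pure-leftward edge of the SCG walk into a forced collider of the FTCG path, and then showing that this collider is either closed by $\mathcal{Z}_f$ or propagates a contradiction along the instantaneous slice at time $t$. The two points requiring genuine care are the bookkeeping that every FTCG-descendant of a time-$t$ descendant of $Y$ stays disjoint from $\mathcal{Z}_f$ (so descendant colliders are automatically closed), and the observation that $\gamma_{xy}>0$ is exactly what orients the first edge forward and thereby pins the leftward edge at position $\ge 2$. The time reduction and the ``last ascent'' sub-case are routine adaptations of arguments already used for Theorem~\ref{theorem:identifiability}.
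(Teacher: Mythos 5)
Your proof is correct, but it takes a somewhat different route from the paper's. The paper first disposes of any path whose projected walk has an interior vertex outside $Desc(Y,\mathcal{G}_s)$ by invoking Lemma~\ref{lem:identifiability:BlocksWalksNotinDescendantsY}, and then works at the hypothesised leftward edge $V^i\leftarrow V^{i+1}$: if $t^{i+1}<t$ it blocks manually at $V^{i+1}_{t^{i+1}}\in\mathcal{D}_{<t}$, and if $t^{i+1}=t$ it takes the \emph{last} forward edge before position $i$, whose head is a collider that is an FTCG-descendant of $V^{i+1}_t\in\mathcal{D}_{\ge t}$ and is therefore closed. You instead split purely on the temporal profile of the path: the sub-case with an interior instant strictly before $t$ is settled by the $i_{\max}$ argument borrowed from the proof of Theorem~\ref{theorem:identifiability}, and in the all-instants-equal-$t$ case you locate the \emph{first} leftward FTCG edge, note that its head $V^j_t$ is a collider (the first edge being forced forward by $\gamma_{xy}>0$), and close by contradiction: activeness would force $V^j\notin Desc(Y,\mathcal{G}_s)$, the instantaneous edge $V^{j+1}_t\to V^j_t$ would then force $V^{j+1}\notin Desc(Y,\mathcal{G}_s)$, which is impossible since $V^{j+1}$ is either $Y$ or a non-collider at time $t$ that would then lie in $\mathcal{A}_{\le t}\subseteq\mathcal{Z}_f$ and block the path. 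Both routes rest on the same two pillars (the time-window reduction of Lemma~\ref{lem:blocked_when_tmax>t_or_tmin<t-gammaxy-gammamax}, and the fact that a collider at time $t$ projecting to a descendant of $Y$ has all its FTCG-descendants in $\mathcal{D}_{\ge t}$, hence disjoint from $\mathcal{Z}_f$); the paper's version is more modular, reusing Lemma~\ref{lem:identifiability:BlocksWalksNotinDescendantsY} and exhibiting the closed collider directly, whereas yours is self-contained, avoids the reduction to $\langle V^2,\dots,V^n\rangle\subseteq Desc(Y,\mathcal{G}_s)$, but re-derives inline part of what that lemma provides. Two checks you leave implicit deserve one line each: the blocking vertices you use, $V^{i_{\max}}_{t^{i_{\max}}}$ and $V^{j+1}_t$, must differ from $X_{t-\gamma_{xy}}$ (excluded from $\mathcal{Z}_f$ by Definition~\ref{def:huge-single-door_set}), which holds because $\pi_f$ is a path with $V^1_{t^1}=X_{t-\gamma_{xy}}$ and because $t\neq t-\gamma_{xy}$ when $\gamma_{xy}>0$.
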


\begin{proof}
    Let $\gamma_{xy}>0$ and $\pi_s=\langle V^1,\dots,V^n \rangle$ be a non-direct walk between $X$ and $Y$ such that $\exists 1 \leq i < n,~V^i \leftarrow V^{i+1}$.
    Then, take a path $\pi_f=\langle V^1_{t^1},\dots,V^n_{t^n} \rangle$ from $X_{t-\gamma_{xy}}$ to $Y_t$ compatible with $\pi_s$ (\ie, $\pi_f\in\phi^{-1}(\pi_s)$) such that $t_{min}(\pi_f) \geq t - \gamma_{xy}$ and $t_{max}(\pi_f)\leq t$ by Lemma~\ref{lem:blocked_when_tmax>t_or_tmin<t-gammaxy-gammamax} and take $1\leq i < n \st V^i_{t^i} \leftarrow V^{i+1}_{t^{i+1}}$.
    If $\langle V^2,\dots,V^n \rangle\not\subseteq Desc(Y,\mathcal{G}_s)$ then Lemma~\ref{lem:identifiability:BlocksWalksNotinDescendantsY} states that $\pi_f$ is $\mathcal{Z}_f$-blocked.
    Therefore, on can suppose $\langle V^2,\dots,V^n \rangle\subseteq Desc(Y,\mathcal{G}_s)$ and in particular $V^{i+1}\in Desc(Y,\mathcal{G}_s)$.
    If $t^{i+1} < t$ then $V^{i+1}_{t^{i+1}} \in \mathcal{D}_{< t} \subseteq \mathcal{Z}_f$ and $\pi_f$ is manually blocked by $V^{i+1}_{t^{i+1}} \in \mathcal{Z}_f$.
    Else,  $V^{i+1}_{t^{i+1}} \in \mathcal{D}_{\geq t}$. Since $\gamma_{xy} >0$, one can take $j=max\{1 < j \leq i | V^{j-1}_{t^{j-1}} \rightarrow V^j_{t^j}\}$.
    Notice that $V^j_{t^j}$ is a collider and that $V^j_{t^j} \in Desc(V^i_{t^i},\mathcal{G}_f)$ so $V^j_{t^j} \in \mathcal{D}_{\geq t}$ and $Desc(V^j_{t^j},\mathcal{G}_s) \subseteq \mathcal{D}_{\geq t}$.
    Therefore, $\pi_f$ is passively blocked by $V^{j}_{t^{j}} \in \mathcal{Z}_f$.
\end{proof}

Now we give the complementary of Lemma~\ref{lem:identifiability:BlocksWhenGamma>0} in the case where $X\rightleftarrows Y$.

\begin{restatable}{lemma}{mylemmafive}
    \label{lem:identifiability:BlocksWhenGamma>0Andn=2}
    Let $\mathcal{G}_s=(\mathcal{V}_s, \mathcal{E}_s)$ be a SCG, $\gamma_{max} \geq 0$ a maximum lag and $\alpha_{Y_t,X_{t-\gamma_{xy}}}$ the direct effect of $X_{t-\gamma_{xy}}$ on $Y_t$ such that $X, Y \in \mathcal{V}_s$, $X \neq Y$ and $0 < \gamma_{xy} \leq \gamma_{max}$.
    For every walk $\pi_s=\langle V^1,\dots,V^n \rangle$ from $X$ to $Y$ where $\exists 1 < i < n,~V^i = Y$, every compatible path $\pi_f\in \phi^{-1}(\pi_s)$ from $X_{t-\gamma_{xy}}$ to $Y_t$ can be blocked by $\mathcal{Z}_{f} = \mathcal{A}_{\leq t} \cup \mathcal{D}_{<t}$ (notice $\mathcal{Z}_{f} \cap \mathcal{D}_{\geq t} = \emptyset$) where $\mathcal{A}_{\leq t},~\mathcal{D}_{<t}$ are defined in Definition~\ref{def:huge-single-door_set} and $\mathcal{D}_{\geq t}$ is defined in Lemma~\ref{lem:identifiability:BlocksWhenGamma>0}.
\end{restatable}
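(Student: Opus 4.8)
The plan is to reduce, using Lemma~\ref{lem:blocked_when_tmax>t_or_tmin<t-gammaxy-gammamax}, to compatible paths that stay inside the time window $[t-\gamma_{xy},t]$, and then to use the fact that the walk revisits $Y$ at an interior position to exhibit an explicit non-collider of $\pi_f$ that sits in $\mathcal{Z}_f$. So first I would fix an arbitrary compatible path $\pi_f=\langle V^1_{t^1},\dots,V^n_{t^n}\rangle\in\phi^{-1}(\pi_s)$ with $V^1_{t^1}=X_{t-\gamma_{xy}}$ and $V^n_{t^n}=Y_t$. If $t_{max}(\pi_f)>t$ or $t_{min}(\pi_f)<t-\gamma_{xy}$, then Lemma~\ref{lem:blocked_when_tmax>t_or_tmin<t-gammaxy-gammamax} immediately gives that $\pi_f$ is blocked by $\mathcal{Z}_f$, since $\mathcal{Z}_f\cap\{V_{t'}\mid t'>t\}=\emptyset$ and $\{V_{t'}\mid t-\gamma_{max}\le t'<t\}\setminus\{X_{t-\gamma_{xy}}\}\subseteq\mathcal{Z}_f$. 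Hence I may assume from now on that $t-\gamma_{xy}\le t^j\le t$ for every $j$.

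The crux is a strict-inequality observation. By hypothesis there is an interior index $i$ with $1<i<n$ and $V^i=Y$, which in the FTCG is the vertex $Y_{t^i}$. Because $\pi_f$ is a genuine path it cannot repeat its endpoint $Y_t=V^n_{t^n}$, so $Y_{t^i}\neq Y_t$ and therefore $t^i\neq t$; together with $t^i\le t$ this forces $t^i<t$. Consequently the set $\{1<j<n\mid t^j<t\}$ is nonempty, and I let $i_{max}$ be its largest element. By maximality every interior index strictly after $i_{max}$ has time exactly $t$ (it is $\le t$ and not $<t$), and the endpoint $V^n_{t^n}$ has time $t$ as well; in particular $t^{i_{max}+1}=t>t^{i_{max}}$. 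Since FTCG edges respect the time order, the edge joining $V^{i_{max}}$ and $V^{i_{max}+1}$ must be oriented forward, i.e.\ $V^{i_{max}}_{t^{i_{max}}}\to V^{i_{max}+1}_{t^{i_{max}+1}}$, so $V^{i_{max}}_{t^{i_{max}}}$ carries an outgoing arrow on $\pi_f$ and is therefore a non-collider.

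It then remains to check that $V^{i_{max}}_{t^{i_{max}}}\in\mathcal{Z}_f$, which closes $\pi_f$ through the manual-blocking clause of Definition~\ref{def:BlockedPathInFTCG}. Indeed $t-\gamma_{max}\le t-\gamma_{xy}\le t^{i_{max}}<t$, and $V^{i_{max}}_{t^{i_{max}}}\neq X_{t-\gamma_{xy}}=V^1_{t^1}$ since the two are distinct vertices of a path; so whether or not $V^{i_{max}}\in Desc(Y,\mathcal{G}_s)$, the vertex lands in $\mathcal{D}_{<t}$ or in $\mathcal{A}_{\le t}$ respectively, hence in $\mathcal{Z}_f$. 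This makes $\pi_f$ manually blocked, completing the proof.

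I expect the only genuinely delicate point to be the strict inequality $t^i<t$: the rest is exactly the ``last interior vertex strictly before $t$'' device already used in the forward direction of Theorem~\ref{theorem:identifiability}, but here its applicability rests entirely on acyclicity of the FTCG forcing $t^i\neq t$ once the walk is known to revisit $Y$. The hypothesis $\gamma_{xy}>0$ enters precisely to leave room below $t$ via Lemma~\ref{lem:blocked_when_tmax>t_or_tmin<t-gammaxy-gammamax}: when $\gamma_{xy}=0$ every surviving path is confined to time $t$, so $V^i_{t^i}$ would coincide with $Y_t$ and no such path exists in the first place.
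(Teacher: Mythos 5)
Your proposal is correct. The reduction via Lemma~\ref{lem:blocked_when_tmax>t_or_tmin<t-gammaxy-gammamax} to paths confined to $[t-\gamma_{xy},t]$, and the key observation that the interior revisit of $Y$ must satisfy $t^i\neq t$ (hence $t^i<t$) because $\pi_f$ is a path, are exactly the paper's starting point. Where you diverge is in how the blocking vertex is exhibited: the paper works locally at the revisited vertex $Y_{t^i}$ itself, splitting into two cases --- if $Y_{t^i}$ is a non-collider on $\pi_f$ it lies in $\mathcal{D}_{<t}$ and blocks manually; if it is a collider $\rightarrow Y_{t^i}\leftarrow V^{i+1}_{t^{i+1}}$, then $t^{i+1}\leq t^i<t$ and the parent $V^{i+1}_{t^{i+1}}$ (distinct from $X_{t-\gamma_{xy}}$ since $\pi_f$ is a path) is a non-collider lying in $\mathcal{Z}_f$. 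You instead use the revisit only to guarantee that $\{1<j<n\mid t^j<t\}$ is nonempty and then apply the ``last interior index with time strictly before $t$'' device (the same one the paper uses in the first bullet of the forward direction of Theorem~\ref{theorem:identifiability}), which yields a forward-oriented edge $V^{i_{max}}_{t^{i_{max}}}\rightarrow V^{i_{max}+1}_{t}$ and hence a non-collider in $\mathcal{A}_{\leq t}\cup\mathcal{D}_{<t}$ regardless of its descendant status. Your version buys uniformity --- no collider/non-collider case split and no need to argue about the neighbor of the $Y$ revisit --- while the paper's version pins down the blocker at or immediately adjacent to the revisited copy of $Y$; both arguments are sound and of comparable length.
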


\begin{proof}
    Let $\pi_s=\langle V^1,\dots,V^n \rangle$ be a walk between $X$ and $Y$ as described.
    Then, take $\pi_f=\langle V^1_{t^1},\dots,V^n_{t^n} \rangle$ from $X_{t-\gamma_{xy}}$ to $Y_t$ compatible with $\pi_s$ (\ie, $\pi_f\in\phi^{-1}(\pi_s)$) and $1<i<n \st V^{i} = Y$.
    Since $\pi_f$ is a path, $t^{i} \neq t$ so by Lemma~\ref{lem:blocked_when_tmax>t_or_tmin<t-gammaxy-gammamax}, $t-\gamma_{xy} \leq t^{i} < t$.
    If $\leftarrow V^{i}_{t^{i}}$ or $V^{i}_{t^{i}} \rightarrow$ then $\pi_f$ is manually blocked by $V^{i}_{t^{i}} \in \mathcal{D}_{<t} \subseteq \mathcal{Z}_f$.
    If $\rightarrow V^{i}_{t^{i}} \leftarrow V^{i+1}_{t^{i+1}}$ then $t^{i+1}\leq t^i < t$ and $\pi_f$ is a path so $V^{i+1}_{t^{i+1}} \neq X_{t-\gamma_{xy}}$.
    Therefore, $\pi_f$ is manually blocked by $V^{i+1}_{t^{i+1}} \in \mathcal{Z}_f$.
\end{proof}

Since Lemmas~\ref{lem:identifiability:BlocksWalksNotinDescendantsY},\ref{lem:identifiability:BlocksWhenGamma>0} and \ref{lem:identifiability:BlocksWhenGamma>0Andn=2} consider walks in SCGs but it is computationally easier to consider paths in SCGs, in the following we provide a list of properties to reconcile these two notions.

Let $\pi_s$ be a walk and $\pi'_s$ its primary path. They verify the following properties.

\begin{restatable}{property}{mypropertyone}
        \label{prop:primpath1} If $\pi'_s$ is passively blocked by $U^i$ then $\pi_s$ is passively blocked by at least a descendant of $U^i$.
\end{restatable}

\begin{proof}
    Suppose $\pi'_s$ is passively blocked by $U^i$ then there exists $i_1 \leq i \leq i_2$ such that $\pi'_s = \langle\dots \rightarrow U^{i_1} \rightleftarrows \dots \rightleftarrows U^i \rightleftarrows \dots \rightleftarrows U^{i_2} \leftarrow \dots \rangle$.
    Thus there exists $j_1 \leq j \leq j_2$ such that $V^{j_1}=U^{i_1}$, $V^{j} = U^{i}$, $V^{j_2}=U^{i_2}$ and $\pi_s = \langle \dots \rightarrow V^{j_1} \dots V^{j_2} \leftarrow \dots \rangle$.
    Let $k_1 = max\{j_1\leq k \leq j | V^{k-1}\rightarrow V^{k}\}$ and $k_2 = min\{j\leq k \leq j_2 | V^{k}\leftarrow V^{k+1}\}$.
    Notice that $V^{k_1},V^{k_2} \in Desc(V^j,\mathcal{G}_s)$ and since $V^{j} = U^{i}$, this corresponds to $V^{k_1},V^{k_2} \in Desc(U^i,\mathcal{G}_s)$.
    Moreover, at least one of $V^{k_1}$ and $V^{k_2}$ is a collider in $\pi_s$.
    Indeed, if all edges between $V^{k_1}$ and $V^{k_2}$ are $\rightleftarrows$ then both $V^{k_1}$ and $V^{k_2}$ are colliders.
    If $\exists k_1\leq k < k_2$ such that $V^k\rightarrow V^{k+1}$ then by definition of $k_1$, one can deduct that $k\geq j$ and $V^{k_2}$ is a collider.
    Similarly, if $\exists k_1< k \leq k_2$ such that $V^{k-1} \leftarrow V^{k}$ then by definition of $k_2$, one can deduct that $k\leq j$ and $V^{k_1}$ is a collider.
    Thus, $\pi_s$ is passively blocked by at least a descendant of $U^i$.
\end{proof}

\begin{restatable}{property}{mypropertytwo}
        \label{prop:primpath2} If $m=2$ then either $n=2$ or $\exists 1 < i < n \st V^i=X$ or $V^i = Y$.
\end{restatable}

\begin{proof}
    If $m=2$ then $V^{max\{i|V^i=V^1\}+1} = V^n$.
    Firstly, if $max\{i|V^i=V^1\}+1 = n$ then either $n=2$ or $i=max\{i|V^i=V^1\}$ verifies $1<i<n$ and $V^{i} = V^1 = X$.
    Secondly, if $max\{i|V^i=V^1\}+1 < n$ then $i=max\{i|V^i=V^1\}+1$ verifies $1<i<n$ and $V^{i} = V^n = Y$.
\end{proof}

\begin{restatable}{property}{mypropertythree}
        \label{prop:primpath3} If $\langle U^2,\dots,U^{m-1} \rangle \not\subseteq Desc(Y, \mathcal{G}_s)$ then $\langle V^2,\dots,V^{n-1} \rangle \not\subseteq Desc(Y, \mathcal{G}_s)$.
\end{restatable}

\begin{proof}
    Since $\{U^1,\dots,U^m\} \subseteq \{V^1,\dots,V^n\}$, $\langle U^2,\dots,U^{m-1} \rangle \not\subseteq Desc(Y, \mathcal{G}_s) \implies \langle V^2,\dots,V^{n-1} \rangle \not\subseteq Desc(Y, \mathcal{G}_s)$.
\end{proof}

\begin{restatable}{property}{mypropertyfour}
        \label{prop:primpath4} If $\exists 1 \leq i < m,~U^i \leftarrow U^{i+1}$ then $\exists 1 \leq i < n,~V^i \leftarrow V^{i+1}$.
\end{restatable}

\begin{proof}
    $\forall 1 \leq i < m,~\exists 1\leq j < n$ such that $V^j = U^i$ and $V^{j+1} = U^{i+1}$, therefore $\exists 1 \leq i < m,~U^i \leftarrow U^{i+1} \implies \exists 1 \leq j < n,~V^j \leftarrow V^{j+1}$.
\end{proof}

Lemma~\ref{lem:identifiability:XnotinParentsY} deals with the trivial case of identifiability.
Lemma~\ref{lem:blocked_when_tmax>t_or_tmin<t-gammaxy-gammamax} shows that every path outside of the time slices of $X_{t-\gamma_{xy}}$ and $Y_t$ are easily blocked.
Property~\ref{prop:primpath1} shows that passively blocked paths are not problematic for identification.
Lemma~\ref{lem:identifiability:BlocksWalksNotinDescendantsY} together with Property~\ref{prop:primpath3} states that we will always be able to block paths in which some vertices are not descendants of $Y$.
Lemmas~\ref{lem:identifiability:BlocksWhenGamma>0} and ~\ref{lem:identifiability:BlocksWhenGamma>0Andn=2} together with Properties~\ref{prop:primpath2} and ~\ref{prop:primpath4} show that in the case of positive lag (\ie, $\gamma_{xy}>0$) we can use this temporal information to block other specific paths.
Together, these lemmas give a set of sufficient conditions for the direct effect to be identifiable.
These conditions are in fact necessary and sufficient.
This is summarized in Theorem~\ref{theorem:identifiability}.

\mytheoremone*

\begin{proof}
    Firstly, let us prove the backward implication.
    Let $\gamma_{xy} \geq 0$ and $X\in Par(Y,\mathcal{G}_s)$. 
    \begin{itemize}
        \item Suppose Condition~\ref{item:1} holds.
        Let $C$ be a cycle on $X$ with $Y\notin C$ and since $X\in Par(Y,\mathcal{G}_s)$, $\pi_s=C+Y=\langle V^1,\dots,V^{n} \rangle$ is a walk in $\mathcal{G}_s$.
        If $\gamma_{xy} > 0$ then $\pi_f = \langle V^1_{t-\gamma_{xy}},V^2_t,\dots,V^{n}_t \rangle$ is a path that exists in a compatible FTCG and every set $\mathcal{Z}_f$ that blocks this path contains a descendant of $Y_t$ in another compatible FTCG.
        If $\gamma_{xy}=0$, then because $X\in Desc(Y,\mathcal{G}_s)$ one can take $\langle V^n,\dots,V^1 \rangle$ a directed path from $Y$ to $X$ and $\pi_f = \langle V^1_{t},\dots,V^{n}_t \rangle$ is a path that exists in a compatible FTCG and every set $\mathcal{Z}_f$ that blocks this path contains a descendant of $Y_t$ in another compatible FTCG.
        \item Suppose Condition~\ref{item:2} holds.
        Let $\pi_s = \langle V^1,\dots,V^n \rangle$ as described.
        If $\pi_s=X\rightleftarrows Y$ and $\gamma_{xy}=0$, then there exists a compatible FTCG $\mathcal{G}_f$ in which the path $\pi_f=X_t \leftarrow Y_t$ exists and cannot be blocked.
        Else, $n \geq 3$ and $\pi_s$ is active so there exists a compatible FTCG $\mathcal{G}_f^1$ in which the path $\pi_f = \langle V^1_{t-\gamma_{xy}},V^2_t,\dots,V^{n-1}_t,V^n_t \rangle$ exists and is active.
        Notice that there exists another compatible FTCG $\mathcal{G}_f^2$ in which $\{V^2_t,\dots,V^{n-1}_t\} \subseteq Desc(Y_t,\mathcal{G}_f^2)$ and every set $\mathcal{Z}_f$ that blocks $\pi_f$ in $\mathcal{G}_f^1$ contains a vertex in $Desc(Y_t,\mathcal{G}_f^2)$.
    \end{itemize}

    
    Lemma~\ref{lem:identifiability:XnotinParentsY} gives the first trivial condition $X \notin Par(Y,\mathcal{G}_s) \implies \alpha_{Y_t, X_{t-\gamma_{xy}}}$ identifiable, so we assume in the remaining of the proof $X \in Par(Y,\mathcal{G}_s)$.
    To prove the forward implication it suffices to prove that if we suppose that $\mathcal{G}_s$ does not verify any condition of Theorem~\ref{theorem:identifiability}, then there exists an adjustment set that: 
    \begin{enumerate}
    \item does not contain any descendant of $Y_t$ in any FTCG that is compatible with $\mathcal{G}_s$ nor $X_{t-\gamma_{xy}}$, and
    \item blocks every non-direct path from $X_{t-\gamma_{xy}}$ to $Y_t$ in every FTCG that is compatible with $\mathcal{G}_s$.
    \end{enumerate}
    Consider $\mathcal{Z}_{f}$ as defined in Definition~\ref{def:huge-single-door_set}.
    By construction $\mathcal{Z}_{f}$ does not contain any descendant of $Y_t$ nor $X_{t-\gamma_{xy}}$. 
    To show the second point, let us consider $\pi_f=\langle V^1_{t^1},\dots,V^n_{t^n} \rangle$ to be a path from $X_{t-\gamma_{xy}}$ to $Y_t$ in a FTCG $\mathcal{G}_f$ compatible with $\mathcal{G}_s$.
    Let $\phi(\pi_f)=\pi_s=\langle V^1,\dots,V^n \rangle$ be its compatible walk in $\mathcal{G}_s$ and $\pi'_s = \langle U^1,\dots,U^m \rangle$ the primary path of $\pi_s$.
    In the following, we exhaustively explore the possible characteristics of $\pi'_s$ and show how in every case either it is either direct, or $\mathcal{Z}_f$-blocked or one of the conditions of Theorem~\ref{theorem:identifiability} is verified.
    
    Firstly, suppose $m=2$.
    According to Property~\ref{prop:primpath2} this implies either $n=2$ or $\exists 1<i<n \st V^i=Y$ or $\exists 1<i<n \st V^i=X$.
    \begin{itemize}
        \item If $n=2$, then either $\pi_s$ is direct and so is $\pi_f$ or $\pi_s = X\rightleftarrows Y$.
        In this second case, if $\gamma_{xy}=0$ then Condition~\ref{item:2.1} of Theorem~\ref{theorem:identifiability} is verified and if $\gamma_{xy}>0$ then $\pi_f$ is direct as a path $X_{t-\gamma_{xy}}\leftarrow Y_t$ is not possible.
        \item If $\exists 1<i<n \st V^i=Y$, then, if $\gamma_{xy}>0$, Lemma~\ref{lem:identifiability:BlocksWhenGamma>0Andn=2} shows that $\pi_f$ is $\mathcal{Z}_f$-blocked.
        Moreover, since $\pi_f$ is a path, one knows that $t^i \neq t$, and if $\gamma_{xy}=0$, this forces $t_{max}(\pi_f)>t$ or $t_{min}(\pi_f)<t-\gamma_{xy}$ in which case Lemma~\ref{lem:blocked_when_tmax>t_or_tmin<t-gammaxy-gammamax} shows that $\pi_f$ is $\mathcal{Z}_f$-blocked.
        \item If $\forall 1<i<n, V^i\neq Y$ and $\exists 1<i<n \st V^i=X$, since $\pi_f$ is a path, one knows that $t^i \neq t-\gamma_{xy}$, and if $\gamma_{xy}=0$, this forces $t_{max}(\pi_f)>t$ or $t_{min}(\pi_f)<t-\gamma_{xy}$ in which case Lemma~\ref{lem:blocked_when_tmax>t_or_tmin<t-gammaxy-gammamax} shows that $\pi_f$ is $\mathcal{Z}_f$-blocked.
        Additionally, if $\gamma_{xy}>0$, Lemma~\ref{lem:identifiability:BlocksWalksNotinDescendantsY} shows that if $V^i = X\notin Desc(Y,\mathcal{G}_s)$ then $\pi_f$ is $\mathcal{Z}_f$-blocked and Lemma~\ref{lem:identifiability:BlocksWhenGamma>0} shows that if $\exists 1\leq j < n \st V^j \leftarrow V^{j+1}$ then  $\pi_f$ is $\mathcal{Z}_f$-blocked.
        However, if $\forall 1\leq j < n$, $V^j \rightarrow V^{j+1}$ or $V^j \rightleftarrows V^{j+1}$, then $C=\langle V^1,\dots,V^i \rangle \in Cycles(X,\mathcal{G}_s)$ and $Y\notin C$ and together with $X\in Desc(Y,\mathcal{G}_s)$ this verifies Condition~\ref{item:1} of Theorem~\ref{theorem:identifiability}.
    \end{itemize}
    Now, suppose $m\geq3$.
    \begin{itemize}
        \item If $\exists 1<i<m$ such that $U^i\notin Desc(Y,\mathcal{G}_s)$.
        Then, Property~\ref{prop:primpath3} shows that $\exists 1<j<n$ such that $V^j\notin Desc(Y,\mathcal{G}_s)$.
        In this case, Lemma~\ref{lem:identifiability:BlocksWalksNotinDescendantsY} shows that $\pi_f$ is $\mathcal{Z}_f$-blocked.
        \item If $\pi'_s$ is not active, then it is passively blocked by $U^i$ for $1<i<m$.
        One can assume $U^i\in Desc(Y,\mathcal{G}_s)$ as the case $U^i\notin Desc(Y,\mathcal{G}_s)$ was previously treated.
        Then, Property~\ref{prop:primpath3} shows that $\pi_s$ is passively blocked by $V^j \in Desc(U^i,\mathcal{G}_s) \subseteq Desc(Y,\mathcal{G}_s)$ for $1<j<n$.
        If $\gamma_{xy} = 0$, then, either $t^j \neq t$ and Lemma~\ref{lem:blocked_when_tmax>t_or_tmin<t-gammaxy-gammamax} shows that $\pi_f$ is $\mathcal{Z}_f$-blocked, either $t^j=t$ and thus $V^j_{t^j}\in \mathcal{D}_{\geq t}$ and $\pi_f$ contains a collider $V^k_{t^k} \in Desc(V^j_{t^j},\mathcal{G}_f) \subseteq \mathcal{D}_{\geq t}$ so $\pi_f$ is $\mathcal{Z}_f$-blocked.
        If $\gamma_{xy}>0$, then since $\pi_s$ is passively blocked by $V^j \in Desc(U^i,\mathcal{G}_s) \subseteq Desc(Y,\mathcal{G}_s)$, there $\exists j\leq k$ such that $V^k\leftarrow V^{k+1}$ in $\pi_s$ in which case Lemma~\ref{lem:identifiability:BlocksWhenGamma>0} shows that $\pi_f$ is $\mathcal{Z}_f$-blocked.
        \item If no previous condition is verified and $\gamma_{xy}=0$ then Condition~\ref{item:2.1} of Theorem~\ref{theorem:identifiability} is verified and if $\gamma_{xy}>0$ and $\nexists 1\leq i < m \st U^i\leftarrow U^{i+1}$ in $\pi'_s$ then Condition~\ref{item:2.2} of Theorem~\ref{theorem:identifiability} is verified.
        Therefore, the last case remaining is when $\gamma_{xy}>0$ and $\exists 1\leq i < m \st U^i\leftarrow U^{i+1}$ in $\pi'_s$.
        Property~\ref{prop:primpath4} shows that, in this case, $\exists 1\leq j < n \st V^j\leftarrow V^{j+1}$ in $\pi_s$.
        According to Lemma~\ref{lem:identifiability:BlocksWhenGamma>0}, $\pi_f$ is $\mathcal{Z}_f$-blocked.
    \end{itemize}
\end{proof}

\mypropositionone*
\begin{proof}
    Let $\mathcal{G}_s=(\mathcal{V}_s,\mathcal{E}_s)$ be a SCG, $X,Y\in \mathcal{V}_s$ with $X\in Par(Y,\mathcal{G}_s)$ and $\gamma_{xy}$ be a lag.
    Suppose the direct effect of $X_{t-\gamma_{xy}}$ on $Y_t$ is identifiable following Theorem~\ref{theorem:identifiability}.
    Let $\mathcal{Z}_{f}$ be the adjustment set relative to $(X_{t-\gamma_{xy}}, Y_t)$ defined in Definition~\ref{def:single-door_set}.
    Let $\mathcal{G}_f=(\mathcal{V}_f,\mathcal{E}_f)$ be a compatible FTCG of maximal lag at most $\gamma_{max}$ compatible with $\mathcal{G}_s$.
    \begin{itemize}
        \item Firstly, using the decomposition $\mathcal{Z}_{f} = \mathcal{D}^{Anc(Y)}_{<t} \cup \mathcal{A}^{Anc(Y)}_{\leq t}$ as in Definition~\ref{def:single-door_set}, and because $Desc(Y_t,\mathcal{G}_f) \subseteq \{V_{t'} | V\in Desc(Y,\mathcal{G}_s),~t'\geq t\}$ it is clear that $\mathcal{Z}_{f} \cap (Desc(Y_t,\mathcal{G}_f) \cup \{X_{t-\gamma_{xy}}\}) = \emptyset$.
        \item Secondly, let $\pi_f=\langle V^1_{t^1},\dots,V^n_{t^n} \rangle$ be a non-direct path from $X_{t-\gamma_{xy}}$ to $Y_t$ in FTCG $\mathcal{G}_f$ and $\pi_s=\langle V^1,\dots,V^n \rangle=\phi(\pi_f)$ its compatible walk.
        If $t_{max}(\pi_f) > t$ then $\exists 1 < k< n \st \rightarrow V^k_{t_{max}(\pi_f)} \leftarrow$ in $\pi_f$ with $Desc(V^k_{t_{max}(\pi_f)},\mathcal{G}_f) \cap \mathcal{Z}_{f} = \emptyset$ and thus $\pi_f$ is passively blocked by $\mathcal{Z}_{f}$.
        Therefore, for the following we can suppose $t_{max}(\pi_f) \leq t$.
        Because the direct effect of $X_{t-\gamma_{xy}}$ on $Y_t$ is identifiable following Theorem~\ref{theorem:identifiability} we know that $\pi_f \neq \langle V^1_{t^1} \leftarrow \dots \leftarrow V^n_{t^n} \rangle$. Therefore, $\exists k_{max} = max\{1 < k \leq n| V^{k-1}_{t^{k-1}} \rightarrow V^k_{t^k}\}$ with $t^{k_{max}} \geq t $ and $V^{k_{max}}\in Desc(Y,\mathcal{G}_s)$ and since $\pi_f$ is non-direct this forces $n\geq 3$.
        \begin{itemize}
            \item If $k_{max} < n$ then $\rightarrow V^{k_{max}}_{t^{k_{max}}} \leftarrow$ and $Desc(V^{k_{max}}_{t^{k_{max}}},\mathcal{G}_f) \cap \mathcal{Z}_{f} = \emptyset$ so $\pi_f$ is passively blocked by $\mathcal{Z}_{f}$.
            \item If $k_{max}=n$ (\ie, $V^{n-1}_{t^{n-1}} \rightarrow V^n_{t^n}$) and $\pi_f = \langle V^1_{t^1}\rightarrow \dots \rightarrow V^n_{t^n} \rangle$ then because the direct effect of $X_{t-\gamma_{xy}}$ on $Y_t$ is identifiable following Theorem~\ref{theorem:identifiability} we know that $\exists d_{max} = \max\{1 < d < n | V^d  \notin Desc(Y,\mathcal{G}_s)\}$ and $t-\gamma_{max} \leq t^{d_{max}} \leq t$, so since $V^{d_{max}}\in Anc(Y,\mathcal{G}_s) \backslash Desc(Y,\mathcal{G}_s)$, we have $V^{d_{max}}_{t^{d_{max}}} \in \mathcal{A}^{Anc(Y)}_{\leq t}$ and thus $\pi_f$ is manually blocked by $\mathcal{Z}_{f}$.
            \item If $k_{max}=n$ (\ie, $V^{n-1}_{t^{n-1}} \rightarrow V^n_{t^n}$) and $\exists l_{max} = max\{ 1 < l< n |V^{l-1}_{t^{l-1}} \leftarrow V^l_{t^l}\}$ then $\langle V^{l_{max}},\dots,V^n\rangle \subseteq Anc(Y,\mathcal{G}_s)$
            \begin{itemize}
                \item If $t^{l_{max}} < t$ then $\exists l_{max} \leq i$ such that $V^i_{t^i} \rightarrow V^{i+1}_{t^{i+1}}$ and $t-\gamma_{max} \leq t^i < t^{i+1} = t$ and since $V^{i} \in Anc(Y,\mathcal{G}_s)$, $V^{i}_{t^{i}} \in \mathcal{Z}_{f}$ and thus $\pi_f$ is manually blocked by $\mathcal{Z}_{f}$.
                \item If $V^{l_{max}} \notin Desc(Y,\mathcal{G}_s)$ and $t^{l_{max}} = t$ then, since $V^{l_{max}} \in Anc(Y,\mathcal{G}_s)$, $V^{l_{max}}_{t^{l_{max}}} \in \mathcal{A}^{Anc(Y)}_{\leq t}$ and thus $\pi_f$ is manually blocked by $\mathcal{Z}_{f}$.
                \item If $V^{l_{max}} \in Desc(Y,\mathcal{G}_s)$ and $t^{l_{max}} = t$ and $\exists r_{max} = max\{1 < r < l_{max} | V^{r-1}_{t^{r-1}} \rightarrow V^{r}_{t^{r}}\}$, then notice that $\langle \rightarrow V^{r_{max}}_{t^{r_{max}}} \leftarrow \dots \leftarrow V^{l_{max}}_{t^{l_{max}}} \rangle$ forces $V^{r_{max}} \in Desc(Y,\mathcal{G}_s)$ and $t^{r_{max}} = t$ so $\pi_f$ is passively blocked by $\mathcal{Z}_f$.
                \item If $V^{l_{max}} \in Desc(Y,\mathcal{G}_s)$ and $t^{l_{max}} = t$ and $\nexists 1 < r < l_{max} \st V^{r-1}_{t^{r-1}} \rightarrow V^{r}_{t^{r}}$, then $\pi_f=\langle X_{t-\gamma_{xy}} \leftarrow \dots \leftarrow V^{l_{max}}_{t^{l_{max}}} \rightarrow \dots \rightarrow Y_t \rangle$.
                Since $t^{l_{max}}=t$ and $t_{max}(\pi_f)\leq t$, $\forall 1 \leq i \leq n$, $t^i = t$ and in particular $\gamma_{xy}=0$ and thus because $\pi_f$ is a path, $\pi_s = \langle V^1,\dots,V^n\rangle$ is also a path.
                Moreover, since $V^{l_{max}} \in Desc(Y,\mathcal{G}_s)$,  $\forall 1 \leq i \leq n$, $V^i \in Desc(Y,\mathcal{G}_s)$.
                Therefore, $\pi_s$ verifies Condition~\ref{item:2.1} which is impossible.
            \end{itemize}
        \end{itemize}
        In conclusion, $\pi_f$ is blocked by $\mathcal{Z}_{f}$.
    \end{itemize}
\end{proof}



\bibliography{aaai24}

\begin{thebibliography}{23}
\providecommand{\natexlab}[1]{#1}

\bibitem[{Assaad, Devijver, and Gaussier(2022)}]{Assaad_2022_a}
Assaad, C.~K.; Devijver, E.; and Gaussier, E. 2022.
\newblock Survey and Evaluation of Causal Discovery Methods for Time Series.
\newblock \emph{J. Artif. Int. Res.}, 73.

\bibitem[{Assaad, Ez-Zejjari, and Zan(2023)}]{Assaad_2023}
Assaad, C.~K.; Ez-Zejjari, I.; and Zan, L. 2023.
\newblock Root Cause Identification for Collective Anomalies in Time Series
  given an Acyclic Summary Causal Graph with Loops.
\newblock In Ruiz, F.; Dy, J.; and van~de Meent, J.-W., eds., \emph{Proceedings
  of The 26th International Conference on Artificial Intelligence and
  Statistics}, volume 206 of \emph{Proceedings of Machine Learning Research},
  8395--8404. PMLR.

\bibitem[{Aït-Bachir et~al.(2023)Aït-Bachir, Assaad, de~Bignicourt, Devijver,
  Ferreira, Gaussier, Mohanna, and Zan}]{Ait_Bachir_2023}
Aït-Bachir, A.; Assaad, C.~K.; de~Bignicourt, C.; Devijver, E.; Ferreira, S.;
  Gaussier, E.; Mohanna, H.; and Zan, L. 2023.
\newblock Case Studies of Causal Discovery from IT Monitoring Time Series.
\newblock Submitted.

\bibitem[{Connell(1961)}]{Connell_1961}
Connell, J.~H. 1961.
\newblock The Influence of Interspecific Competition and Other Factors on the
  Distribution of the Barnacle Chthamalus Stellatus.
\newblock \emph{Ecology}, 42(4): 710--723.

\bibitem[{Duncan(1975)}]{Duncan_1975}
Duncan, O. 1975.
\newblock \emph{Introduction to Structural Equation Models}.
\newblock Studies in population. Academic Press.
\newblock ISBN 9780122241505.

\bibitem[{Forré and Mooij(2017)}]{Mooij_2017}
Forré, P.; and Mooij, J. 2017.
\newblock Markov Properties for Graphical Models with Cycles and Latent
  Variables.

\bibitem[{Goldberger(1972)}]{Goldberger_1972}
Goldberger, A.~S. 1972.
\newblock Structural Equation Methods in the Social Sciences.
\newblock \emph{Econometrica}, 40(6): 979--1001.

\bibitem[{Haavelmo(1943)}]{Haavelmo_1943}
Haavelmo, T. 1943.
\newblock The Statistical Implications of a System of Simultaneous Equations.
\newblock \emph{Econometrica}, 11(1): 1--12.

\bibitem[{Hernan and Robins(2023)}]{Hernan_2023}
Hernan, M.; and Robins, J. 2023.
\newblock \emph{Causal Inference: What If}.
\newblock Chapman \& Hall/CRC Monographs on Statistics \& Applied Probab. CRC
  Press.
\newblock ISBN 9781420076165.

\bibitem[{Pearl(1998)}]{Pearl_1998}
Pearl, J. 1998.
\newblock Graphs, Causality, and Structural Equation Models.
\newblock \emph{Sociological Methods \& Research}, 27(2): 226--284.

\bibitem[{Pearl(2000)}]{Pearl_2000}
Pearl, J. 2000.
\newblock \emph{Causality: Models, Reasoning, and Inference}.
\newblock New York, NY, USA: Cambridge University Press.
\newblock ISBN 0-521-77362-8.

\bibitem[{Pearl(2001)}]{Pearl_2001}
Pearl, J. 2001.
\newblock Direct and Indirect Effects.
\newblock In \emph{Proceedings of the Seventeenth Conference on Uncertainty in
  Artificial Intelligence}, UAI'01, 411–420. San Francisco, CA, USA: Morgan
  Kaufmann Publishers Inc.
\newblock ISBN 1558608001.

\bibitem[{Pearl(2012)}]{Pearl_2012}
Pearl, J. 2012.
\newblock The Causal Mediation Formula—A Guide to the Assessment of Pathways
  and Mechanisms.
\newblock \emph{Prevention Science}, 13: 426--436.

\bibitem[{Robins and Greenland(1992)}]{Robins_1992}
Robins, J.~M.; and Greenland, S. 1992.
\newblock Identifiability and Exchangeability for Direct and Indirect Effects.
\newblock \emph{Epidemiology}, 3(2): 143--155.

\bibitem[{Rubenstein et~al.(2018)Rubenstein, Bongers, Sch{\"o}lkopf, and
  Mooij}]{Rubenstein_2018}
Rubenstein, P.~K.; Bongers, S.; Sch{\"o}lkopf, B.; and Mooij, J.~M. 2018.
\newblock From Deterministic {ODEs} to Dynamic Structural Causal Models.
\newblock In \emph{Proceedings of the 34th Conference on Uncertainty in
  Artificial Intelligence (UAI)}, 114--123.

\bibitem[{Spirtes(1993)}]{Spirtes_1993}
Spirtes, P. 1993.
\newblock {Directed cyclic graphs, conditional independence, and non-recursive
  linear structural equation models}.

\bibitem[{Spirtes, Glymour, and Scheines(2000)}]{Spirtes_2000}
Spirtes, P.; Glymour, C.; and Scheines, R. 2000.
\newblock \emph{Causation, Prediction, and Search}.
\newblock MIT press, 2nd edition.

\bibitem[{Spirtes et~al.(1998)Spirtes, Richardson, Meek, Scheines, and
  Glymour}]{Spirtes_1998}
Spirtes, P.; Richardson, T.; Meek, C.; Scheines, R.; and Glymour, C. 1998.
\newblock Using Path Diagrams as a Structural Equation Modeling Tool.
\newblock \emph{Sociological Methods \& Research}, 27(2): 182--225.

\bibitem[{Staplin et~al.(2016)Staplin, Herrington, Judge, Reith, Haynes,
  Landray, Baigent, and Emberson}]{Staplin_2016}
Staplin, N.; Herrington, W.; Judge, P.; Reith, C.; Haynes, R.; Landray, M.;
  Baigent, C.; and Emberson, J. 2016.
\newblock Use of Causal Diagrams to Inform the Design and Interpretation of
  Observational Studies.
\newblock \emph{C. j. of the A. S. of Nephrology}.

\bibitem[{Veilleux(1979)}]{Veilleux_1979}
Veilleux, B.~G. 1979.
\newblock An Analysis of the Predatory Interaction Between Paramecium and
  Didinium.
\newblock \emph{J. of A. Ecology}.

\bibitem[{Wright(1920)}]{Wright_1920}
Wright, S. 1920.
\newblock The Relative Importance of Heredity and Environment in Determining
  the Piebald Pattern of Guinea-Pigs.
\newblock \emph{Proceedings of the National Academy of Sciences of the United
  States of America}, 6(6): 320--332.

\bibitem[{Wright(1921)}]{Wright_1921}
Wright, S. 1921.
\newblock Correlation and causation.
\newblock \emph{Journal of agricultural research}, 20(7): 557--585.

\bibitem[{Zhou et~al.(2021)Zhou, Liu, Hung, Haycock, Aldrich, Andrew, Arnold,
  Bickeböller, Bojesen, Brennan, Brunnström, Melander, Caporaso, Landi, Chen,
  Goodman, Christiani, Cox, Field, Johansson, Kiemeney, Lam, Lazarus,
  Le~Marchand, Rennert, Risch, Schabath, Shete, Tardón, Zienolddiny, Shen, and
  Amos}]{Zhou_2021}
Zhou, W.; Liu, G.; Hung, R.~J.; Haycock, P.~C.; Aldrich, M.~C.; Andrew, A.~S.;
  Arnold, S.~M.; Bickeböller, H.; Bojesen, S.~E.; Brennan, P.; Brunnström,
  H.; Melander, O.; Caporaso, N.~E.; Landi, M.~T.; Chen, C.; Goodman, G.~E.;
  Christiani, D.~C.; Cox, A.; Field, J.~K.; Johansson, M.; Kiemeney, L.~A.;
  Lam, S.; Lazarus, P.; Le~Marchand, L.; Rennert, G.; Risch, A.; Schabath,
  M.~B.; Shete, S.~S.; Tardón, A.; Zienolddiny, S.; Shen, H.; and Amos, C.~I.
  2021.
\newblock Causal relationships between body mass index, smoking and lung
  cancer: Univariable and multivariable Mendelian randomization.
\newblock \emph{International Journal of Cancer}, 148(5): 1077--1086.

\end{thebibliography}

\end{document}